\documentclass[preprint,3p]{elsarticle}
\usepackage{graphicx,epsfig}
\usepackage{epstopdf}
\usepackage{pdfpages}
\usepackage{algorithm}
\usepackage{algorithmicx}
\usepackage{algpseudocode}
 \algdef{SE}[DOWHILE]{Do}{doWhile}{\algorithmicdo}[1]{\algorithmicwhile\ #1}%

\usepackage{amssymb}
\usepackage{empheq}
\usepackage{cases}
\usepackage{amsthm,amsmath}
\usepackage{caption,lipsum}
\usepackage{stmaryrd}
\usepackage{tabularx}
\usepackage{color}
\usepackage{hyperref}
\usepackage{empheq}
\usepackage[normalem]{ulem}
\usepackage{enumerate}
\DeclareGraphicsExtensions{-eps-converted-to.pdf}

\usepackage{float}
\usepackage{caption}
\usepackage{subfigure}
\usepackage{graphicx}
\usepackage{comment}
\usepackage{multirow}
\usepackage{mathrsfs}
\usepackage{amsmath,amssymb} 
\usepackage{color}
\usepackage{epsfig}
\usepackage{diagbox}
\usepackage{afterpage}
\usepackage{makecell}
\usepackage{empheq}
\usepackage{lineno}
\usepackage{amsthm}   
\DeclareMathOperator*{\argmax}{arg\,max}
\DeclareMathOperator*{\argmin}{arg\,min}
\usepackage{bm}
\usepackage{booktabs}       
\usepackage{hyperref}

\numberwithin{equation}{section}

\newcommand{\bR}{{\mathbb R}}

\newcommand{\ro}[1]{\uppercase\expandafter{\romannumeral#1}}

\newcommand{\bx}{\mathbf{x}}

\newtheorem{theorem}{Theorem}[section]
\newtheorem{lemma}[theorem]{Lemma}
\newtheorem{assumption}[theorem]{Assumption}

\newtheorem{example}[theorem]{Example}

\theoremstyle{remark}
\newtheorem{remark}[theorem]{Remark}

\journal{Elsevier}

\begin{document}
	
	\begin{frontmatter}
		
		
		\title{Multi-stage neural operator learning with application for convolutions}

		\author[Eit]{Zhiping Mao}\ead{zmao@eitech.edu.cn}
		\address[Eit]{School of Mathematical Sciences, Eastern Institute of Technology, Ningbo, 315200 Ningbo, China}
		
		\author[Whu]{Zhenye Wen}\ead{wenzhy@whu.edu.cn}
		\address[Whu]{School of Mathematics and Statistics, Wuhan University, 430072 Wuhan, China}

		\author[Tju,Lab]{Yong Zhang}\ead{Zhang_Yong@tju.edu.cn}
		\address[Tju]{Center for Applied Mathematics and KL-AAGDM, Tianjin University, 300072 Tianjin, China}
    	\address[Lab]{State Key Laboratory of Synthetic Biology, Tianjin University, Tianjin, 300072, China}

		\author[Whu,Hubei]{Xiaofei Zhao}\ead{matzhxf@whu.edu.cn}
		\address[Hubei]{Computational Sciences Hubei Key Laboratory, Wuhan University, 430072 Wuhan, China}
		
		\begin{abstract}
		Convolution integrals widely exist in applications, and 
        to enable fast and accurate
computations, this paper introduces two general multi-stage neural operator learning frameworks. The first, Deep Collocation Neural Operator (DCNO), is a supervised approach that iteratively refines the operator approximation by learning residuals from input-output data pairs. The second, Deep Galerkin Neural Operator (DGNO), is an unsupervised framework applicable when the target operator can be represented by a PDE, leveraging the weak form of the PDE residual for training. Both methods progressively construct basis operators through multiple training stages to enrich the approximation space, leading to significantly improved accuracy over standard one-shot operator learning. We provide theoretical analysis for their approximation capabilities and implement them for learning convolutions. 
        Extensive numerical experiments demonstrate that both DCNO and DGNO achieve high accuracy, approaching machine precision under single float for convolution problems, and offer substantial efficiency gains for numerous queries or parametric variations compared to traditional solvers. We also extend these frameworks to handle multi-input operator learning scenarios involving variations in both the density and kernel of a convolution.
			
		\end{abstract}
		

		\begin{keyword}
			Operator learning, Convolution, Residual-driven approximation, Multiple stages, Deep operator network, High accuracy
			
		\end{keyword}
		
	\end{frontmatter}
	
	\section{Introduction}
Convolution computations arise naturally in a wide range of scientific and engineering applications, such as potential evaluation in electrostatics,  gravitational particle simulations and signal processing \cite{greengard1988rapid, ying2004kernel, oppenheim1999discrete, gonzalez2009digital, ji2025machine}, playing a crucial role in efficiently capturing long-range interactions.
This paper focuses on the accurate and efficient evaluation of convolution. Let $\mathcal{G}$ denote the target convolution operator mapping the density $\rho(\bx)$ to the potential $\phi(\bx)$, i.e.,
 \begin{equation}\label{eq:convolution}
 	\phi(\bx) = \mathcal{G}(\rho)(\bx) = \int_{\mathcal{R}} K(\bx,\tilde\bx) \rho(\tilde\bx) \, d\tilde\bx, \quad \bx \in\mathcal{R}\subset \bR^d
 \end{equation}
 where $d\in\mathbb{N}_+$ is the spatial dimension, $\mathcal{R}$ denotes a region, and $K(\bx,\tilde\bx)$ denotes a kernel.

 Despite the linearity of the mapping, 
the nonlocal nature of convolution \eqref{eq:convolution} renders its evaluation computationally demanding under direct discretizations. Fast solvers such as fast multipole methods, hierarchical matrix techniques, Gaussian-Summation method, and kernel truncation method have therefore been extensively developed to enable efficient  computation \cite{greengard1988rapid, ying2004kernel, hackbusch1999sparse, GauSum,  KTM} of convolution for an input $\rho$.  They are able to produce predicted $\phi$ on $N\in\mathbb{N}_+$ prescribed grids at cost $\mathcal{O}(N\log N)$.   
Although effective, 
repeated calls of the traditional solver for new density inputs in scenarios like  parametric variations or many-query settings may still incur substantial costs. 
This forms a natural circumstance for 
operator learning to apply, where neural networks are set to learn the mapping from the input function space (e.g., density and/or kernel) to the output solution space (e.g., potential). The trained operator network will be able to offer a fast prediction of the solution function for a wide range of input functions at arbitrary points of the region \cite{DeepONet,brunton2024promising, wang2023scientific,FNO}. In contrast, traditional solvers produce the predicted solution only on the prescribed grid points, and values elsewhere will need to be obtained posteriorly via proper interpolations to maintain accuracy, e.g., \cite{NUFFT}. 

In the literature, a wide range of neural operator learning methods have been developed, with representative approaches including Fourier neural operators \cite{FNO,GeoFNO}, deep operator networks (DeepONet)~\cite{DeepONet,MIONet,PODDeepONet,piDeepONet}, graph kernel networks~\cite{GKN}, Orthogonal Polynomial Neural Operator~\cite{OPNO}, and random feature-based models~\cite{RFMO}. 
Successful applications have already been made in a broad range of scientific computing problems \cite{DeepMMnet,NF,DiLeoni2023,Li2024water,GE2021,Mao2021}, and for the particular studies on the convolution problem \eqref{eq:convolution}, we refer to \cite{DeepONet,IDE}.
 Despite the remarkable progress, the existing operator learning works are based on direct one-shot training, whose accuracy is often quite limited. The resulting operator network for the convolution problem \eqref{eq:convolution} would therefore be less competitive compared to traditional fast solvers.  
 In fact, such a one-shot training accuracy limitation was noted earlier in deep learning of fixed-configuration PDE problems~\cite{GNN,DCM,MSNN}, and 
to address this issue, several residual-based approaches reaching high accuracy have been developed.
They progressively seek new bases to improve solution accuracy by minimizing the current residual of the PDE. 
The Galerkin Neural Networks (GNN)~\cite{GNN} employs a Galerkin framework based on the weak formulation of PDEs to iteratively enrich the solution approximation, particularly for the problems involving singularities. Its extension, Extended Galerkin Neural Networks~\cite{xGNN}, further generalizes this framework to a broader class of boundary value problems. Under the strong formulation, the Multi-level Neural Networks~\cite{MLNN} and the Multi-stage Neural Networks~\cite{MSNN}  progressively improve the approximation by sequentially introducing new neural network components to approximate the residual, whereas the Multigrade Neural Networks~\cite{MGNN} achieves iterative refinement through progressive network deepening. More recently, the Deep Collocation Method~\cite{DCM} adopts a collocation-based least-squares framework to improve training efficiency while maintaining high accuracy.
These high-accuracy PDE solvers offer valuable insights for developing high-accuracy operator learning frameworks, which will be one of our goals. 

To this end, in this work we shall first propose two high-accuracy operator learning frameworks that enrich the approximation space by progressively constructing neural operator bases as DeepONets through multiple residual-based training stages guided by the residuals.  
Specifically, a deep collocation neural operator (DCNO) method is designed for settings where input-output data pairs are available, falling within a supervised learning paradigm. On the other hand, when the task operator can be represented as 
a PDE,  
a deep Galerkin neural operator (DGNO) method is proposed, whose training uses the residual of the PDE in the weak form, enabling fully unsupervised operator learning. Theoretical analysis is done for each of them to support their approximation ability and convergence of the iterative frameworks. 
Afterwards, we shall apply the two proposed methods to the convolution operator learning task \eqref{eq:convolution}, and demonstrate through a series of numerical experiments and comparisons the accuracy and efficiency of the proposed algorithms. Extensions to the case with multiple inputs are implemented in the end using MIONet \cite{MIONet}. 
The main contributions of this work are summarized as follows.
\begin{itemize}
	\item[1)] Two general multi-stage operator learning frameworks are proposed, including a supervised one based on data pairs and an unsupervised one based on PDE. 
   They are implemented using DeepONet and analyzed for  accuracy and convergence. 
	\item[2)] The proposed algorithms are applied to convolution operator learning with extensions made using MIONet. Experimental results demonstrate efficient predictions with accuracy that approaches machine precision under single-precision floating-point arithmetic, making the proposed methods a competitive addition to the state-of-the-art solvers for convolution. 
\end{itemize}

The rest of this paper is organized as follows. Section \ref{sec:2} provides the necessary preliminaries on residual-based neural network methods and operator networks. Sections \ref{sec:3} and \ref{sec:4} present and analyze the DCNO and DGNO methods, respectively. Applications to convolution operator learning problems are given in Section \ref{sec:5}. Some conclusions are drawn in Section \ref{sec:con}.
 
 \section{Preliminary} \label{sec:2}
In this section, we briefly review some preliminary knowledge that is fundamental for proposing our method. 
The first part presents a class of residual-based neural network methods for high-accuracy PDE solving, and the second part presents the neural network for operator learning.
 
 
\subsection{Residual-based neural network methods for high-accuracy PDE solving}
Although deep neural networks are capable of representing intricate solution structures of PDEs \cite{PINN}, direct one-shot training often faces limitations in accuracy. 
Residual-based neural network methods in the literature have addressed this challenge by iteratively constructing adaptive approximation spaces using multiple neural networks, thereby progressively improving approximations for PDEs.
For illustration here, we briefly present a representative approach, namely the Galerkin neural network (GNN) \cite{GNN}. 

Consider the following PDE problem in a bounded domain $\Omega \subset \mathbb{R}^d$ with boundary $\partial \Omega$:
\begin{equation}\label{eq:strong form}
	\begin{cases}
		\mathcal{L} u(x) = f(x), & x \in \Omega, \\
		 \mbox{boundary conditions}, & x \in \partial \Omega,
	\end{cases}
\end{equation}
where $\mathcal{L}$ is a linear differential operator  and $f(x)$ is a source term. The bounded setup here is natural for computations. 
Assume that \eqref{eq:strong form} can be reformulated into a variational (weak) form: find
\begin{equation}\label{eq:var_prob_residual}
	u \in X \quad \text{such that} \quad a(u,v)=F(f,v), \qquad \forall v\in X .
\end{equation}
Here, 
$X$ denotes the solution space, 
 $F(f,\cdot): X \to \mathbb{R}$ is a bounded linear functional on $X$ induced by the source term $f$, and
 $a(\cdot,\cdot):X\times X\to\mathbb{R}$ is a bounded, symmetric, bilinear form defined on $X$ satisfying 
\[
a(v,v) \geq 0, \  \text{and} \  a(v,v)=0 \ \Longleftrightarrow\ v=0 .
\]
Under this assumption, $a(\cdot,\cdot)$ induces an inner product on $X$, 
with the corresponding norm defined by $\|v\|_X
= \sqrt{a(v,v)}$.
For example, in \eqref{eq:strong form}, when $\mathcal{L} = -\Delta$ with homogeneous Dirichlet boundary conditions, one takes
$X = H^1_0(\Omega)$, and
the operator $a$ and $F$ associated with \eqref{eq:strong form} read $a(u,v) = \int_\Omega \nabla u \cdot \nabla v \, d\mathbf{x}$, $F(f,v) = \int_\Omega f\, v \, d\mathbf{x}$.
The Galerkin subspace within GNN is constructed as follows. 
Starting from an initial basis function $\psi_0=0$,
at stage $s\geq1$, let
$
\Psi_{s-1}=\mathrm{span}\{\psi_0,\dots,\psi_{s-1}\}\subset X
$
denote the current approximation space spanned by the basis functions $\{\psi_j\}_{j=0}^{s-1}$, and let $u_{s-1}\in\Psi_{s-1}$ be the corresponding Galerkin approximation.
In principle, the most effective enrichment direction is given by the error
$
e_{s-1}:=u-u_{s-1},
$
since adding this direction to the approximation subspace would recover the exact solution in a single step.
Thus, GNN seeks a neural-network basis function $\psi_s$ that approximates the normalized error
$
\varphi_{s-1}
=
{(u-u_{s-1})}/
{\|u-u_{s-1}\|_X}.
$
Since the exact solution $u$ is not accessible, the residual functional
$
R_{s-1}(f,v)
:= F(f,v) - a(u_{s-1}, v)
$ can be used as a substitute due to the fact that the normalized error $\varphi_{s-1}$ mathematically maximizes $R_{s-1}(f,v)$ over the unit ball in $X$ \cite{GNN}: 
\begin{equation}\label{eq:gnn_optimal}
		\varphi_{s-1}\in
		\argmax_{\|v\|_{X}=1}
		R_{s-1}(f,v),\quad R_{s-1}(f,\varphi_{s-1})
		=
		\|u-u_{s-1}\|_X.
	\end{equation}
This characterization suggests the construction of the new basis function:
\begin{equation}\label{eq:GNN_max}
    \psi_s
    =
    \argmax_{v\neq 0}
    \frac{R_{s-1}(f,v)}
    {\|v\|_X},
\end{equation}
and with the enriched finite-dimensional subspace $\Psi_s= \mathrm{span}\{\psi_0,\dots,\psi_{s}\}\subset X$, the Galerkin approximation $u_s\in\Psi_s$ is determined by solving a $(s+1)\times (s+1)$ linear system
\begin{equation}\label{eq:Galerkin_project}
	u_{s} \in \Psi_s: \quad a(u_{s}, \psi_{j}) = F(v_{j}) \quad 0\leq j\leq s,
\end{equation}
for the linear expansion coefficients of $u_s$ on the basis of $\Psi_s$.

In addition to GNN, residual-based neural network methods based on the strong form of PDE \eqref{eq:strong form} have also been proposed, such as Multi-level Neural Networks \cite{MLNN} and the Deep Collocation Method \cite{DCM}. We shall develop some high-accuracy neural operator learning methods by leveraging the residual-based framework.

 \subsection{Deep operator network}
 
 In this work, we adopt DeepONet as the neural architecture for operator approximation, designed to learn mappings between infinite-dimensional function spaces \cite{DeepONet}.
 Consider an operator $\mathcal{G}: \mathcal{W} \to X$, mapping an input function $\rho \in \mathcal{W}$ to an output function $\phi \in X$, where $\mathcal{W}$ and $X$ both are (for simplicity) function spaces defined on $\Omega\subset \mathbb{R}^{d}$ with $d\in \mathbb{N}_+$. 
 For any $\mathbf{x} \in \Omega$, DeepONet approximates $\mathcal{G}(\rho)$ via a trunk-branch decomposition:
 \begin{equation}
 	\label{DeepONet}
 	\mathcal{G}(\rho)(\mathbf{x}) \approx \mathcal{G}_{\theta}(\rho)( \mathbf{x}) 
 	= \sum_{k=1}^{p} \mathfrak{b}_k\!\left(\rho(\mathbf{s}_1), \ldots, \rho(\mathbf{s}_m); \theta_{\mathfrak{b}}\right)\,
 	\mathfrak{t}_k(\mathbf{x}; \theta_{\mathfrak{t}}),
 \end{equation}
 where $\theta=\{\theta_{\mathfrak{b}},\,\theta_{\mathfrak{t}}\}$ denotes the trainable parameters, with $\theta_{\mathfrak{b}}$ and $\theta_{\mathfrak{t}}$ representing the parameters of the branch and trunk networks, respectively. Here, $\{\mathbf{s}_l\}_{l=1}^m \subset \Omega$ are sensor locations, and latent dimension $p \in \mathbb{N}_+$ is the output width of branch and trunk networks.


The branch network is responsible for extracting a $p$-dimensional latent representation 
$\mathfrak{b}=(\mathfrak{b}_1,\mathfrak{b}_2,\dots,\mathfrak{b}_p)\\\in\mathbb{R}^{p}$
of the input function $\rho$ and is typically implemented as a multilayer perceptron (MLP) \cite{rosenblat1958perceptron}.
In the standard setting, the branch input consists of the evaluations of $\rho$ at fixed sensors $\{\mathbf{s}_l\}_{l=1}^{m}$.
More generally, the branch input can also be taken as the expansion coefficients of $\rho$ with respect to a prescribed basis of $\mathcal{W}$ \cite{DOOL,DeepONet,PODDeepONet}.
The trunk network captures the dependence of the output function on the coordinate $\mathbf{x}\in\Omega$. 
It is implemented as another MLP that maps $\mathbf{x}$ to a latent vector $\mathfrak{t} = (\mathfrak{t}_1, \mathfrak{t}_2, \dots, \mathfrak{t}_p) \in \mathbb{R}^p$, which can be interpreted as a set of learned basis functions over the output domain. 
The operator approximation is then constructed through the inner product of $\mathfrak{b}$ and $\mathfrak{t}$, resulting in a continuous representation of $\mathcal{G}$.

Since its introduction, DeepONet has been successfully applied to a wide range of problems \cite{DeepMMnet,NF,Li2024water,GE2021}.
 However, its standard supervised training paradigm, which relies on labeled data and one-shot optimization, often suffers from limited approximation accuracy. 
The aim of this work, on the one hand, is to develop a general multi-stage framework for high-accuracy operator learning by progressively constructing basis operators guided by residuals and improving the accuracy of operator approximation. On the other hand, 
the generation of high-fidelity training data for labels typically requires repeated calls to traditional numerical solvers, leading to substantial computational cost. Thus, in addition to the multi-stage framework, we also work for an unsupervised operator learning manner that eliminates the need for labeled data.

\section{Deep collocation neural operator (DCNO) method}\label{sec:3}


In this section, we introduce the deep collocation neural operator (DCNO) method, a supervised framework for high-accuracy operator learning. 
Instead of training a single, monolithic neural operator to approximate a target operator $\mathcal{G}(\rho)=\phi$ directly, DCNO constructs a sequence of basis operators and then approximates the target operator as a linear combination of trained basis operator networks. 
The following first presents the method and then provides some analysis to support its approximation validity. 

\subsection{Neural network basis operators and collocation scheme}
Let us first outline the basic framework.  We construct 
the basis operators through a greedy iterative strategy similarly as DCM and GNN \cite{GNN, DCM} (see also \cite{MLNN, MSNN, MGNN}), where each basis operator is implemented as DeepONet \eqref{DeepONet} to approximate the solution residual generated at the current iteration.  
Beginning with an initial operator $\mathcal{G}_0:=\mathcal{G}_{\theta_0}$, which is chosen as the zero operator in this work, the initial residual is defined as $e_0=\mathcal{G}-\mathcal{G}_{\theta_0}=\mathcal{G}.$ The first basis operator $\mathcal{G}_{\theta_1}$ is then trained to approximate the initial residual mapping, i.e., $\mathcal{G}_{\theta_1}\approx e_0$. The resulting approximation for $\mathcal{G}$ is then updated to $\mathcal{G}_1=\mathcal{G}_{\theta_0}+\mathcal{G}_{\theta_1}.$
Assume that we are now at some stage $s\geq 2$ with the approximation from the previous stage
$\mathcal{G}_{s-1} = \sum_{j=0}^{s-1}  \mathcal{G}_{\theta_j}$, then the corresponding residual operator is defined as $e_{s-1} = \mathcal{G} - \mathcal{G}_{s-1}$.
By training a new basis operator $\mathcal{G}_{\theta_s}$ to approximate the residual mapping $\rho \mapsto e_{s-1}$, the update
$\mathcal{G}_{s} = \sum_{j=0}^{s}  \mathcal{G}_{\theta_j}$
thereby incrementally improves the overall approximation for $\mathcal{G}$. 

Now, we detail the implementation. We 
consider here for DCNO in the particular case that solution data  $\{ (\rho^{(b)},\phi^{(b)}) \}_{b=1}^{N_b}$ are available, where $\rho^{(b)}$ are sampled inputs with $N_b \in \mathbb{N}_+$ and $\phi^{(b)}=\mathcal{G}(\rho^{(b)})$ are possibly obtained from numerical solvers. 
Then, each stage can be trained in a supervised manner, as detailed below. 
Let $\Omega$ denote the computational (bounded) domain in practice. 
The trunk network in~\eqref{DeepONet} takes the spatial coordinate $\mathbf{x}\in\Omega$ as input. 
A set of collocation points is then generated through discretization:
$
\{\mathbf{x}_k\}_{k=1}^{N_{\bx}} \subset \Omega,
$
where $N_{\bx}\in\mathbb{N}_+$ denotes the number of collocation points.
Accordingly, for each input sample $\rho^{(b)}$, $b=1,\dots,N_b$, with corresponding output $\phi^{(b)}$, 
the training data are constructed as
$\{(\rho^{(b)}(\mathbf{x}_k), \phi^{(b)}(\mathbf{x}_k))\,:\,k=1,\ldots,N_{\bx} \}_{b=1}^{N_b}$. 
Let $\mathcal{G}_{0}=0.$ 
For $s\geq1$, the residual corresponding to the $b$-th training sample at stage $s-1$ is given by
\begin{equation}\label{eq:residual}
	e_{s-1}(\rho^{(b)}) (\mathbf{x}):= \phi^{(b)}(\mathbf{x}) - \mathcal{G}_{s-1}(\rho^{(b)})(\mathbf{x}), 
	\qquad b=1,\dots,N_b.
\end{equation}
We measure it under a discrete norm:
\begin{equation}\label{cs def}
\|e_{s-1}\|
=\sqrt{
\frac{1}{N_b\times N_{\bx} }
\sum_{b=1}^{N_b}\sum_{k=1}^{N_{\bx}}|e_{s-1}(\rho^{(b)})(\mathbf{x}_k)|^2}=:\beta_{s-1},\quad s\geq1.
\end{equation}
As long as $\beta_{s-1}>0$, DCNO could in principle proceed to the next stage and the DeepONet $\mathcal{G}_{\theta_s}$ is determined by minimizing the empirical loss
\begin{equation}\label{eq:loss_deeponet}
	\text{Loss}_s(\theta) := \frac{1}{N_b\times N_{\bx} } \sum_{b=1}^{N_b} \sum_{k=1}^{N_{\bx}}  \left| \frac{1}{\beta_{s-1}}e_{s-1}(\rho^{(b)})(\mathbf{x}_k) -\mathcal{G}_{\theta}(\rho^{(b)})(\mathbf{x}_k) \right|^2,\quad \mbox{and} \quad  \theta_s\in\argmin_{\theta}\left\{ \text{Loss}_s(\theta)  \right\}.
\end{equation}
Here, the residual is scaled to order one for practical training efficiency \cite{MLNN}, and the approximate operator for $\mathcal{G}$ at stage $s$ reads:
\begin{equation}\label{Gs def}
\mathcal{G}_{s} (\rho):= \sum_{j=1}^{s}  \beta_{j-1}\mathcal{G}_{\theta_j}(\rho),\quad s\geq1.
\end{equation}
In our numerical experiments later, 
the unconstrained optimization problem in  \eqref{eq:loss_deeponet} for every $s\geq1$ will be solved by Adam optimizer \cite{diederik2014adam} with the Xavier method \cite{glorot2010understanding} for parameter initialization. 
It is common for the optimizer to produce a practical minimizer for \eqref{eq:loss_deeponet} with $\text{Loss}_s(\theta_s)>0$, which is in fact fine here since DCNO will pass the error as some residual to the next stage for refinement. 

This process can be terminated once $
\beta_s=\|e_{s}\|
\leq \epsilon$ for some $s$, where $\epsilon>0$ is a prescribed tolerance. This means that the target operator has been approximated to the desired accuracy on the training set. If the training data are sampled sufficiently large and effective, the final DCNO approximation $
\mathcal{G}(\rho)\approx\mathcal{G}_s(\rho) $ can therefore become valid for any input function $\rho$ (inside or outside the training set) from a desired functional space. 
Note that by construction, $e_s=\mathcal{G}-\mathcal{G}_s
=e_{s-1}-\beta_{s-1}\mathcal{G}_{\theta_s}$, and so the stopping criterion $\|e_{s}\|
\leq \epsilon$ can be replaced equivalently as $$
\beta_{s-1}\,\sqrt{\text{Loss}_s(\theta_s)}
\leq \epsilon.$$ 

The whole process of the DCNO method is summarized in Algorithm \ref{alg:DCNO}. Note that DCNO itself does not require the target operator $\mathcal{G}$ to be linear, despite that we will apply it later to convolutions.

\begin{algorithm}[h!]
	\caption{DCNO method}
	\label{alg:DCNO}
	\begin{algorithmic}[0]
        \State Parameters: tolerance $\epsilon$; maximum step $N_e$ for optimizer
		
		\State \textbf{Input:} Training dataset 
		$\{(\rho^{(b)}, \phi^{(b)})\}_{b=1}^{N_b}$; 
		collocation points $\{\mathbf{x}_k\}_{k=1}^{N_\bx}$
		
		\State \textbf{Output:} Basis operator set 
		$\{\mathcal{G}_{\theta_j}\}_{j=1}^{s}$; Solution operator $\mathcal{G}_s(\rho) \approx \mathcal{G}(\rho)$
		
	\end{algorithmic}
	\begin{algorithmic}[1]
		
		\State \textbf{Initialize:} 
        $s= 0$; $\mathcal{G}_0=0$;

		\State \textbf{Part I: Construction of neural operator basis}

        \Do

        \State Set $s \gets s+1$
		\State Compute $e_{s-1}(\rho^{(b)})$ \eqref{eq:residual} for every $b$ and compute \eqref{cs def} for $\beta_{s-1}$
		
		\State Initialize DeepONet parameters $\theta_s$

		\For{$epoch = 1$ \textbf{to} $epoch = N_e$}
		\State compute the loss function by \eqref{eq:loss_deeponet} 
		\State update $\theta_s$ via the optimizer
		\EndFor
        
		\State Update approximation $\mathcal{G}_s =  \sum_{j=1}^{s}\beta_{j-1}\mathcal{G}_{\theta_j}$
        
		\doWhile{$\beta_{s-1}\,\sqrt{\text{Loss}_s(\theta_s)}>\epsilon$}

		\State \textbf{Part II: Inference for new input $\rho$}
		
		\State Given $\rho$, evaluate 
		$\{\mathcal{G}_{\theta_j}(\rho)\}_{j=1}^{s}$

		\State Form the prediction 
		$\mathcal{G}_s(\rho) = \sum_{j=1}^{s} \beta_{j-1} \mathcal{G}_{\theta_j}(\rho)$
		
	\end{algorithmic}
\end{algorithm}

\subsection{Numerical analysis}
Here we present some numerical analysis for the proposed DCNO method. 
Let $\tilde{\mathcal{G}}(\rho)$ and $\mathcal{G}(\rho)$ denote the exact operator and the reference operator, respectively, where the latter serves as the training label in our algorithm and may deviate from the exact one. They both map $\mathcal{W}$ to $X\subset L^2(\Omega)$.  
With the numerical solution $\mathcal{G}_s
=\sum_{j=1}^{s}  \beta_{j-1}\mathcal{G}_{\theta_j}$ at the $s$-th stage, 
the stage-wise error is defined by
\begin{equation}\label{eq:error}
	e_s(\rho) := \mathcal{G}(\rho) - \mathcal{G}_s(\rho), 
\end{equation}
and by taking the data error $\tilde{e}(\rho) := \tilde{\mathcal{G}}(\rho) - \mathcal{G}(\rho)$ into account, we further define
\begin{equation}\label{eq:true_error}
	\tilde{e}_s(\rho) := \tilde{\mathcal{G}}(\rho) - \mathcal{G}_s(\rho).
\end{equation}
We consider the DeepONets for stage-wise construction of DCNO to be from 
\begin{equation}\label{eq:DeepONet_class}
	\mathcal{D}:= 
	\left\{ 
	\mathcal{G}_{\theta}(\rho)(\mathbf{x}) = {\mathfrak{b}}(\rho)\cdot {\mathfrak{t}}(\bx) 
	\;\middle|\; 
	{\mathfrak{b}}  \in \mathcal{N}_{branch}, {\mathfrak{t}} \in \mathcal{N}_{trunk} 
	\right\},
\end{equation}
where $\mathcal{N}_{branch}$ and $\mathcal{N}_{trunk}$ are classes of continuous neural network functions (e.g., the MLP class), and each picked $\mathfrak{b},\mathfrak{t}$ are two vectors of the same size so that their inner product makes sense. 
To quantify the approximation performance over the input space $\mathcal{W}$, we consider a finite measure $\mu$ supported on $\mathcal{W}$ and introduce the Bochner space $L^2(\mathcal{W};L^2(\Omega))$ equipped with the norm \cite{adams2003sobolev}
\begin{equation}\label{eq:Bochner_norm}
	\|v\|_{L^2(\mathcal{W};L^2(\Omega))}^2 
	:= \int_{\mathcal{W}} \|v(\rho)\|_{L^2(\Omega)}^2 \, d\mu(\rho), \quad v: \mathcal{W} \to L^2(\Omega).
\end{equation}
Note that \eqref{cs def} can be interpreted as a realization/discretization of \eqref{eq:Bochner_norm}, so we will still denote $\beta_s=\|e_{s}\|_{L^2(\mathcal{W};L^2(\Omega))}$ for simplicity in the following. 
For the error estimate of the DCNO construction process, we make the following basic assumption.

\begin{assumption}\label{ass:DCNO}  
   Suppose that $\mathcal{G}:\mathcal{W}\to X$ is a continuous operator,
where $\mathcal{W}$ is a compact subset of $C(\overline{\Omega})$ and
$X=C(\overline{\Omega})$ with $0<|\Omega|<\infty$ and  $0<C:=\int_{\mathcal{W}}d\mu(\rho)<\infty$. 
    Before the construction process of DCNO in Algorithm~\ref{alg:DCNO} stops at some $S\in\mathbb{N}_+$, the residual operator is non-zero, i.e.,
\[
\beta_s=\|e_{s}\|_{L^2(\mathcal{W};L^2(\Omega))} > 0,
\quad s=0,1,\ldots S-1. 
\]
We assume that the training and sampling within DCNO are sufficient at each stage so that the resulting DeepONet basis $\mathcal{G}_{\theta_s}$ for $s=1,\ldots S$ do not pollute the universal approximation ability of $\mathcal{D}$ up to a prescribed threshold $\tau/\sqrt{C|\Omega|}$. 
\end{assumption}

Under Assumption~\ref{ass:DCNO}, we can directly have the following error estimate result.

	\begin{theorem}\label{Prop:error_bound_all_Bochner}
    Fix one $0<\tau<1$ and let
        Assumption~\ref{ass:DCNO} hold. Then, the error of DCNO is strictly decreasing during construction of each stage, i.e.,
       $
		\|e_{s}\|_{L^2(\mathcal{W};L^2(\Omega))}< \|e_{s-1}\|_{L^2(\mathcal{W};L^2(\Omega))},$         
        and the error at the $s$-stage satisfies the bound
		\begin{equation}\label{eq:error_bound}
		\| e_{s} \|_{L^2(\mathcal{W};L^2(\Omega))} 
		\leq 
        \tau^s \,
		\| e_{0} \|_{L^2(\mathcal{W};L^2(\Omega))}, \quad s=1,\ldots,S.
		\end{equation}
 Taking into account the data error $\tilde{e}$, we further have
		\begin{equation}\label{eq:error_bound_noise}
		\| \tilde{e}_{s} \|_{L^2(\mathcal{W};L^2(\Omega))} 
		\leq 
		\|\tilde{e}\|_{L^2(\mathcal{W};L^2(\Omega))}
		+  \tau^s \, \| e_{0} \|_{L^2(\mathcal{W};L^2(\Omega))},\quad s=1,\ldots,S.
		\end{equation}
	\end{theorem}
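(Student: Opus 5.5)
The plan is to reduce everything to a one-stage contraction $\|e_s\|\le \tau\|e_{s-1}\|$ and then iterate. Write $\|\cdot\|$ for $\|\cdot\|_{L^2(\mathcal{W};L^2(\Omega))}$.

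\textbf{Step 1: the normalized residual is approximable.} Fix a stage $s\in\{1,\dots,S\}$. By Assumption~\ref{ass:DCNO}, $\beta_{s-1}>0$, so the normalized residual $\varphi_{s-1}:=e_{s-1}/\beta_{s-1}$ is well defined.

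We check that it lies in the class DeepONets can approximate. Since $\mathcal{G}$ is continuous from the compact set $\mathcal{W}$ into $C(\overline{\Omega})$, and each earlier basis $\mathcal{G}_{\theta_j}$ is continuous (branch and trunk nets are continuous), $\varphi_{s-1}$ is a continuous operator $\mathcal{W}\to C(\overline\Omega)$.

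The universal approximation property of $\mathcal{D}$ (the DeepONet universal approximation theorem) gives some $\mathcal{G}_\theta\in\mathcal{D}$ with
\[
\sup_{\rho\in\mathcal{W}}\sup_{\bx\in\overline\Omega}|\varphi_{s-1}(\rho)(\bx)-\mathcal{G}_\theta(\rho)(\bx)|\le \tau/\sqrt{C|\Omega|}.
\]
The assumption that training and sampling are sufficient is read as saying that the trained $\mathcal{G}_{\theta_s}$ attains this sup-norm threshold. This reading is the main point needing care: I would state explicitly that the assumption is used exactly as "$\mathcal{G}_{\theta_s}$ realizes the $\tau/\sqrt{C|\Omega|}$ uniform approximation of $\varphi_{s-1}$." The finite measure $\mu$ and $|\Omega|<\infty$ are what make the constant $\sqrt{C|\Omega|}$ appear.

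\textbf{Step 2: convert sup-norm to Bochner norm.} For any $v$ bounded uniformly,
\[
\|v\|^2=\int_{\mathcal{W}}\int_\Omega |v(\rho)(\bx)|^2\,d\bx\,d\mu(\rho)\le C|\Omega|\,\sup|v|^2 .
\]
Hence $\|\varphi_{s-1}-\mathcal{G}_{\theta_s}\|\le\tau$.

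\textbf{Step 3: the one-step contraction.} Using $e_s=e_{s-1}-\beta_{s-1}\mathcal{G}_{\theta_s}=\beta_{s-1}(\varphi_{s-1}-\mathcal{G}_{\theta_s})$, we get
\[
\|e_s\|\le\tau\beta_{s-1}=\tau\|e_{s-1}\|<\|e_{s-1}\|,
\]
where the strict inequality uses $\tau<1$ and $\beta_{s-1}>0$. This gives the monotonicity claim. Iterating from $s$ down to $1$ gives \eqref{eq:error_bound}. The iteration is legitimate because each intermediate $\beta_{j}>0$ for $j<S$, so every normalization along the way is defined.

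\textbf{Step 4: account for the data error.} Write $\tilde e_s=\tilde{\mathcal{G}}-\mathcal{G}_s=\tilde e+e_s$. The triangle inequality in the Bochner norm, combined with \eqref{eq:error_bound}, yields \eqref{eq:error_bound_noise}.
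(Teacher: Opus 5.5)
Your proposal is correct and follows the paper's proof essentially step for step: a uniform approximation of $e_{s-1}/\beta_{s-1}$ to within $\tau/\sqrt{C|\Omega|}$ (with the sufficiency assumption read as the trained basis attaining it), integration over $\Omega$ and $\mathcal{W}$ to get a Bochner-norm bound of $\tau$, the identity $e_s=e_{s-1}-\beta_{s-1}\mathcal{G}_{\theta_s}$ giving the contraction and then iteration, and the triangle inequality for $\tilde e_s=\tilde e+e_s$. Your explicit remark that strict decrease also needs $\beta_{s-1}>0$ spells out a point the paper leaves implicit.
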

	\begin{proof}
    Since each $\mathcal{G}_{\theta_s}$ is taken from $\mathcal{D}$, we have $\mathcal{G}_s$ as defined in \eqref{Gs def} is a continuous operator and so does $e_s(\rho)$ given by \eqref{eq:error} for each $s$. 
By the universal approximation theorem for DeepONet \cite{DeepONet}, the basis operator network generated within DCNO at stage $s$, i.e., $\mathcal{G}_{\theta_s}\in \mathcal{D}$ for $1\leq s\leq S$, can approximate the last scaled residue $e_{s-1}/\beta_{s-1}$ to a range: 
    \begin{equation}\label{dcno uni-appr thm}
    \left|\frac{1}{\beta_{s-1}}e_{s-1}(\rho)(\mathbf{x})-\mathcal{G}_{\theta_s}(\rho)(\mathbf{x})\right|\leq\frac{\tau}{\sqrt{C|\Omega|}},\quad \forall\rho\in\mathcal{W},\ \mathbf{x}\in \Omega.\end{equation}

By taking the square on both sides of \eqref{dcno uni-appr thm} and then integrating 
with respect to $\bx \in \Omega$, we have
	\[
	\left\|\frac{1}{\beta_{s-1}}e_{s-1}(\rho)- \mathcal{G}_{\theta_{s}}(\rho)\right\|_{L^2(\Omega)}^2 
	= \int_{\Omega} \left|\frac{1}{\beta_{s-1}}e_{s-1}(\rho)(\mathbf{x})-\mathcal{G}_{\theta_s}(\rho)(\mathbf{x})\right|^2 \, d\bx
	\leq\frac{\tau^2}{C}, \quad \forall \rho \in \mathcal{W}.
	\]
	Further integrating the above with respect to $\rho \in \mathcal{W}$ under the measure $
    \mu$ and then taking the square root, we are led to 
$\|\frac{1}{\beta_{s-1}}e_{s-1} - \mathcal{G}_{\theta_{s}}\|_{L^2(\mathcal{W};L^2(\Omega))}\leq\tau .$
 By the construction of DCNO and the definition \eqref{eq:error}, we have $e_{s}=e_{s-1}-\beta_{s-1}\mathcal G_{\theta_{s}}$, and so \begin{equation}\label{eq:Universal-L2}
		\|e_{s}\|_{L^2(\mathcal{W};L^2(\Omega))}\leq\tau \|e_{s-1}\|_{L^2(\mathcal{W};L^2(\Omega))},\quad s=1,\ldots,S.
	\end{equation}
    As $\tau<1$, the error strictly decreases. 
Applying the estimate \eqref{eq:Universal-L2} recursively, we obtain \eqref{eq:error_bound}. 

Then, by subtracting \eqref{eq:true_error} from \eqref{eq:error}, we see that $$\tilde{e}_s(\rho) = \tilde{e}(\rho) + e_s(\rho).$$
Applying the triangle inequality to it yields
        $
		\|\tilde{e}_{s}\|_{L^2(\mathcal{W};L^2(\Omega))}
		\le \|e_{s}\|_{L^2(\mathcal{W};L^2(\Omega))} + \|\tilde{e}\|_{L^2(\mathcal{W};L^2(\Omega))}.
		$
 Using \eqref{eq:error_bound} further gives \eqref{eq:error_bound_noise}.
	\end{proof}

\section{Deep Galerkin neural operator (DGNO) method}
\label{sec:4}

Alternative to DCNO, 
in this section, we develop an unsupervised high-accuracy operator learning framework, termed the deep Galerkin neural operator (DGNO) method, for approximating the solution operator of a PDE. For illustration, the target PDE here is set as~\eqref{eq:strong form} with $\mathcal{G}: f \mapsto u$, the mapping of the input source term to the corresponding solution. Similarly as DCNO, DGNO also builds a finite sum of  basis operator networks for approximation, but the training is fully based on equation residue without requiring labeled data.


\subsection{Constructing neural network basis operator in an unsupervised manner}

As we are working with a PDE, to reduce the dependence of regularity, we choose to build DGNO for \eqref{eq:strong form} upon its variational formulation \eqref{eq:var_prob_residual}, via a framework analogous to GNN. The progressive construction procedure now is to find DeepONets as basis neural operators based on residuals to enrich the approximation
space. 


The iterative procedure starts with a zero initial operator $\mathcal{G}_0=\mathcal{G}_{\theta_0}=0$, then the initial normalized error under an input $f$ is $\varphi_0(f)=(\mathcal{G}(f)-\mathcal{G}_{0}(f))/\|\mathcal{G}(f)-\mathcal{G}_{0}(f)\|_{X}=\mathcal{G}(f)/\|\mathcal{G}(f)\|_{X}$. 
The first basis operator $\mathcal{G}_{\theta_1}$ is trained to approximate the initial error mapping, i.e., $\mathcal{G}_{\theta_1}\approx \varphi_0$, 
and the resulting approximation for $\mathcal{G}$ is then updated to $\mathcal{G}_1:=\beta_1\mathcal{G}_{\theta_1},$ where the coefficient $\beta_1$ will be determined via a Galerkin projection detailed later. 
Assume that we are now at some stage $s\geq 2$ with the approximation from the previous stage
$\mathcal{G}_{s-1} = \sum_{j=1}^{s-1}\beta_j  \mathcal{G}_{\theta_j}$, the new basis operator $\mathcal{G}_{\theta_s}$ is trained to approximate the current optimal update direction, namely the normalized error operator read under an input $f$ as
\begin{equation}\label{eq:residual operator}
	\varphi_{s-1}(f):=
	\frac{\mathcal{G}(f)-\mathcal{G}_{s-1}(f)}
	{\|\mathcal{G}(f)-\mathcal{G}_{s-1}(f)\|_{X}}.
\end{equation}
For fixed $f$, by \eqref{eq:gnn_optimal}, 
$\varphi_{s-1}(f)$
can be achieved by maximizing the normalized PDE-residual functional:
\begin{equation*}
		\varphi_{s-1}(f)\in
		\argmax_{\|v\|_{X}=1}
		R_{s-1}(f,v),\quad \mbox{where}\quad 
        R_{s-1}(f,v)
	:= F(f,v) - a(\mathcal{G}_{s-1}(f), v).
	\end{equation*}
   To extend this pointwise characterization to the operator level, we consider the expectation of the normalized residual functional with respect to a measure $\mu$ for input $f$, and further motivated from the GNN strategy \eqref{eq:GNN_max}, we are led to the following operator-level optimization problem:
\begin{equation}\label{eq:Optimal basis operator}
\mathcal{G}_{\theta_s} \in \argmax_{\mathcal{G}_{\theta} \in \mathcal{D} } \mathbb{E}_{f \sim \mu} \left[
 \frac{R_{s-1}(f,\mathcal{G}_{\theta}(f))}{\|\mathcal{G}_{\theta}(f)\|_X} 
\right],\quad s\geq1.
\end{equation}
Here, $\mathcal D$ denotes the class of DeepONets defined in \eqref{eq:DeepONet_class}. The target function of the optimization problem \eqref{eq:Optimal basis operator} is purely based on the residue of PDE, without calling for labeled data, and thus provides an unsupervised way for training the basis operators.




Now we give some details on implementation.
At stage $s \geq 1$, 
given the set of constructed operators $\{\mathcal{G}_{\theta_j}\}_{j=0}^{s-1}$, the training set of input $\{ f^{(b)} \}_{b=1}^{N_b}$, and the approximate solution $\mathcal{G}_{{s-1}}(f^{(b)}) = \sum_{j=0}^{s-1} \beta_j^{(b)} \mathcal{G}_{\theta_j}(f^{(b)})$. We look for the basis operator $\mathcal{G}_{\theta_s}\approx \varphi_{s-1}$ based on \eqref{eq:Optimal basis operator}. Here, the expectation is discretized as the empirical average over the training samples in practice, and then we are motivated to train $\theta_s$ of $\mathcal{G}_{\theta_s}$ by minimizing the loss
\begin{equation}\label{eq:loss_deeponet_un}
	\text{Loss}_s(\theta)
	=
	-\frac{1}{N_b}
	\sum_{b=1}^{N_b}
	\frac{
		R_{s-1}(f^{(b)},\mathcal{G}_{\theta}(f^{(b)}))
	}{
		\|\mathcal{G}_{\theta}(f^{(b)})\|_X
	},
    \quad  \theta_s\in\argmin_{\theta}\left\{ \text{Loss}_s(\theta)  \right\}.
\end{equation}
To ensure high accuracy, both the residual functional $R_{s-1}(\cdot,\cdot)$ and the energy norm $\|\cdot\|_X$ are computed using a Gauss--Legendre quadrature rule on the quadrature points $\{\bx_k\}_{k=1}^{N_\bx}$. 
With the obtained basis operator set $\{\mathcal{G}_{\theta_j}\}_{j=1}^s$ and an input $f$, we seek a Galerkin approximation $\mathcal{G}_s(f)$ for the solution of \eqref{eq:var_prob_residual} in 
$
\mathrm{span}\{\mathcal{G}_{\theta_j}(f)\}_{j=1}^s, 
$
i.e.,
$
\mathcal{G}_s(f)(\mathbf{x}) = \sum_{j=1}^{s} \beta_j(f)\, \mathcal{G}_{\theta_j}(f)(\mathbf{x}),
$
such that 
$$a(\mathcal{G}_s(f),\mathcal{G}_{\theta_j}(f))
=F(f,\mathcal{G}_{\theta_j}(f)),\quad 1\leq j\leq s.$$
The coefficient vector $\boldsymbol{\beta}(f) = [\beta_0(f), \dots, \beta_s(f)]^T$ is thus obtained by solving the linear system
\begin{equation}\label{eq:galerkin_system}
\mathbf{K}(f)\, \boldsymbol{\beta}(f) = \mathbf{F}(f),
\end{equation}
where $\mathbf{K}=(\mathbf{K}_{k,j})_{s\times s}$ and $\mathbf{F}=(\mathbf{F}_j)_{s\times 1}$
with entries
\[
\mathbf{K}_{k,j}(f) = a\big( \mathcal{G}_{\theta_k}(f),\, \mathcal{G}_{\theta_j}(f) \big), \qquad
\mathbf{F}_{j}(f) = F\big(f, \mathcal{G}_{\theta_j}(f) \big).
\]
The Galerkin projection scheme \eqref{eq:galerkin_system}  ensures that for each input $f$, it provides the optimal approximation to the solution of the variational problem within the current approximation subspace given by the basis operators. The construction process of DGNO can be stopped when a prescribed maximum stage number $S\geq1$ is reached. 
The overall framework of DGNO is summarized in Algorithm \ref{alg:DGNO}.

\begin{remark}
    The current DGNO is designed based on the bilinear form \eqref{eq:var_prob_residual}. Nevertheless, by incorporating linearization techniques such as fixed-point iteration, the proposed framework could be extended to general nonlinear case, which will be investigated in our future work.
\end{remark}

\begin{algorithm}[h!]
	\caption{DGNO method}
	\label{alg:DGNO}
	\begin{algorithmic}[0]
		\State Parameters: maximum stage number $S$; maximum step $N_e$ for optimizer
		
		\State \textbf{Input:} Training dataset 
		$\{f^{(b)}\}_{b=1}^{N_b}$; 
		Gaussian quadrature points $\{\mathbf{x}_k\}_{k=1}^{N_x}$
		
		\State \textbf{Output:} Basis operator set 
		$\{\mathcal{G}_{\theta_j}\}_{j=1}^{S}$; Solution $\mathcal{G}_S  \approx \mathcal{G}(f)$
		
	\end{algorithmic}
	\begin{algorithmic}[1]
		
		\State \textbf{Initialize:} 
		$\mathcal{G}_0 \gets 0$; 
		$\Psi_s \gets \emptyset$

		\State \textbf{Part I: Greedy construction of neural operator basis}
		
		\For{$s = 1$ \textbf{to} $S$}
		
		
		\State Initialize DeepONet parameters $\theta_s$
		
		\For{$epoch = 1$ \textbf{to} $epoch = N_e$}
		\State compute the loss function by \eqref{eq:loss_deeponet_un} 
		\State update $\theta_s$ via the optimizer
		\EndFor
		
		\State Update basis set:
		$\Psi_{s+1} \gets \Psi_s \cup \{\mathcal{G}_{\theta_s}\}$
		
		\State Compute coefficients $\boldsymbol{\beta}(f^{(b)})$ for every $b$ via \eqref{eq:galerkin_system}
		
		\State Update approximation $\mathcal{G}_s(f^{(b)}) = \sum_{j=1}^{s} \beta_j(f^{(b)}) \, \mathcal{G}_{\theta_j}(f^{(b)})$ for $b=1,\cdots,N_b$
		
		\EndFor

		\State \textbf{Part II: Inference for new input $f$}
		
		\State Given $f$, evaluate 
		$\{\mathcal{G}_{\theta_j}(f)\}_{j=1}^{S}$
		
		\State Compute coefficient $\boldsymbol{\beta}(f)$ via solving linear system \eqref{eq:galerkin_system}
		
		\State   
		 Form the prediction $\mathcal{G}_S(f)(\bx) =\sum_{j=1}^{S} \beta_j(f) \mathcal{G}_{\theta_j}(f)(\bx)$

	\end{algorithmic}
\end{algorithm}

 

\subsection{Numerical analysis}
Here we perform some numerical analysis for the proposed DGNO method. 
 Our analysis considers a continuous projection neglecting the quadrature errors.
To quantify the overall approximation performance over the input space $\mathcal{W}$, 
we consider a probability measure $\mu$ supported on $\mathcal{W}$ and introduce the Bochner space $L^1(\mathcal{W};X)$ 
equipped with the norm \cite{adams2003sobolev}
\begin{equation}\label{eq:Bochner_norm1}
	\|v\|_{L^1(\mathcal{W};X)} 
	:= \int_{\mathcal{W}} \|v(f)\|_{X} \, d\mu(f),\quad v\in L^1(\mathcal{W};X).
\end{equation}
In the context of solving PDE, it is natural to consider the solution space $X$ as some Sobolev space and we make the following basic assumption. 
\begin{assumption}\label{ass:DGNO}  For some $0\leq q_1,q_2<\infty$, 
   let the target operator $\mathcal{G}:\mathcal{W}\to X$ be a continuous operator, where $\mathcal{W}$ is a compact subset of $H^{q_1}(\Omega)$ and
$X$ is subspace of $H^{q_2}(\Omega)$ with the energy norm $||\cdot||_X$ continuously controlled by $\|\cdot\|_{H^{q_2}(\Omega)}$ and $\Omega$ a bounded open domain. 
\end{assumption}
We first establish a universal approximation result for DeepONet in the Sobolev space. To do this, we begin with the following lemma.

\begin{lemma}[\cite{hornik1991approximation}]\label{Lemma:Universal approximation theorem_function}
	Suppose $0 \le q < \infty$ and $\Omega \subset \mathbb{R}^d$ is a bounded open domain. If $\sigma \in C^q(\mathbb{R})$ is non-constant and bounded, then the space
	$$
	V^\sigma  = \bigcup_{N=1}^{\infty}
	\left\{ f_{N} : \mathbb{R}^d \to \mathbb{R} \,\middle|\, f_{N}(\bx) = \sum_{j=1}^N \alpha_j \sigma(\boldsymbol{w}_j \cdot \bx + z_j),\ \forall \alpha_j,z_j\in\mathbb{R},\, \boldsymbol{w}_j\in\mathbb{R}^d \right\}.
	$$
	is dense in the Sobolev space
	$
	H^{q}(\Omega).
	$
\end{lemma}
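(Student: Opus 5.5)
The plan is to prove the stronger statement that $V^\sigma$ is dense in $C^q$ on compacta. That is, for every compact $K\subset\mathbb{R}^d$, every $g\in C^\infty(\mathbb{R}^d)$ and every $\varepsilon>0$ there is $f_N\in V^\sigma$ with $\max_{|\alpha|\le q}\sup_{K}|\partial^\alpha(g-f_N)|<\varepsilon$. Density in $H^q(\Omega)$ then follows in two steps. First, since $\Omega$ is bounded, the $C^q(\overline\Omega)$-seminorm controls the $H^q(\Omega)$-norm up to a factor $|\Omega|^{1/2}$. Second, restrictions to $\Omega$ of $C^\infty(\mathbb{R}^d)$ functions are dense in $H^q(\Omega)$. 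This second fact is where the domain enters. It holds for domains with the segment property, e.g.\ Lipschitz ones. For a completely arbitrary bounded open set I would either add this mild regularity hypothesis, or follow Hornik and read $H^q(\Omega)$ as the closure of smooth functions in that norm. I would state this caveat explicitly.

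The argument reduces to one dimension in three steps.

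\emph{Step 1 (ridge reduction).} The monomials $\bx^\alpha$ lie in the span of $\{(\boldsymbol{w}\cdot\bx)^k\}$; this is the classical polarization identity. By Weierstrass-type approximation with derivatives, polynomials are dense in $C^q(K)$. So it suffices to approximate each univariate power $t^k$ in $C^q([-R,R])$ by finite sums $\sum_j\alpha_j\sigma(a_jt+b_j)$, where $R$ bounds $|\boldsymbol{w}\cdot\bx|$ on $K$. Substituting $t=\boldsymbol{w}\cdot\bx$ preserves $C^q$ convergence on $K$ by the chain rule.

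\emph{Step 2 (smoothing).} Let $\psi$ be a mollifier and set $\sigma_\psi=\sigma*\psi\in C^\infty$. Riemann sums of $\int\sigma(at+b-y)\psi(y)\,dy$ are elements of $\mathrm{span}\{\sigma(at+b')\}$. Because $\sigma\in C^q$ is uniformly continuous on compacts together with its first $q$ derivatives, these sums converge to $\sigma_\psi(at+b)$ in $C^q([-R,R])$. Hence every $\sigma_\psi(at+b)$ lies in the $C^q$-closure $\overline{V}$ of $\mathrm{span}\{\sigma(at+b)\}$. Choosing $\psi$ with small support keeps $\sigma_\psi$ bounded and nonconstant. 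A bounded nonconstant smooth function cannot be a polynomial, so for every $k$ there is $b_k$ with $\sigma_\psi^{(k)}(b_k)\neq0$.

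\emph{Step 3 (differentiating in the weight).} Since $\partial_a^k\,\sigma_\psi(at+b)\big|_{a=0}=t^k\sigma_\psi^{(k)}(b)$, iterated difference quotients in $a$ of elements of $\overline{V}$ converge to $t^k\sigma_\psi^{(k)}(b_k)$. The convergence is in $C^q([-R,R])$, because $\sigma_\psi$ is $C^\infty$ and every mixed derivative in $(a,t)$ is uniformly continuous on compacts. Hence $t^k\in\overline{V}$ for all $k$, which closes the argument together with Step 1.

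I expect the main obstacle to be the bookkeeping of the function space rather than any algebra. Two points need care. The first is that the limits in Steps 2 and 3 are taken in the $C^q$ topology, not merely uniformly; this is exactly where the hypothesis $\sigma\in C^q$, and not just continuity, is used. The second is the passage from $C^q(\overline\Omega)$ density to $H^q(\Omega)$ density, which needs the boundary caveat above. An alternative route to Step 3 is Hornik's duality argument: a continuous functional on $H^q$ that annihilates $V^\sigma$ is a distribution $T$ with $\langle T,\sigma(\boldsymbol{w}\cdot\bx+z)\rangle=0$, and one uses Fourier analysis to conclude $T=0$. I would keep it in reserve, since the constructive argument above is more transparent.
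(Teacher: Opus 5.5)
Your argument is correct, but it is not the route behind this lemma. The paper gives no proof of its own; it imports Hornik's theorem, whose proof is the duality argument you kept in reserve. If $V^\sigma$ were not dense, Hahn--Banach would give a nonzero continuous functional annihilating every ridge function $\sigma(\boldsymbol{w}\cdot\mathbf{x}+z)$; such a functional is a finite sum of derivatives of compactly supported signed measures. A Fourier argument then forces it to vanish, using that the distributional Fourier transform of a bounded nonconstant $\sigma$ is not supported at the origin.

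Your constructive route (ridge reduction, mollification, differentiation in the weight, in the spirit of Leshno--Lin--Pinkus--Schocken) differs in several ways:
\begin{itemize}
\item It avoids distribution theory.
\item It produces explicit networks converging to $t^k$.
\item It isolates the only role of boundedness: guaranteeing that $\sigma_\psi$ is not a polynomial, so that $\sigma_\psi^{(k)}(b_k)\neq 0$. With a slightly more careful smoothing step, the same proof covers every non-polynomial $C^q$ activation.
\end{itemize}
The duality proof, in exchange, is shorter and treats $L^p(\mu)$ and weighted Sobolev targets for all finite compactly supported $\mu$ at once. Like Hornik's, your argument treats integer $q$, which is how $\sigma\in C^q(\mathbb{R})$ should be read here.

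Your boundary caveat is not excess caution; any proof needs it. For $q\ge 1$ the statement is false on general bounded open sets. Take the slit disk $\Omega=B(0,1)\setminus\bigl([0,1)\times\{0\}\bigr)$ and $u=\eta\,\theta$, where $\theta\in(0,2\pi)$ is the polar angle and $\eta$ is a smooth cutoff equal to $1$ near $(1/2,0)$ and supported in a small ball around that point. Then $u\in H^1(\Omega)$, and its one-sided traces on the slit differ by $2\pi$ near $(1/2,0)$. By contrast, every element of $V^\sigma\subset C^1(\mathbb{R}^2)$ has equal one-sided traces. Since the trace maps from the upper and lower half-disks are continuous, $u$ cannot be approximated in $H^1$ by elements of $V^\sigma$.

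So the lemma should assume something like the segment property, or read $H^q(\Omega)$ as the closure of globally smooth functions. The same hypothesis is then needed in the paper's subsequent Sobolev-norm DeepONet lemma, which applies this result to an orthonormal basis of $H^{q_2}(\Omega)$.
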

Based on the neural-network approximation for functions in Sobolev spaces in Lemma \ref{Lemma:Universal approximation theorem_function}, we establish the following universal approximation in Sobolev space for operator networks.

\begin{lemma}[Universal approximation for stacked DeepONet in Sobolev norm]\label{Lemma:Universal approximation theorem_Sobolev}
Let Assumption \ref{ass:DGNO} hold. 
	If $\sigma \in C^{q_2}(\mathbb{R})$ is non-constant and bounded, then for any $\epsilon > 0$, there exists a stacked DeepONet 
    $\mathcal{G}_\theta (f)(\bx):=
\sum_{l=1}^p
\mathfrak{b}_{l,\theta_\mathfrak{b}}(\mathbf{c}_K(f))\,
\mathfrak{t}_{l,\theta_\mathfrak{t}}(\bx),$
    such that:
	\[
	 \sup_{f \in \mathcal{W}} \lVert \mathcal{G}(f) - \mathcal{G}_\theta(f) \rVert_{H^{q_2}(\Omega)} < \epsilon.
	\]
 Here, $\bx \in \Omega$, 
 $	\mathfrak{b}_{l,\theta_\mathfrak{b}}(\mathbf{c}_K(f))
		:=
		\sum_{k=1}^{N_l}
		\alpha_k^l
		\sigma(\boldsymbol{w}_k^l \cdot \mathbf{c}_K(f) + z_k^l)$,
$\mathfrak{t}_{l,\theta_\mathfrak{t}}(\bx)
		:=
		\sum_{j=1}^{M_l}
		\alpha_j^l
		\sigma(\boldsymbol{w}_j^l \cdot \bx + z_j^l)$.
        The coefficient vector $
	\mathbf{c}_K(f) = (\langle f, \zeta_1 \rangle_{H^{q_1}(\Omega)}, \dots, \langle f, \zeta_K \rangle_{H^{q_1}(\Omega)}) \in \mathbb{R}^K$, where ${\{\zeta_k\}}_{k=1}^{\infty}$  is an orthonormal basis of $H^{q_1}(\Omega)$.
\end{lemma}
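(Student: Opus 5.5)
The proof follows the classical Chen--Chen / DeepONet strategy, with the uniform norm replaced by $H^{q_2}(\Omega)$ on the output side. The three ingredients are compactness of $\mathcal{G}(\mathcal{W})$, a finite-dimensional encoding of the input, and Lemma \ref{Lemma:Universal approximation theorem_function} applied twice: once for the trunk functions on $\Omega\subset\mathbb{R}^d$, and once for the branch functions on a compact subset of $\mathbb{R}^K$. Throughout, $\epsilon$ is split into three pieces of size $\epsilon/3$.

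\emph{Step 1 (output reduction).} Since $\mathcal{W}$ is compact in $H^{q_1}(\Omega)$ and $\mathcal{G}$ is continuous into $X\subset H^{q_2}(\Omega)$, the image $\mathcal{G}(\mathcal{W})$ is compact in $H^{q_2}(\Omega)$. I take a finite $\epsilon/6$-net of this image and let $P$ be the $H^{q_2}$-orthogonal projection onto the span of an orthonormal basis $\{e_1,\dots,e_p\}$ of the net. Then $\sup_f\|\mathcal{G}(f)-\sum_{l=1}^p g_l(f)e_l\|_{H^{q_2}}<\epsilon/3$, where $g_l(f)=\langle \mathcal{G}(f),e_l\rangle_{H^{q_2}}$ are continuous real functionals on $\mathcal{W}$. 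By Lemma \ref{Lemma:Universal approximation theorem_function} with $q=q_2$, each $e_l$ can be replaced by a shallow network $\mathfrak{t}_l$ with $\|e_l-\mathfrak{t}_l\|_{H^{q_2}}\le \delta$. The resulting error is at most $\delta\sum_l\sup_f|g_l(f)|$, and this is $<\epsilon/3$ for $\delta$ small, because the $g_l$ are bounded on the compact set $\mathcal{W}$.

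\emph{Step 2 (input reduction).} Let $P_K f=\sum_{k\le K}\langle f,\zeta_k\rangle\zeta_k$. By compactness of $\mathcal{W}$, a standard equicontinuity/Cauchy-net argument gives $\sup_{f\in\mathcal{W}}\|f-P_Kf\|_{H^{q_1}}\to 0$ as $K\to\infty$. Define $\tilde g_l(\mathbf{c})=g_l\bigl(\sum_k c_k\zeta_k\bigr)$ on $\mathcal{W}_K:=P_K\mathcal{W}\cup\mathcal{W}$. This set is compact, and $g_l$, composed with $\mathcal{G}$, extends continuously to it. One option is to extend $g_l$ continuously from $\mathcal{W}$ to all of $H^{q_1}$ via Dugundji's extension theorem; then $g_l$ is uniformly continuous on the compact set $\mathcal{W}_K$ (a union of two compacts). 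Choosing $K$ large then gives $\sup_f|g_l(f)-g_l(P_Kf)|$ small, uniformly in $l\le p$.

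\emph{Step 3 (branch networks).} The set $\mathbf{c}_K(\mathcal{W})\subset\mathbb{R}^K$ is compact, and $\mathbf{c}\mapsto g_l(\sum c_k\zeta_k)$ is continuous on it. Lemma \ref{Lemma:Universal approximation theorem_function} with $q=0$ (or the classical $C^0$ version on compacts) yields a shallow network $\mathfrak{b}_l$ that is uniformly close to it. Combining Steps 2 and 3, $\sup_f|g_l(f)-\mathfrak{b}_l(\mathbf{c}_K(f))|\le\eta$. The error contributed is then at most $\eta\sum_l\|\mathfrak{t}_l\|_{H^{q_2}}$, and this is $<\epsilon/3$ for $\eta$ small, since $\mathfrak{t}_l$ is already fixed. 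The triangle inequality then completes the proof.

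The main obstacle is a technical mismatch in Step 3: Lemma \ref{Lemma:Universal approximation theorem_function} is an $H^q$ density statement on a bounded open domain, whereas the branch needs uniform approximation on a compact subset of $\mathbb{R}^K$. I would handle this by citing Hornik's uniform-on-compacts result for bounded non-constant continuous $\sigma$, or by noting that density in $C^0$ on compacts is the $q=0$ case of the same reference. Some care is also needed with the continuity extension in Step 2, since $P_Kf$ need not lie in $\mathcal{W}$. The order of choices matters: $p$ and the $e_l$ first, then the $\mathfrak{t}_l$, then $K$ and the $\mathfrak{b}_l$.
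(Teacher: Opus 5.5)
Your proof is correct, but it orders the argument differently from the paper. The paper reduces the input first, at the operator level. It bounds $\|\mathcal{G}(f)-\mathcal{G}(P_Kf)\|_{H^{q_2}(\Omega)}$ by uniform continuity and truncates $\mathcal{G}(P_Kf)$ in a fixed orthonormal basis $\{\eta_l\}$ of $H^{q_2}(\Omega)$, using compactness of $\mathcal{G}(P_K\mathcal{W})$. It then approximates the branch coefficients $\langle\mathcal{G}(P_Kf),\eta_l\rangle$ uniformly to accuracy $\epsilon/(4\sqrt{p})$, so that Parseval controls that term. Only afterwards does it choose the trunks, with tolerance $\epsilon/(4C_0p)$, where $C_0$ bounds the already fixed branch networks. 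You instead reduce the output first via a finite net of $\mathcal{G}(\mathcal{W})$ and fix the trunks. You then do the input reduction only for the finitely many scalar functionals $g_l$, paying $\eta\sum_l\|\mathfrak{t}_l\|_{H^{q_2}(\Omega)}$ instead of using orthonormality; this three-term split works equally well.

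The main gain of your route is rigor at the one delicate point. $\mathcal{G}$ is defined only on $\mathcal{W}$, and $P_Kf$ generally lies outside it. The paper nonetheless evaluates $\mathcal{G}(P_Kf)$ and applies uniform continuity of $\mathcal{G}$ on $\mathcal{W}$ to the pair $(f,P_Kf)$ without comment. Your extension of each real-valued $g_l$ from the closed set $\mathcal{W}$ makes this step legitimate. Tietze already suffices there; Dugundji would only be needed to extend $\mathcal{G}$ itself into $H^{q_2}(\Omega)$.

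Two details need tightening. First, uniform continuity on $P_K\mathcal{W}\cup\mathcal{W}$ gives a modulus that may depend on $K$, so choosing $K$ large afterwards is circular. Use instead the fixed set $\mathcal{W}\cup\bigcup_{K}P_K\mathcal{W}$, which is compact because $\sup_{f\in\mathcal{W}}\|f-P_Kf\|_{H^{q_1}(\Omega)}\to0$. Alternatively, use the standard fact that for every $\epsilon'>0$ there is $\delta>0$ with $|g_l(f)-g_l(h)|<\epsilon'$ whenever $f\in\mathcal{W}$ and $\|f-h\|_{H^{q_1}(\Omega)}<\delta$. Second, the $q=0$ case of Lemma \ref{Lemma:Universal approximation theorem_function} gives $L^2$-density, not uniform density. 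For the branches you therefore need Hornik's uniform-on-compacta theorem, or Chen--Chen as the paper cites.
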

\begin{proof}	
	Define the projection operator $P_K : H^{q_1}(\Omega) \to H^{q_1}(\Omega)$ by
	$
	P_K f := \sum_{k=1}^K \langle f, \zeta_k \rangle_{H^{q_1}} \, \zeta_k.
	$
    Since $\mathcal{W} \subset H^{q_1}(\Omega)$ is compact and
$\{P_K\}_{K\in\mathbb N}$ is a sequence of uniformly bounded projections, we have 
$\sup_{f \in \mathcal{W}} \|P_K f - f\|_{H^{q_1}(\Omega)} \to 0 \; \text{as} \; K \to \infty.$
Moreover, since $\mathcal G$ is continuous on the compact set $\mathcal W$,
it is uniformly continuous. Therefore, for any $\epsilon>0$, there exists
$\delta>0$ such that
$
\|f-P_K f\|_{H^{q_1}(\Omega)}<\delta
\Longrightarrow
\|\mathcal G(f)-\mathcal G(P_K f)\|_{H^{q_2}(\Omega)}
<
{\epsilon}/{4}.
$
Choosing $K$ sufficiently large so that
$
\sup_{f\in\mathcal W}
\|P_Kf-f\|_{H^{q_1}(\Omega)}
<
\delta,
$
we obtain
\[
\sup_{f\in\mathcal W}
\|\mathcal G(f)-\mathcal G(P_Kf)\|_{H^{q_2}(\Omega)}
<
\frac{\epsilon}{4}.
\]

	Let $\{\eta_l\}_{l=1}^\infty$ be an unit orthonormal basis of $H^{q_2}(\Omega)$.
	Since $P_K(\mathcal{W})$ is compact in $H^{q_1}(\Omega)$ and $\mathcal{G}$ is continuous, the set $\mathcal{G}(P_K(\mathcal{W}))$ is compact in $H^{q_2}(\Omega)$. Similarly, there exists a positive integer $p$ such that
	\[
	\sup_{f \in \mathcal{W}} \Bigl\lVert \mathcal{G}(P_Kf) - \sum_{l=1}^p \langle \mathcal{G}(P_Kf), \eta_l \rangle_{H^{q_2}}\, \eta_l \Bigr\rVert_{H^{q_2}(\Omega)} < \frac{\epsilon}{4}.
	\]
    
	For each $l=1,\ldots,p$, define 
$\mathcal{B}_l(\mathbf c_K(f))
:=
\langle \mathcal G(P_Kf),\eta_l\rangle_{H^{q_2}},
\; f\in\mathcal W.
$
Since $P_Kf$
is uniquely determined by
$
\mathbf c_K(f)
=
\bigl(
\langle f,\zeta_1\rangle_{H^{q_1}},
\ldots,
\langle f,\zeta_K\rangle_{H^{q_1}}
\bigr),
$
the above definition is well posed. Moreover, $\mathcal{B}_l$ is continuous on
$\mathbf c_K(\mathcal W)$ as it is the composition of continuous mappings.
Since $\mathcal W$ is compact in $H^{q_1}(\Omega)$, $\mathbf c_K(\mathcal W)$ is a compact subset of $\mathbb R^K$. By the universal approximation theorem for continuous functions \cite{Chen1995}, for each $l=1,\ldots,p$, there exists a neural network
$
\mathfrak{b}_{l,\theta_\mathfrak{b}}(\mathbf c_K(f))
=
\sum_{k=1}^{N_l}
\alpha_k^l
\sigma(\boldsymbol{w}_k^l\cdot \mathbf c_K(f)+z_k^l),
$
such that
\[
\sup_{f\in\mathcal W}
\left|
\mathcal{B}_l(\mathbf c_K(f))
-
\mathfrak{b}_{l,\theta_\mathfrak{b}}(\mathbf c_K(f))
\right|
<
\frac{\epsilon}{4\sqrt p}.
\]
Since $p$ is finite, these approximations can be chosen simultaneously for all
$l=1,\ldots,p$.
     Then, 
	 \begin{align*}
	 \sup_{f \in \mathcal{W}} \Bigl\lVert \sum_{l=1}^p \bigl(\mathcal{B}_l(\mathbf{c}_K(f)) - \mathfrak{b}_{l,\theta_\mathfrak{b}}(\mathbf{c}_K(f))\bigr)\eta_l \Bigr\rVert_{H^{q_2}(\Omega)}
	 &= \sup_{f \in \mathcal{W}}
	 \left(
	 \sum_{l=1}^p |\mathcal{B}_l(\mathbf{c}_K(f)) - \mathfrak{b}_{l,\theta_\mathfrak{b}}(\mathbf{c}_K(f))|^2
	 \right)^{1/2}\\
	 &\le
	 	\left(\sum_{l=1}^p
	 	\sup_{f \in \mathcal{W}}
	 	|\mathcal{B}_l(\mathbf{c}_K(f)) - \mathfrak{b}_{l,\theta_\mathfrak{b}}(\mathbf{c}_K(f))|^2\right)^{1/2}
	 <
	 \frac{\epsilon}{4}.
	 \end{align*}

	Now, define the network
	$
	\mathfrak{t}_{l,\theta_\mathfrak{t}}(\bx) := \sum_{j=1}^{M_l} \alpha_j^l \sigma(\boldsymbol{w}_j^l \cdot \bx + z_j^l),
	\, 1 \le l \le p. 
	$
    Since $p$ is finite, for each $1 \le l \le p$, by the universal approximation from Lemma \ref{Lemma:Universal approximation theorem_function}, there exists $M_l$ such that 
	$
	\|\eta_l - \mathfrak{t}_{l,\theta_\mathfrak{t}}\|_{H^{q_2}(\Omega)} < \delta_0
	$.
	Moreover, since $\mathbf c_K(\mathcal W)$ is compact and $\mathfrak{b}_{l,\theta_\mathfrak{b}}$ is continuous, there exists a constant $C_0>0$ such that
	$
	|\mathfrak{b}_{l,\theta_\mathfrak{b}}(\mathbf{c}_K(f))| \le C_0, \; \forall f \in \mathcal{W}, \; 1 \le l \le p.
$	
	Defining $\mathcal{G}_\theta(f)(\bx):=\sum_{l=1}^p \mathfrak{b}_{l,\theta_\mathfrak{b}}(\mathbf{c}_K(f)) \mathfrak{t}_{l,\theta_\mathfrak{t}}(\bx)$ and by the triangle inequality, we have
	\begin{align*}
		\sup_{f \in \mathcal{W}} \Bigl\lVert \sum_{l=1}^p \mathfrak{b}_{l,\theta_\mathfrak{b}}(\mathbf{c}_K(f))\eta_l 
		- \mathcal{G}_\theta(f) \Bigr\rVert_{H^{q_2}(\Omega)} 
		&\le \sup_{f \in \mathcal{W}} \Big(\sum_{l=1}^p |\mathfrak{b}_{l,\theta_\mathfrak{b}}(\mathbf{c}_K(f))| \cdot \|\eta_l - \mathfrak{t}_{l,\theta_\mathfrak{t}}\|_{H^{q_2}(\Omega)} \Big) \\
		& \le C_0 \sum_{l=1}^p \|\eta_l - \mathfrak{t}_{l,\theta_\mathfrak{t}}\|_{H^{q_2}(\Omega)}  
		\le C_0 p \delta_0.
	\end{align*}
	By choosing $\delta_0 = \epsilon/(4C_0p)$, we obtain
	$
	\Bigl\lVert \sum_{l=1}^p \mathfrak{b}_{l,\theta_\mathfrak{b}}(\mathbf{c}_K(f))\eta_l 
	- \mathcal{G}_\theta(f) \Bigr\rVert_{H^{q_2}(\Omega)} < \epsilon/{4}.
	$ 
	 Then we have
	\begin{align*}
		\sup_{f \in \mathcal{W}} \lVert \mathcal{G}(f) - \mathcal{G}_\theta(f) \rVert_{H^{q_2}(\Omega)}
		&\le \sup_{f \in \mathcal{W}}\Bigl(\lVert \mathcal{G}(f) - \mathcal{G}(P_Kf) \rVert_{H^{q_2}(\Omega)} 
		 + \Bigl\lVert \mathcal{G}(P_Kf) - \sum_{l=1}^p \mathcal{B}_l(\mathbf{c}_K(f)) \eta_l \Bigr\rVert_{H^{q_2}(\Omega)} \\
		 +& \Bigl\lVert \sum_{l=1}^p [\mathcal{B}_l(\mathbf{c}_K(f)) - \mathfrak{b}_{l,\theta_\mathfrak{b}}(\mathbf{c}_K(f))] \eta_l \Bigr\rVert_{H^{q_2}(\Omega)} + 
		\Bigl\lVert \sum_{l=1}^p \mathfrak{b}_{l,\theta_\mathfrak{b}}(\mathbf{c}_K(f))\eta_l 
		- \mathcal{G}_\theta(f) \Bigr\rVert_{H^{q_2}(\Omega)} \Big) \\
		&< \epsilon.
	\end{align*}
\end{proof}

Lemma \ref{Lemma:Universal approximation theorem_Sobolev} only considers DeepONet constructed from stacked shallow networks. 
The following extends the result by allowing for more general branch and trunk networks.


\begin{lemma}[Generalized universal approximation for DeepONet in Sobolev norms]\label{Th:Universal approximation theorem_Sobolev}
	Under Assumption \ref{ass:DGNO}, for any $\epsilon > 0$,  there exist positive integers $K, p$, continuous functionals
$c_1,c_2,\ldots,c_K:H^{q_1}(\Omega)\to\mathbb{R}$, and continuous vector functions $\mathfrak{b}: \mathbb{R}^K \to \mathbb{R}^p$, $\mathfrak{t}: \mathbb{R}^d \to \mathbb{R}^p$,  such that
	\begin{equation}\label{Universal_sobolev}
		\sup_{f \in \mathcal{W}} \lVert \mathcal{G}(f) - \underbrace{\mathfrak{b}(c_1(f), c_2(f), \ldots, c_K(f)) }_{\text{branch}} \cdot \underbrace{\mathfrak{t} }_{\text{ trunk}}  \rVert_{H^{q_2}(\Omega)}
		 < \epsilon.
	\end{equation}
The functions $\mathfrak{b}$ and $\mathfrak{t}$ can be chosen as diverse classes 
of neural networks which satisfy the classical universal approximation
 theorem of functions, including (stacked/unstacked) fully connected
 neural networks and convolutional neural networks.
\end{lemma}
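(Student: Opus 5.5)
The plan is to reduce Lemma~\ref{Th:Universal approximation theorem_Sobolev} to Lemma~\ref{Lemma:Universal approximation theorem_Sobolev}. The existence statement \eqref{Universal_sobolev} is then immediate, and the substantive work is the claim about other network classes.

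\emph{Existence.} I will take $c_k(f):=\langle f,\zeta_k\rangle_{H^{q_1}(\Omega)}$ for $k=1,\dots,K$, where $\{\zeta_k\}$ is an orthonormal basis of $H^{q_1}(\Omega)$. These are bounded linear functionals, hence continuous. Then I set $\mathfrak{b}:=(\mathfrak{b}_{1,\theta_\mathfrak{b}},\dots,\mathfrak{b}_{p,\theta_\mathfrak{b}})$ and $\mathfrak{t}:=(\mathfrak{t}_{1,\theta_\mathfrak{t}},\dots,\mathfrak{t}_{p,\theta_\mathfrak{t}})$ as produced by Lemma~\ref{Lemma:Universal approximation theorem_Sobolev}. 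Each component is a finite sum of $\sigma(\cdot)$ terms with $\sigma$ continuous, so $\mathfrak{b}$ and $\mathfrak{t}$ are continuous vector functions, and the bound \eqref{Universal_sobolev} is exactly the conclusion of that lemma.

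\emph{General network classes.} Here I will reopen the proof of Lemma~\ref{Lemma:Universal approximation theorem_Sobolev} and keep its first two steps, which do not involve networks at all. The first gives $K$ with $\sup_f\|\mathcal G(f)-\mathcal G(P_Kf)\|_{H^{q_2}}<\epsilon/4$. The second gives $p$ with a truncated expansion in an orthonormal basis $\{\eta_l\}$ of $H^{q_2}(\Omega)$ within $\epsilon/4$. This yields continuous coefficient maps $\mathcal B_l$ on the compact set $\mathbf c_K(\mathcal W)\subset\mathbb R^K$.

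For the branch, I only need that the chosen class (stacked or unstacked fully connected networks, or CNNs) is dense in $C(\mathbf c_K(\mathcal W);\mathbb R^p)$ in the sup norm. For an unstacked network I approximate the vector map $(\mathcal B_1,\dots,\mathcal B_p)$ with a single network having $p$ outputs. Componentwise density gives this: place the $p$ approximants in parallel blocks, or, for a shallow net, take the union of hidden neurons and separate output weights. That gives a branch error below $\epsilon/4$.

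For the trunk, I need each $\eta_l$ to be approximated in $H^{q_2}(\Omega)$ norm by the chosen class, to within $\delta_0=\epsilon/(4C_0p)$. Here $C_0$ bounds the branch outputs on the compact set, using their continuity.

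I expect the trunk step to be the main obstacle. The classical universal approximation theorems for general classes are in the sup norm, whereas here Sobolev density is required. My first approach is a reduction. The chosen class should contain, or represent exactly, the shallow networks with a smooth bounded activation from Lemma~\ref{Lemma:Universal approximation theorem_function}. For instance, a deep fully connected net can emulate a shallow one given an identity-like or smooth activation in later layers, and a CNN with appropriate kernels can realize a fully connected layer. In that case Lemma~\ref{Lemma:Universal approximation theorem_function} transfers directly.

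If the class is only known to be dense in $C^{q_2}(\overline\Omega)$, I will argue through the smooth functions. First approximate $\eta_l$ in $H^{q_2}(\Omega)$ by a function in $C^\infty(\overline\Omega)$; this may need a mild boundary regularity assumption on $\Omega$, or else working in $H^{q_2}$ of a ball containing $\Omega$. Then use the bound $\|g\|_{H^{q_2}(\Omega)}\le C(|\Omega|)\|g\|_{C^{q_2}(\overline\Omega)}$ on the bounded domain. The statement's phrase ``classes satisfying the classical universal approximation theorem'' will be read in this $C^{q_2}$ (derivative-including) sense.

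Finally, the four $\epsilon/4$ terms are combined by the triangle inequality exactly as in the last display of Lemma~\ref{Lemma:Universal approximation theorem_Sobolev}.
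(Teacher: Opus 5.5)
Your existence argument is the paper's: take $c_k(f)=\langle f,\zeta_k\rangle_{H^{q_1}}$ together with the shallow branch and trunk from Lemma~\ref{Lemma:Universal approximation theorem_Sobolev}, and \eqref{Universal_sobolev} is then that lemma's conclusion.

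You differ from the paper on the claim about other network classes. The paper does not reopen the construction. It notes that the shallow $\mathfrak{b}$ lives on a compact cube $[-Z,Z]^K$ and $\mathfrak{t}$ on $\Omega$. It then invokes the sup-norm universal approximation theorem of \cite{Chen1995} to replace both by networks from another class.

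That suffices for the branch, because the branch enters only through scalar coefficients multiplying fixed trunk functions. It does not suffice for the trunk. Sup-norm closeness of $\mathfrak{t}$ on a bounded $\Omega$ controls the error in $L^2(\Omega)$ but not in $H^{q_2}(\Omega)$ when $q_2\ge 1$. So the paper's second paragraph silently needs exactly what you flag as the main obstacle, namely approximation that includes derivatives.

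Your version is the more careful one. You re-run the construction with the new class, asking only sup-norm density for the branch and $H^{q_2}$ (or $C^{q_2}$) density for the trunk. Your ordering is also correct: $C_0$ is fixed by the branch before $\delta_0$ is chosen for the trunk.

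One simplification is available. Rather than smoothing $\eta_l$, which is where your boundary-regularity caveat enters, approximate the shallow trunk components from Lemma~\ref{Lemma:Universal approximation theorem_Sobolev}, which are already $C^{q_2}$ on all of $\mathbb{R}^d$. Then $C^{q_2}$-density of the new class on the compact set $\overline{\Omega}$, combined with $\|g\|_{H^{q_2}(\Omega)}\le C(|\Omega|)\|g\|_{C^{q_2}(\overline{\Omega})}$, gives the trunk error directly. This needs no assumption on $\partial\Omega$ beyond whatever is already implicit in Lemma~\ref{Lemma:Universal approximation theorem_function}.
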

\begin{proof}
	Let $\mathfrak{b} = (\mathfrak{b}_1, \cdots, \mathfrak{b}_p)^T$ and $\mathfrak{t} = (\mathfrak{t}_1, \cdots, \mathfrak{t}_p)^T$. We choose
	$
	\mathfrak{b}_l(\mathbf{c}_K(f)) = \sum_{k=1}^{N_l} \alpha_k^l \sigma(\boldsymbol{w}_k^l \cdot \mathbf{c}_K(f) + z_k^l)
	$, and
	$
	\mathfrak{t}_l(\bx) = \sum_{j=1}^{M_l} \alpha_j^l \sigma(\boldsymbol{w}_j^l \cdot \bx + z_j^l)
	$
	for $\bx \in \mathbb{R}^d$, where all the notations involved are defined in Lemma~\ref{Lemma:Universal approximation theorem_Sobolev}. Then, this result holds immediately following Lemma~\ref{Lemma:Universal approximation theorem_Sobolev}.
	
	Note that the domains of $\mathfrak{b}$ and $\mathfrak{t}$ can be restricted on compact sets. In fact, $\mathfrak{t}$ is defined on $\Omega$, and $\mathfrak{b}$ can be defined on $[-Z, Z]^K$, where $Z > 0$ is a uniform bound of the functions in $\mathcal{W}$. Therefore, $\mathfrak{b}$ and $\mathfrak{t}$ can be approximated by neural networks according to the universal approximation of functions \cite{Chen1995}.
\end{proof}

Now, we are ready to present a convergence result for the proposed DGNO method.

\begin{theorem}\label{thm:dgno}
 Under Assumption~\ref{ass:DGNO},  fix one $0<\tau<1$ and assume that the training and sampling within DGNO are sufficient at each stage so that the resulting DeepONet basis $\mathcal{G}_{\theta_s}$ for $s=1,2,\ldots S$ can meet \eqref{Universal_sobolev} with $\epsilon=\tau$. Then,
     the following error estimates hold. 
    \begin{itemize}
        \item[1)] The error of DGNO is strictly decreasing during construction of each stage, i.e., $\|e_s\|_{L^1(\mathcal{W};X)} < \|e_{s-1}\|_{L^1(\mathcal{W};X)}$, and the error at the $s$-stage satisfies the bound 
	\begin{equation}\label{eq:error_bound_DGNO}
	    \|e_{s}\|_{L^1(\mathcal{W};X)} < \tau^s \, \|e_{0}\|_{L^1(\mathcal{W};X)} .
	\end{equation}
        \item[2)] Define $\mathcal{J}_{s-1}(\mathcal{G}_{\theta_s}):=\int_{\mathcal{W}}  R_{s-1}(f,\mathcal{G}_{\theta_s}(f)) \, d\mu(f),$
    then the error at stage $s-1$ of DGNO satisfies the bound
	\begin{equation}\label{eq:error_bound_posterior}
	    \frac{1}{1+\tau}\,\mathcal{J}_{s-1}(\mathcal{G}_{\theta_s}) < \|e_{s-1}\|_{L^1(\mathcal{W};X)} <  \frac{1}{1-\tau}\, \mathcal{J}_{s-1}(\mathcal{G}_{\theta_s}).
	\end{equation}
    \end{itemize}
    
\end{theorem}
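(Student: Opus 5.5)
The plan is to argue pointwise in $f$ and then integrate against the probability measure $\mu$, in the same way as the proof of Theorem~\ref{Prop:error_bound_all_Bochner}. We read the hypothesis in the natural sense: the DeepONet $\mathcal{G}_{\theta_s}$ approximates the normalized error operator $\varphi_{s-1}$ from \eqref{eq:residual operator}. This operator is continuous wherever $e_{s-1}(f)\neq0$, so Lemma~\ref{Th:Universal approximation theorem_Sobolev} applies to it, and we obtain
\[
\|\varphi_{s-1}(f)-\mathcal{G}_{\theta_s}(f)\|_X\le C_X\|\varphi_{s-1}(f)-\mathcal{G}_{\theta_s}(f)\|_{H^{q_2}(\Omega)}<\tau ,
\]
where we absorb the continuity constant $C_X$ of $\|\cdot\|_X$ (Assumption~\ref{ass:DGNO}) into $\tau$. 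If $e_{s-1}(f)=0$, then $e_s(f)=0$ and nothing needs to be proved.

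\emph{Part 1).} The Galerkin system \eqref{eq:galerkin_system} makes $\mathcal{G}_s(f)$ the $a$-orthogonal projection of $\mathcal{G}(f)$ onto $\mathrm{span}\{\mathcal{G}_{\theta_j}(f)\}_{j=1}^s$. This space contains $\mathcal{G}_{s-1}(f)+\|e_{s-1}(f)\|_X\,\mathcal{G}_{\theta_s}(f)$, so the best-approximation property gives
\[
\|e_s(f)\|_X\le \bigl\|e_{s-1}(f)-\|e_{s-1}(f)\|_X\mathcal{G}_{\theta_s}(f)\bigr\|_X=\|e_{s-1}(f)\|_X\,\|\varphi_{s-1}(f)-\mathcal{G}_{\theta_s}(f)\|_X<\tau\|e_{s-1}(f)\|_X .
\]
Integrating over $\mathcal{W}$ with respect to $\mu$ gives $\|e_s\|_{L^1(\mathcal{W};X)}\le\tau\|e_{s-1}\|_{L^1(\mathcal{W};X)}$. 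The inequality is strict whenever $\mu(\{e_{s-1}\neq0\})>0$, which we take as the nondegenerate case analogous to the nonzero-residual condition in Assumption~\ref{ass:DCNO}. Since $\tau<1$ the error is strictly decreasing, and induction yields \eqref{eq:error_bound_DGNO}.

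\emph{Part 2).} For any $v\in X$ we have $R_{s-1}(f,v)=a(\mathcal{G}(f)-\mathcal{G}_{s-1}(f),v)=a(e_{s-1}(f),v)$. Hence
\[
R_{s-1}(f,\mathcal{G}_{\theta_s}(f))=\|e_{s-1}(f)\|_X\,a(\varphi_{s-1}(f),\mathcal{G}_{\theta_s}(f)).
\]
Write $a(\varphi,\mathcal{G}_{\theta_s})=a(\varphi,\varphi)-a(\varphi,\varphi-\mathcal{G}_{\theta_s})$ and use $\|\varphi\|_X=1$. Cauchy--Schwarz in the $a$-inner product then gives
\[
1-\tau<a(\varphi_{s-1}(f),\mathcal{G}_{\theta_s}(f))<1+\tau .
\]
Multiplying by $\|e_{s-1}(f)\|_X$ and integrating against $\mu$ gives
\[
(1-\tau)\|e_{s-1}\|_{L^1(\mathcal{W};X)}<\mathcal{J}_{s-1}(\mathcal{G}_{\theta_s})<(1+\tau)\|e_{s-1}\|_{L^1(\mathcal{W};X)} .
\]
Rearranging gives \eqref{eq:error_bound_posterior}.

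\emph{Main obstacle.} The delicate step is justifying that the approximation hypothesis really refers to the normalized error $\varphi_{s-1}$ and holds uniformly in $f$. This requires $\varphi_{s-1}$ to be continuous on $\mathcal{W}$, which fails near inputs where $e_{s-1}(f)\to0$. Such inputs must either be excluded or handled by approximating $e_{s-1}$ itself: the bound $\|e_{s-1}(f)-\|e_{s-1}(f)\|_X\mathcal{G}_{\theta_s}(f)\|_X\le\tau\|e_{s-1}(f)\|_X$ is all that is actually used. A related point is keeping the constant linking the $H^{q_2}$ and $X$ norms from spoiling the factor $\tau$; we handle this by applying Lemma~\ref{Th:Universal approximation theorem_Sobolev} with $\epsilon=\tau/C_X$.
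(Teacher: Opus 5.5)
Your proposal is correct and follows the paper's proof essentially step for step. Part 1) uses the same Galerkin best-approximation comparison with $\mathcal{G}_{s-1}(f)+\|e_{s-1}(f)\|_X\mathcal{G}_{\theta_s}(f)$, and part 2) uses the same Cauchy--Schwarz bound on $a(e_{s-1}(f),\mathcal{G}_{\theta_s}(f)-\varphi_{s-1}(f))$, which you phrase as bounding $a(\varphi_{s-1}(f),\mathcal{G}_{\theta_s}(f))$ between $1\mp\tau$. The paper passes over your extra remarks silently: the constant linking $\|\cdot\|_X$ to $\|\cdot\|_{H^{q_2}(\Omega)}$, the continuity of $\varphi_{s-1}$ near zeros of $e_{s-1}$, and the need for $\mu(\{e_{s-1}\neq0\})>0$ to keep the inequalities strict after integration; these tighten rather than change the argument.
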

\begin{proof}
By the definition \eqref{eq:residual operator}, $\varphi_{s-1}$ is a continuous operator from $\mathcal{W} \subset H^{q_1}(\Omega)$ to $X \subset H^{q_2}(\Omega)$. Under the assumption and also 
by Lemma \ref{Th:Universal approximation theorem_Sobolev}, the basis operator network generated within DGNO at stage $s$, i.e., $\mathcal{G}_{\theta_s}\in \mathcal{D}$ for $1\leq s\leq S$, can approximate the  $\varphi_{s-1}$ to a range:
\begin{equation}\label{eq:approximate_basis_DGNO}
    \|\mathcal{G}_{\theta_s}(f) -\varphi_{s-1}(f)\|_X < \tau, \quad \forall f \in \mathcal{W}.
\end{equation}
	For any fixed $f\in\mathcal{W}$, note that $\mathcal{G}_{s-1}+\|e_{s-1}(f)\|_X \mathcal{G}_{\theta_s}(f) \in 
\mathrm{span}\{\mathcal{G}_{\theta_j}(f)\}_{j=1}^s.   
    $ Hence, by the property of Galerkin projection (see also \cite{GNN}), we have
\begin{equation}\label{eq:error_decay_pointwise}
\begin{aligned}
    \|e_{s}(f)\|_X = \|\mathcal{G}(f) - \mathcal{G}_s(f)\|_X \leq
& \|\mathcal{G}(f) - \mathcal{G}_{s-1}(f) - \|e_{s-1}(f)\|_X \mathcal{G}_{\theta_s}(f)\|_X \\
=& \| \|e_{s-1}(f)\|_X \varphi_{s-1}(f) - \|e_{s-1}(f)\|_X \mathcal{G}_{\theta_s}(f)\|_X \\
=& \|e_{s-1}(f)\|_X \|\varphi_{s-1}(f) - \mathcal{G}_{\theta_s}(f)\|_X \\
<& \tau \|e_{s-1}(f)\|_X.
\end{aligned}
\end{equation}
where the final step follows \eqref{eq:approximate_basis_DGNO}. 
Integrating over $\mathcal{W}$ with respect $f$ under the measure $\mu$ immediately yields $\|e_s\|_{L^1(\mathcal{W};X)} <\tau\,\|e_{s-1}\|_{L^1(\mathcal{W};X)}$.
    As $\tau<1$, the error strictly decreases. Applying this estimate recursively, we obtain \eqref{eq:error_bound_DGNO}.

By the definition of $R_{s-1}$, for every $ f \in \mathcal{W}$, $R_{s-1}(f,\varphi_{s-1}(f)) = \|e_{s-1}(f)\|_X$. It is then direct to deduce that 
\begin{align*}
&\left| R_{s-1}\big(f,\mathcal{G}_{\theta_s}(f)\big)
      - \|e_{s-1}(f)\|_X \right|
=
\left| R_{s-1}\big(f,\mathcal{G}_{\theta_s}(f)\big)
      - R_{s-1}\big(f,\varphi_{s-1}(f)\big) \right| 
=
\left| R_{s-1}\big(f,\mathcal{G}_{\theta_s}(f) - \varphi_{s-1}(f)\big) \right| \\[4pt]
=& \left| F\big(f,\mathcal{G}_{\theta_s}(f) - \varphi_{s-1}(f)\big)-a\big(\mathcal{G}_{\theta_{s-1}}(f),\mathcal{G}_{\theta_s}(f) - \varphi_{s-1}(f)\big) \right|\\
=&
 \left| a\big(\mathcal{G}(f),\mathcal{G}_{\theta_s}(f) - \varphi_{s-1}(f)\big)-a\big(\mathcal{G}_{\theta_{s-1}}(f),\mathcal{G}_{\theta_s}(f) - \varphi_{s-1}(f)\big) \right|
=\left|a\big(e_{s-1}(f),\, \mathcal{G}_{\theta_s}(f) - \varphi_{s-1}(f)\big) \right|.
\end{align*}
By the Cauchy–Schwarz inequality and \eqref{eq:error_decay_pointwise}, we find
\begin{align*}
\left| R_{s-1}\big(f,\mathcal{G}_{\theta_s}(f)\big)
      - \|e_{s-1}(f)\|_X \right|
=&\left|a\big(e_{s-1}(f),\, \mathcal{G}_{\theta_s}(f) - \varphi_{s-1}(f)\big) \right|\\
\le&
\|e_{s-1}(f)\|_X \,
\|\mathcal{G}_{\theta_s}(f) - \varphi_{s-1}(f)\|_X < \tau \, \|e_{s-1}(f)\|_X ,
\end{align*}
Thus,
\[
(1-\tau)\, \|e_{s-1}(f)\|_X<R_{s-1}\big(f,\mathcal{G}_{\theta_s}(f)\big) < (1+\tau)\, \|e_{s-1}(f)\|_X,
\]
and integrating over $\mathcal{W}$ immediately yields \eqref{eq:error_bound_posterior}.
\end{proof}

\section{Application to convolution problems}\label{sec:5}

In this section, we apply the proposed DCNO and DGNO methods to solve the convolution problem \eqref{eq:convolution}.
We first detail the implementation of the methods for the convolution operator, followed by a series of numerical experiments to validate and compare accuracy and efficiency. Extensions to the multi-input case are made in the end.
All experiments\footnote{Code is available at {https://github.com/wzhy0777/DCNO-DGNO}} are conducted using a single CPU with 32 GB RAM for CPU-based methods and a single NVIDIA A800-PCIE-80GB GPU for GPU-based methods.

\subsection{Implementation of DCNO \& DGNO on convolution}\label{sec:2.3} 
Focusing on the convolution problem \eqref{eq:convolution}: $\phi(\bx) = \mathcal{G}(\rho)(\bx) = (K*\rho)(\bx)$, 
the target operator $\mathcal{G}$ maps the input density $\rho$ to the solution potential $\phi$. The generality of $\mathcal{G}$ can separate the use of DCNO from DGNO. When $\mathcal{G}$ does not have a precise (local) PDE formulation, then DGNO might not be applicable. In such a case, we would employ DCNO for supervised training using data pairs $\{(\rho^{(b)}, \phi^{(b)})\}_{b=1}^{N_b}$ that could be generated by traditional numerical convolution solvers. 
By sampling the input density function $\rho^{(b)}$, the solver produces accurate enough 
$\phi^{(b)}=\mathcal{G}(\rho^{(b)})$.  
After the training dataset is constructed, the learning of the convolution operator via DCNO can be performed as in Algorithm \ref{alg:DCNO}.

When $\mathcal{G}$ can be expressed through a proper PDE, i.e., \eqref{eq:strong form}, 
DGNO becomes available and its unsupervised training
can certainly be rewarded with more efficiency. 
The task of learning the operator $f \mapsto u$ in \eqref{eq:strong form} then follows the DGNO procedure in Algorithm~\ref{alg:DGNO}.
If the original convolution problem \eqref{eq:convolution} is imposed in the whole space: $\phi(\bx) = \int_{\mathbb{R}^d} K(\bx,\tilde\bx) \rho(\tilde\bx) \, d\tilde\bx$, 
the resulting equivalent whole PDE problem, for computational reasons, usually needs to be truncated onto a bounded domain $\Omega$ with appropriate/approximate boundary conditions. Careless truncations may cause loss of accuracy or require a very large domain size. 
To enlarge computational efficiency in such a case, we here combine with a moment-matching strategy \cite{liu2026fast} briefly presented below, which can accurately approximate the whole-space convolution as a local PDE with zero boundary conditions in a bounded domain. 

The moment-matching strategy begins with a decomposition \cite{bao2015computing,liu2026fast} on the density: $\rho=\rho_{\mathrm{mm}}+f$ with $\rho_{\mathrm{mm}}$ constructed for moment-matching and $f$ the residual. Here, $\rho_{\mathrm{mm}}$ is assumed to be 
	$\rho_{\mathrm{mm}}(\mathbf{x}) = \sum_{|\boldsymbol{\alpha}|=0}^{m} \gamma_{\boldsymbol{\alpha}} \partial^{\boldsymbol{\alpha}} G(\mathbf{x}),$ with  $G(\mathbf{x}) = (2\pi\gamma^2)^{-d/2} \exp(-\frac{|\mathbf{x}|^2}{2\gamma^2}),$
where $\boldsymbol{\alpha}$ the multi-index and the coefficients $\{\gamma_{\boldsymbol{\alpha}}\}$ are determined by imposing the moment-matching condition  up to order $m$:
\begin{equation}\label{eq:moment_matching}
	\int_{\mathbb{R}^d} \rho_{\mathrm{mm}}(\mathbf{x}) \mathbf{x}^{\boldsymbol{\alpha}} \, d\mathbf{x} = \int_{\mathbb{R}^d} \rho(\mathbf{x}) \mathbf{x}^{\boldsymbol{\alpha}} \, d\mathbf{x}, \quad \text{for } |\boldsymbol{\alpha}| = 0, 1, \dots, m.
\end{equation} 
Conditions in \eqref{eq:moment_matching} read equivalently as a linear system:
$
	\sum_{|\boldsymbol{\alpha}|=0}^{m}
\gamma_{\boldsymbol{\alpha}}\int_{\mathbb{R}^d} \partial^{\boldsymbol{\alpha}} G(\mathbf{x}) \mathbf{x}^{\boldsymbol{\beta}} \, d\mathbf{x} = \int_{\mathbb{R}^d} \rho(\mathbf{x}) \mathbf{x}^{\boldsymbol{\beta}} \, d\mathbf{x},$  for $|\boldsymbol{\beta}| = 0, 1,\dots, m,$ 
which admits an analytical solution \cite{liu2026fast}. 
The advantage of this construction is that  $\phi_{\mathrm{mm}} = K * \rho_{\mathrm{mm}}$ can be analytically evaluated and for the residual $u=\phi-\phi_{\mathrm{mm}} = K * f$, the condition \eqref{eq:moment_matching} ensures a rapid decay of $f$ at infinity. This in turn can lead to a rapid far-field decay also on $u$ for $K$ satisfying suitable far-field asymptotic behavior \cite{liu2026fast}, which is the case for Green's functions of common elliptic operators like the Laplacian. 
Then, the unbounded-domain problem can be truncated to a $\Omega$ not too large and $u$ can be accurately approximated as the solution of
\begin{equation}
	\label{eq:bvp_for_phi2}
	\begin{cases}
		\mathcal{L}u(\mathbf{x}) = f(\mathbf{x}), & \mathbf{x}\in\Omega,\\
		u(\mathbf{x}) = 0, & \mathbf{x}\in\partial\Omega.
	\end{cases}
\end{equation}
Coming back to the convolution operator learning task, with the sampled training set $\{\rho^{(b)}\}_{b=1}^{N_b}$, now under this strategy, we first compute the corresponding $\{f^{(b)}=\rho^{(b)}-\rho^{(b)}_{\mathrm{mm}}\}_{b=1}^{N_b}$ as inputs associated with \eqref{eq:bvp_for_phi2} and then 
DGNO in Algorithm~\ref{alg:DGNO} can be applied for $\mathcal{G}_S(f)\approx u$. Thus, for any given $\rho$, the final prediction for the convolution outcome $\phi$ under our method is $\phi_{\mathrm{mm}}+\mathcal{G}_S(f)$.

By the structure of DeepONet \eqref{DeepONet}, 
the trained $\mathcal{G}_S$ from DCNO or DGNO can give the predicted solution at $N\in\mathbb{N}_+$ arbitrary spatial points with computational complexity $\mathcal{O}(N)$. In comparison, most existing traditional solvers such as the hierarchical matrix technique \cite{Hierarchicalmatrices}, the Gaussian-Summation method \cite{GauSum} and the kernel truncation method \cite{Liu2024,KTM} cost $\mathcal{O}(N\log N)$ for the prescribed $N$ grid points. For a new spatial point outside the prescribed grid, traditional solvers will need to do extra delicate interpolations, e.g., the NUFFT \cite{NUFFT}, if one wants to maintain high accuracy. For the traditional method SOE \cite{zhang2021fast}, which is specially designed to achieve $\mathcal{O}(N)$ cost for one-dimensional convolution, we will provide a numerical comparison below.


\subsection{Numerical experiments}
We now carry out some numerical experiments to illustrate the performance of DCNO and DGNO. The involved DeepONet \eqref{DeepONet} 
uses identical depth $L$ and width $W$ for both branch and trunk networks, with the same number of neurons in each hidden layer for simplicity.

\begin{example}[Poisson potential]\label{Ex:Poisson}
	We begin with a 2D Poisson potential in \eqref{eq:convolution}, i.e., $d=2,\ \mathbf{x}=(x,y)\in\mathbb{R}^2$ and the convolution kernel
	$
	K(x,y) = -\frac{1}{2\pi} \ln \sqrt{x^2 + y^2}. 
	$
	We consider the density function of form
	$
	\rho(x, y) = \mathrm{e}^{-(x^2 + y^2)/\alpha^2}
	$ with $\alpha>0$, and 
	the corresponding analytical solution of this problem is given by
	$$
	\phi(x, y)
	= -\frac{\alpha^2}{4}\left[-\mathrm{Ei}\!\left(-\frac{x^2 + y^2}{\alpha^2}\right) 
	+ \ln(x^2 + y^2)\right],\quad \mbox{satifying}\ -\Delta \phi=\rho\ \mbox{or}\ \phi=K*\rho\ \mbox{on}\ \mathbb{R}^2.
	$$
	Here, $\mathrm{Ei}(r)= \int_{-\infty}^{r} t^{-1} \mathrm{e}^{t} \mathrm{d}t$.
	Note that the limiting value at the origin is finite:
	$
	\phi(0, 0) = \frac{\alpha^2}{4}\left(\gamma_E - 2\ln\alpha\right)
	$
	with $\gamma_E \approx 0.5772156649$ the Euler--Mascheroni constant.
	Our goal is to learn the operator $\mathcal{G}: \rho \mapsto \phi$ and the computational domain is set as $\Omega = (-I, I)^2$ with $I=12$.
\end{example}

\begin{figure}[h!]
	\centerline{
		\psfig{figure=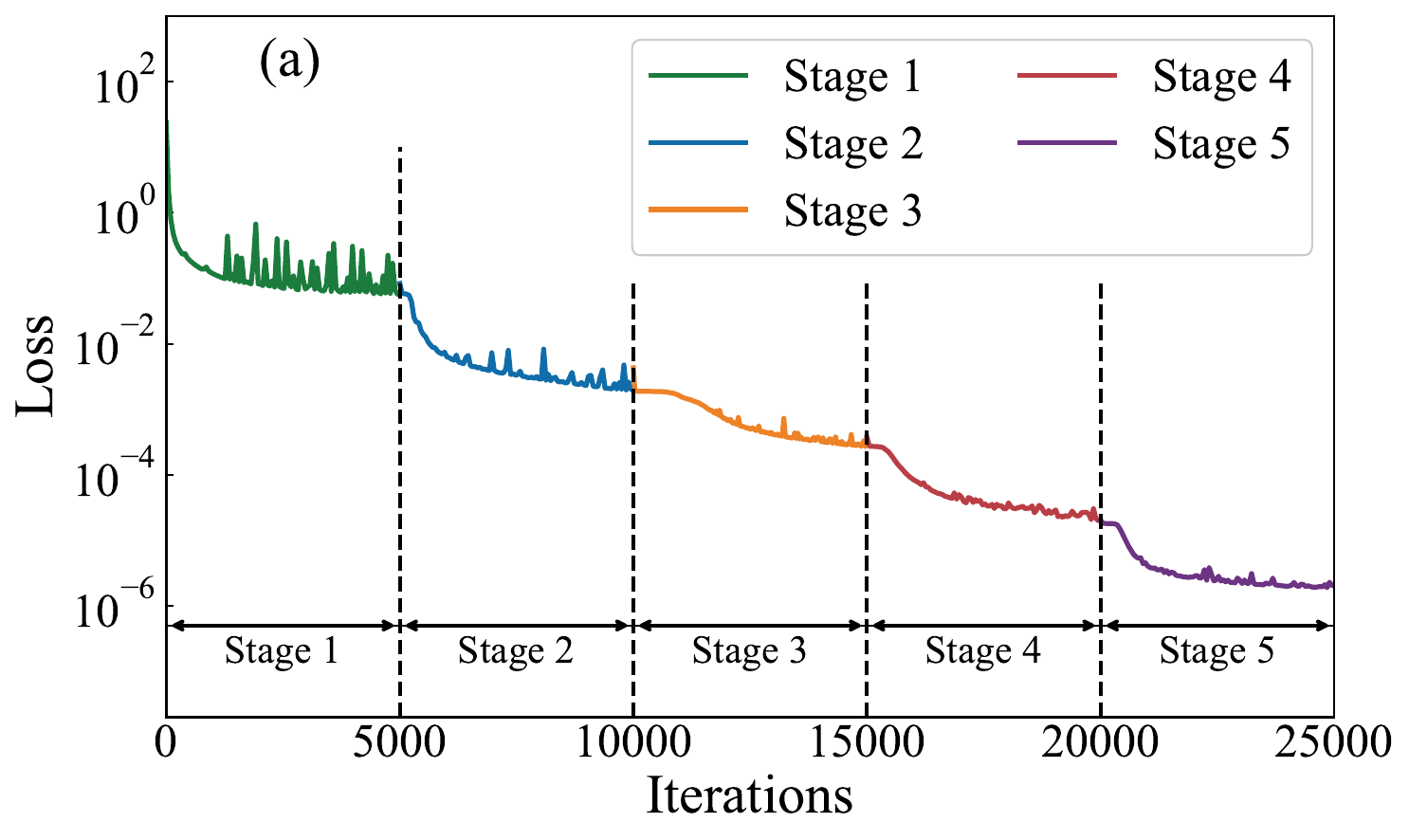,width=3in,height=1.8in,angle=0}
		\psfig{figure=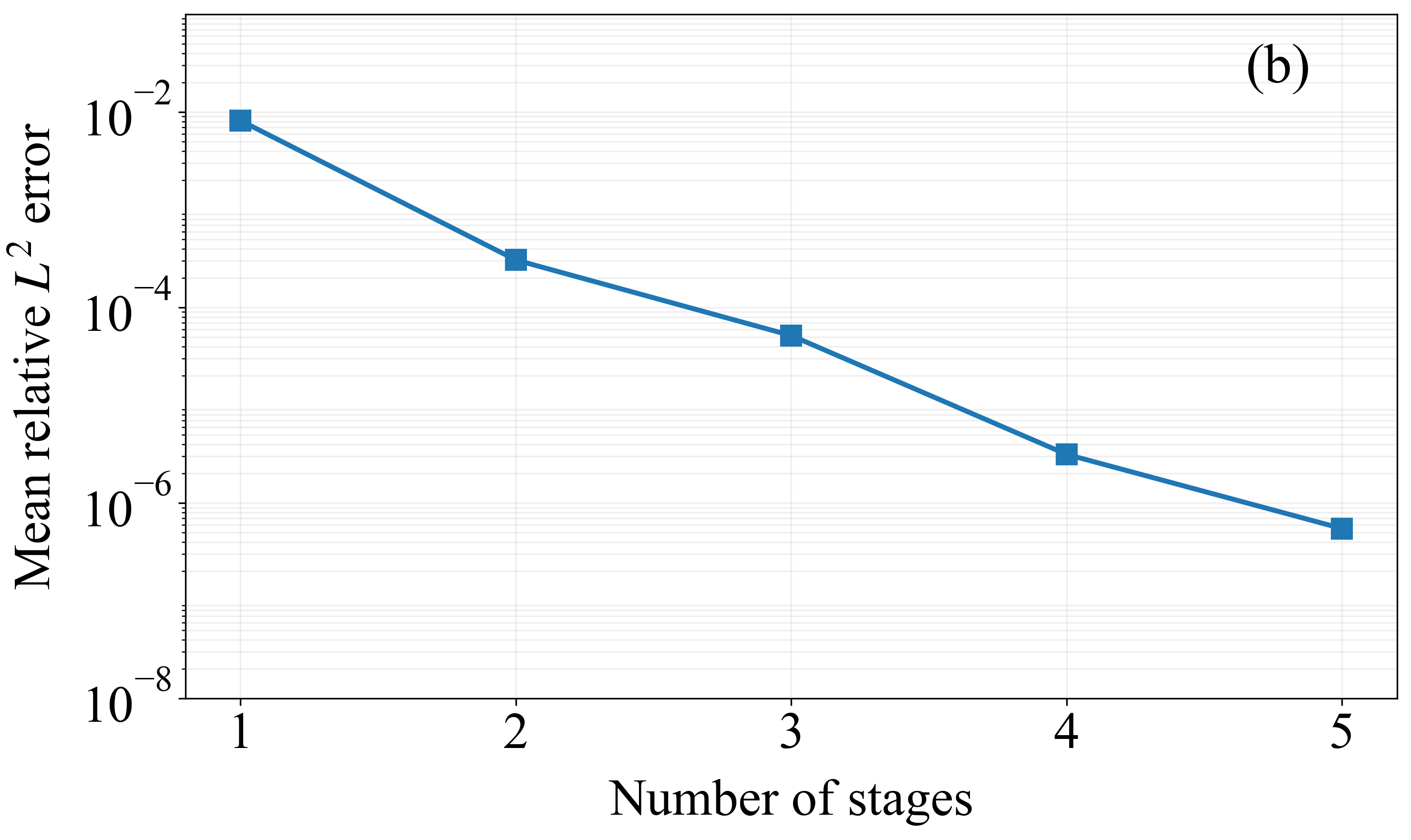,width=3in,height=1.8in,angle=0}
	}
	
	\caption{Results of DCNO in Algorithm \ref{alg:DCNO} for Example \ref{Ex:Poisson}: (a) Change of Loss$=\beta_{s-1}\sqrt{\text{Loss}_s}$ during the training; (b) Mean relative $L^2$ error with respect to the number of stages on test set.}
	\label{fig:poisson_supervised_error}
\end{figure}

\begin{table*}[htbp]
	\centering
	\caption{Relative $L^2$-errors obtained by DCNO and DGNO under different network architectures.}
	\label{tab:network_architecture}
	\begin{tabular}{c|cccc|cccc}
		\hline
		&
		\multicolumn{4}{c|}{DCNO}
		&
		\multicolumn{4}{c}{DGNO}
		\\
		\cline{2-9}
		&
		$W_0=10$ & $W_0=30$ & $W_0=50$ & $W_0=70$
		&
		$W_0=10$ & $W_0=30$ & $W_0=50$ & $W_0=70$
		\\
		\hline
		$L=1$
		& 1.75$\,e\!-\!3$
		& 1.81$\,e\!-\!3$
		& 1.20$\,e\!-\!3$
		& 5.37$\,e\!-\!4$
		& 2.99$\,e\!-\!3$
		& 2.15$\,e\!-\!3$
		& 1.27$\,e\!-\!3$
		& 6.70$\,e\!-\!4$
		\\
		
		$L=2$
		& 1.94$\,e\!-\!5$
		& 1.80$\,e\!-\!5$
		& 1.15$\,e\!-\!5$
		& 9.46$\,e\!-\!6$
		& 5.80$\,e\!-\!5$
		& 3.17$\,e\!-\!5$
		& 2.38$\,e\!-\!5$
		& 1.62$\,e\!-\!5$
		\\
		
		$L=3$
		& 4.01$\,e\!-\!6$
		& 3.17$\,e\!-\!6$
		& 1.48$\,e\!-\!6$
		& 8.29$\,e\!-\!7$
		& 1.68$\,e\!-\!5$
		& 6.03$\,e\!-\!6$
		& 3.47$\,e\!-\!6$
		& 9.20$\,e\!-\!7$
		\\
		
		$L=4$
		& 1.61$\,e\!-\!6$
		& 1.24$\,e\!-\!6$
		& 7.04$\,e\!-\!7$
		& 5.47$\,e\!-\!7$
		& 1.43$\,e\!-\!5$
		& 3.10$\,e\!-\!6$
		& 1.89$\,e\!-\!6$
		& 6.17$\,e\!-\!7$
		\\
		\hline
	\end{tabular}
\end{table*}

\begin{figure}[h!]
	\centerline{
		\psfig{figure=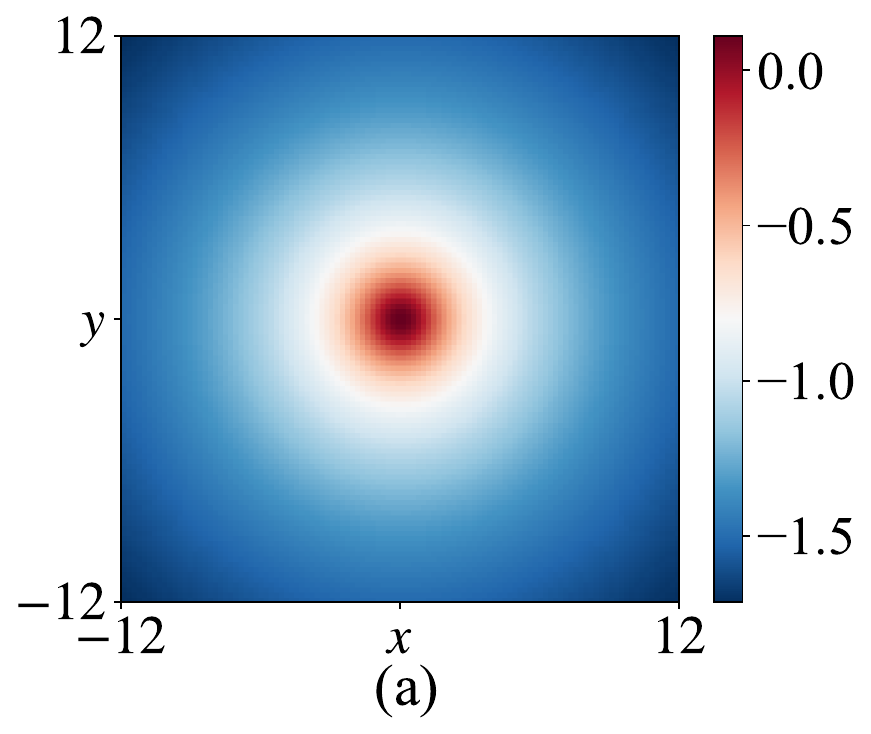,width=1.9in,height=1.5in,angle=0}
		\hspace{0in}
		\psfig{figure=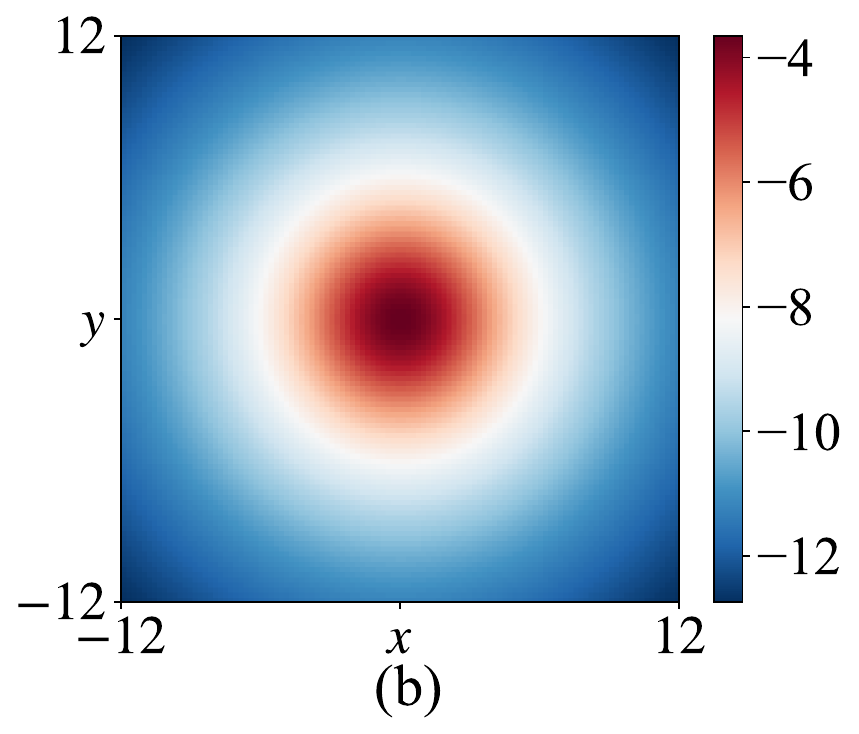,width=1.9in,height=1.5in,angle=0}
		\hspace{0in}
		\psfig{figure=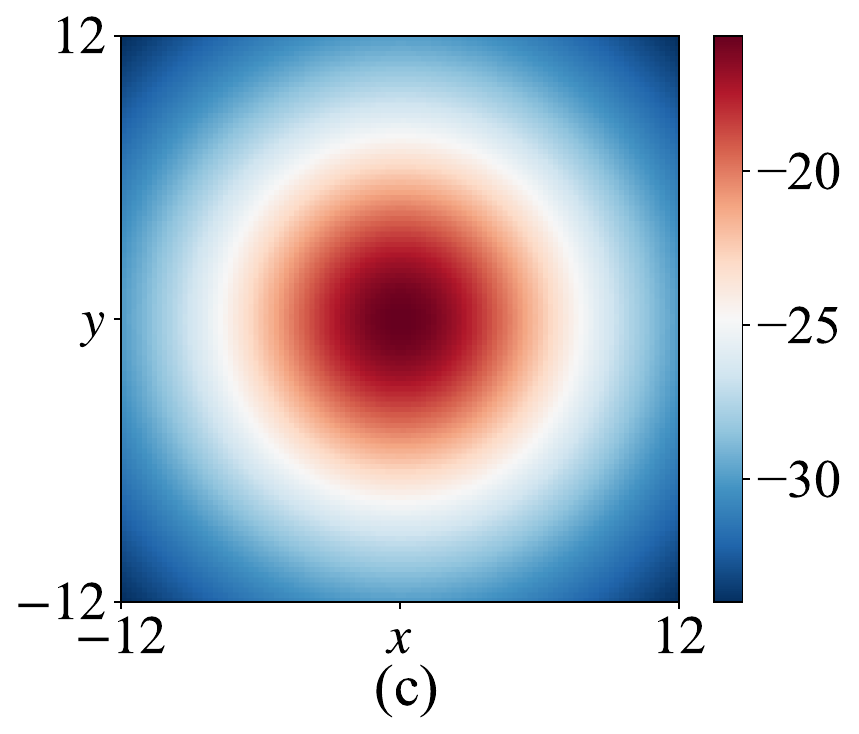,width=1.9in,height=1.55in,angle=0}
		
	}
	\centerline{
		\psfig{figure=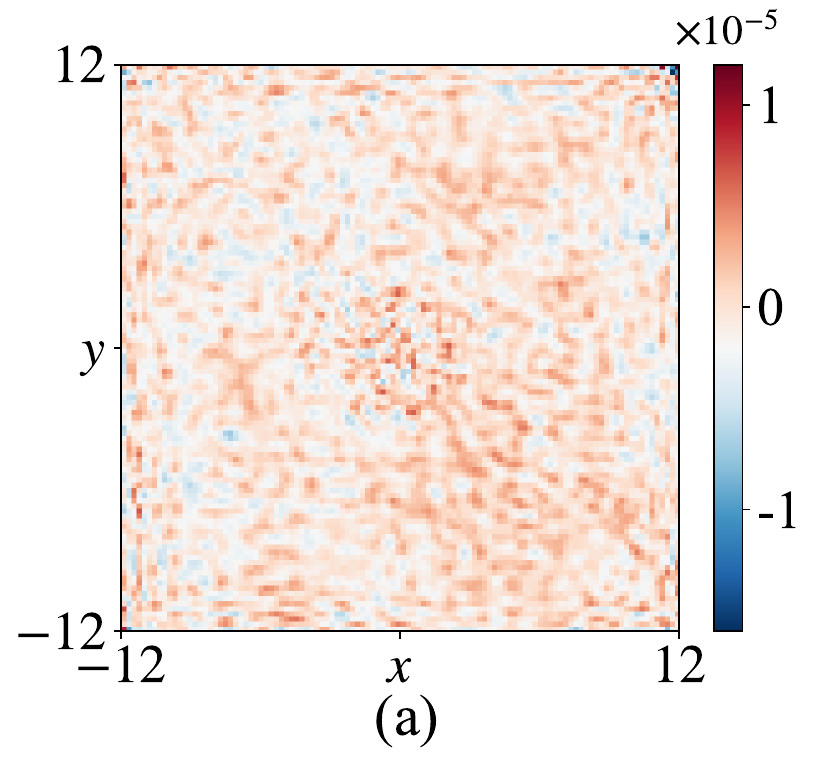,width=1.9in,height=1.5in,angle=0}
		\hspace{0in}
		\psfig{figure=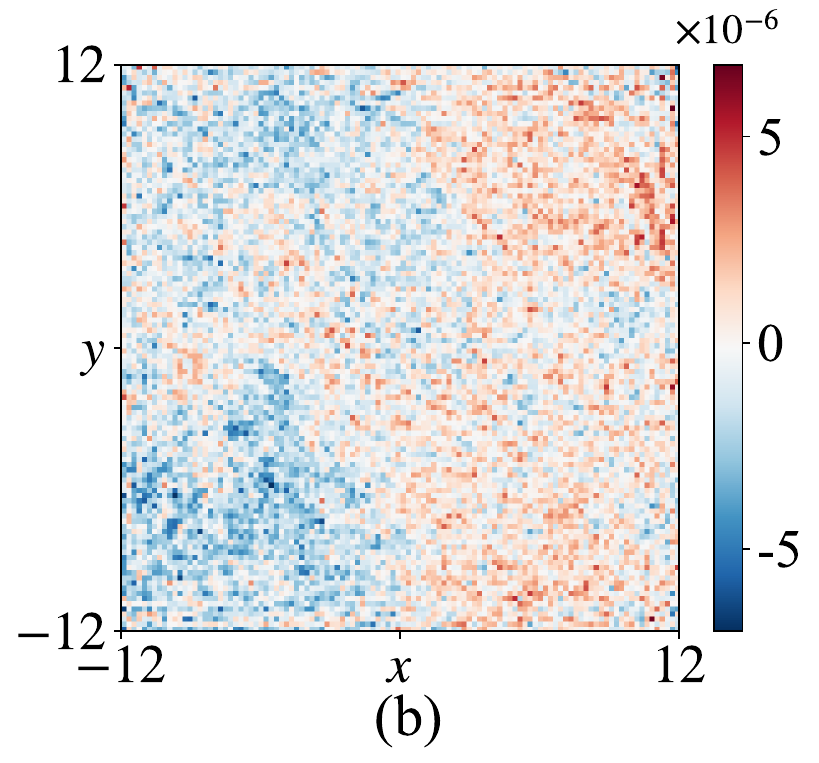,width=1.9in,height=1.5in,angle=0}
		\hspace{0in}
		\psfig{figure=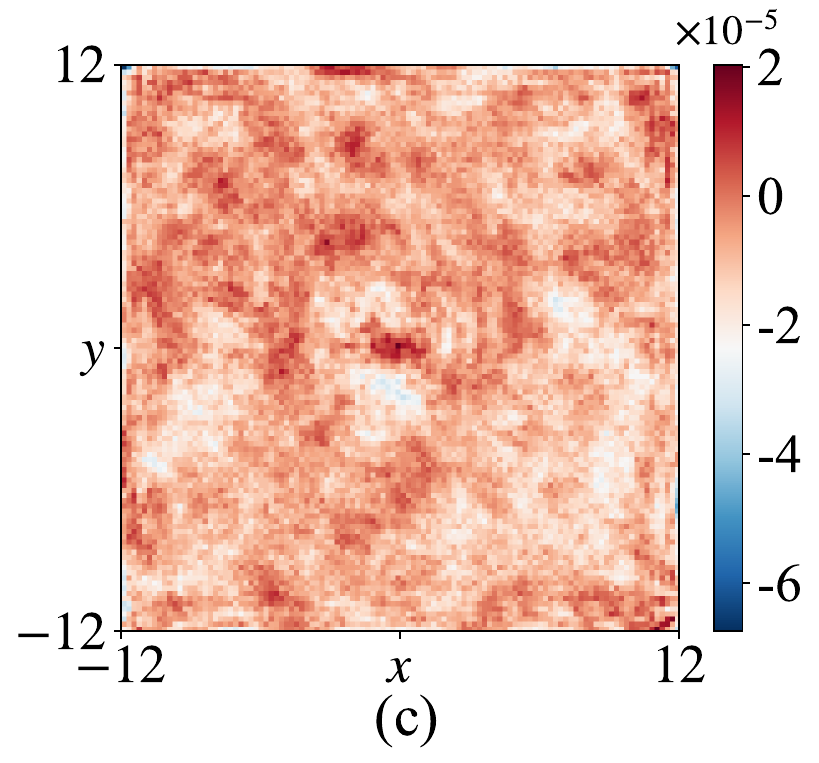,width=1.9in,height=1.55in,angle=0}
		
	}

	\caption{Numerical solution (1st row) and pointwise error (2nd row) of DCNO for Example \ref{Ex:Poisson} on three test samples (a) $\alpha=1$; (b) $\alpha=3$; (c) $\alpha=5$.}\label{fig:poisson_supervised_solution}
\end{figure}

\begin{figure}[t!]
	\centerline{
		\psfig{figure=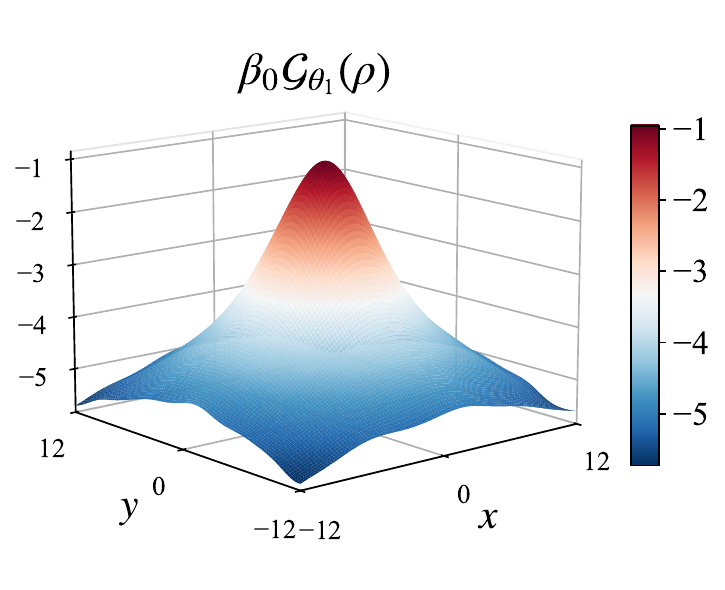,width=2in,height=1.7in,angle=0}
		\hspace{0in}
		\psfig{figure=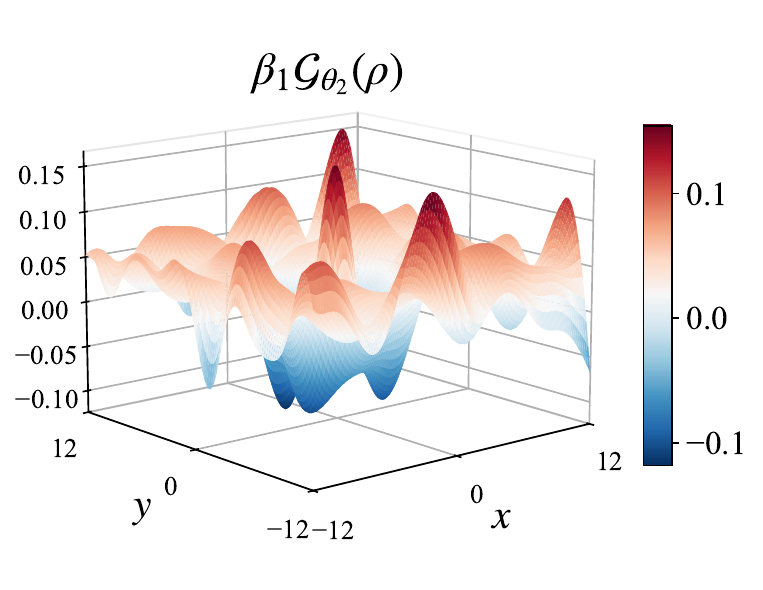,width=2in,height=1.7in,angle=0}
		\hspace{0in}
		\psfig{figure=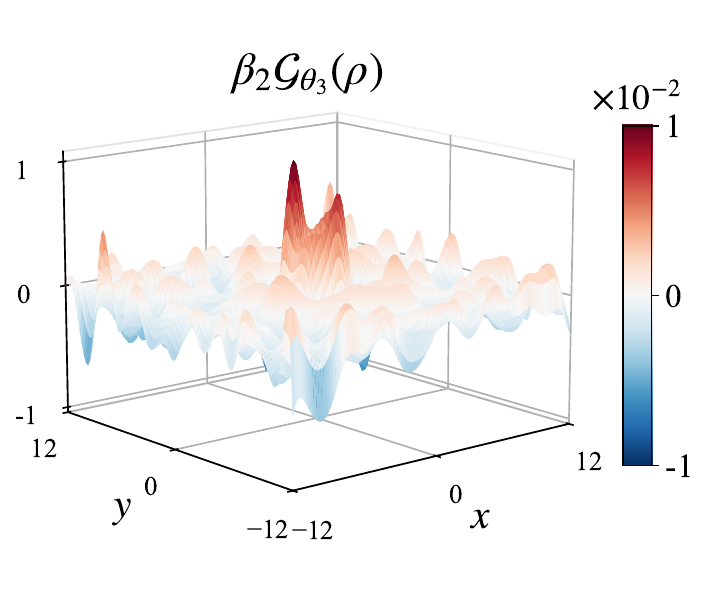,width=2in,height=1.7in,angle=0}
		
	}
	\centerline{
		\psfig{figure=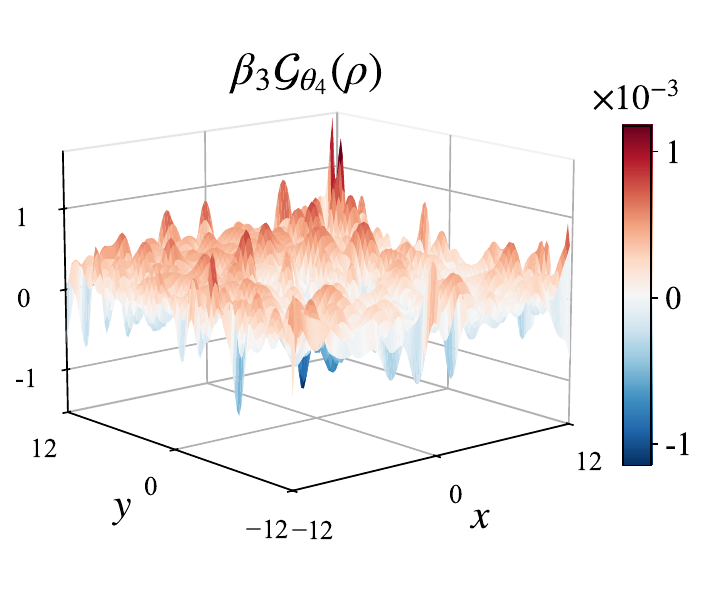,width=2in,height=1.7in,angle=0}
		\hspace{0.1in}
		\psfig{figure=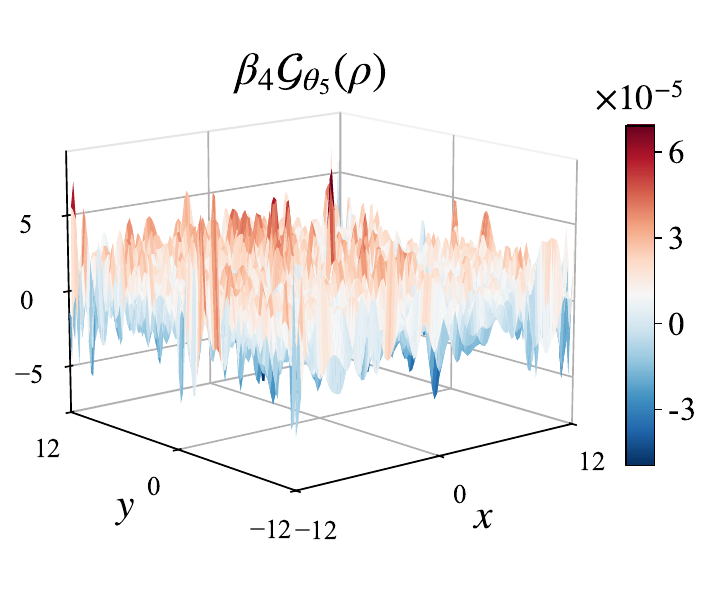,width=2in,height=1.7in,angle=0}
	}
	
	\caption{Basis functions generated by the basis operators of DCNO method.}\label{fig:basis function}
\end{figure}

We first test the performance of DCNO in the supervised scenario by applying Algorithm \ref{alg:DCNO} to Example \ref{Ex:Poisson}.
For training, a dataset $\{ (\rho^{(b)}, \phi^{(b)}) \}_{b=1}^{N_b}$ is generated with samples $N_b=100$, where each parameter $\alpha^{(b)}$ of $\rho^{(b)}$ is independently sampled from the uniform distribution $\mathcal{U}[1,5]$. 
Motivated by the strategy in \cite{GNN,MLNN,MSNN,DCM}, we take the width parameters $W$ and $p$ of the DeepONet \eqref{DeepONet} in a stage-dependent manner: $W = W_0 + 20\,(s-1)$ and $p = 120 + 20\,(s-1)$, where $s$ denotes the stage index.
The base width is selected as $W_0 \in \{10,30,50,70\}$, and the network depth is chosen as $L \in \{1,2,3,4\}$.
The input coordinates of the trunk network are sampled from a uniform spatial grid defined by
$x_k = -I+  2kI/N_x, k = 1, \dots, N_x$, $y_k = -I+  2kI/N_y, k = 1, \dots, N_y$, where $N_x=N_y=100$  is used during training and a finer grid with $N_x=N_y=150$ is adopted for testing error.
The activation function is chosen as $\sigma=\sin$, and the learning rate is set to $5 \times 10^{-4}$.
With $L=4$ and $W_0=70$ fixed, the stopping criterion in Algorithm \ref{alg:DCNO} is set to $\epsilon = 2 \times 10^{-6}$. 
Fig.~\ref{fig:poisson_supervised_error}(a) shows the evolution of $\beta_{s-1}\sqrt{\text{Loss}_s}$ during training at each stage, with a decay observed as the number of training iterations increases. 

For testing, 50 samples disjoint from the training set are randomly selected. 
The corresponding solutions are predicted by DCNO, and the performance is evaluated by the mean relative $L^2$ error over all samples in the test set.
The errors of the numerical solution from DCNO under different depth and width are presented in Table \ref{tab:network_architecture}. It can be observed that increasing the network width and depth consistently reduces the error, indicating improved approximation of the numerical solution.
With $L=4$ and $W_0=70$ fixed, Fig.~\ref{fig:poisson_supervised_error}(b) illustrates a clear decrease in the test error as the number of basis operators increases, reaching eventually $5.47\times 10^{-7}$ which is near the machine precision under single-precision floating-point arithmetic. 
Notably, without the progressive construction of basis operators employed in DCNO, the accuracy at stage 1 corresponds to that of the standard DeepONet, which is only $8.17\times10^{-3}$. This demonstrates the significant advantage of DCNO in improving the accuracy of operator learning. 
Fig.~\ref{fig:poisson_supervised_solution} shows the DCNO numerical solutions for several test samples, along with the corresponding pointwise error distributions.
As observed, DCNO method can accurately predict the potential $\phi$ across  the entire computational domain.
For a fixed $\rho(x, y) = \mathrm{e}^{-(x^2 + y^2)/{4}}$, Fig.~\ref{fig:basis function} visualizes the basis functions $\{\mathcal{G}_{\theta_j}(\rho)\}_{j=1}^{5}$ generated by the basis operators. It is observed that, as the stage index $s$ increases, the basis functions tend to capture higher-frequency components.

Also, we test the robustness of DCNO in the presence of noisy training data here to validate the theoretical result \eqref{eq:error_bound_noise}. 
Independent Gaussian noises with relative noise levels ranging from $10^{-6}$ to $10^{-1}$ are added to the exact solution data, which serve as training labels. 
With $L=4$ and $W_0=70$ fixed and the total number of stages set to $S=5$, all other training settings remain the same as before. 
The left plot of Fig.~\ref{fig:poission_error_noise} presents the test error of DCNO versus the noise level. 
The results indicate that the prediction error increases monotonically with the noise amplitude, while remaining consistently below the noise level across all tested regimes spanning several orders. This demonstrates that DCNO achieves robust operator approximation under noisy supervision. 
For illustration, a Gaussian noise with level $0.001$ is shown in the middle plot of Fig.~\ref{fig:poission_error_noise},  together with the corresponding pointwise error of the predicted solution for the test sample $\rho(x,y)=\mathrm{e}^{-(x^2+y^2)}$ in the right plot. 
The outcomes in total support Theorem \ref{Prop:error_bound_all_Bochner}.

\begin{figure}[t!]

        \centering

\includegraphics[width=0.38\textwidth,height=3.9cm]{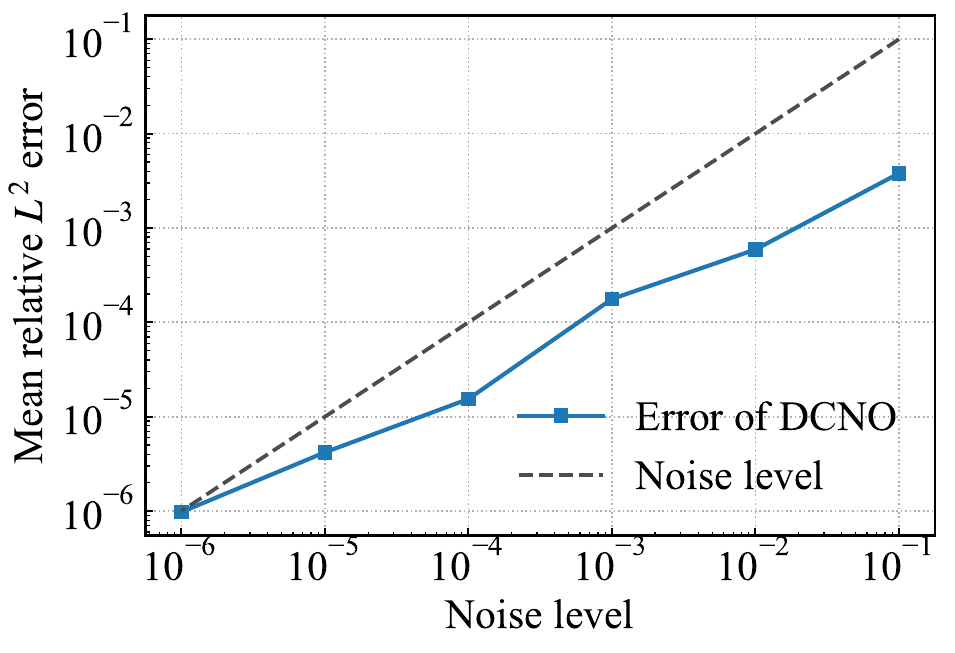}
\includegraphics[width=0.29\textwidth,height=4cm]{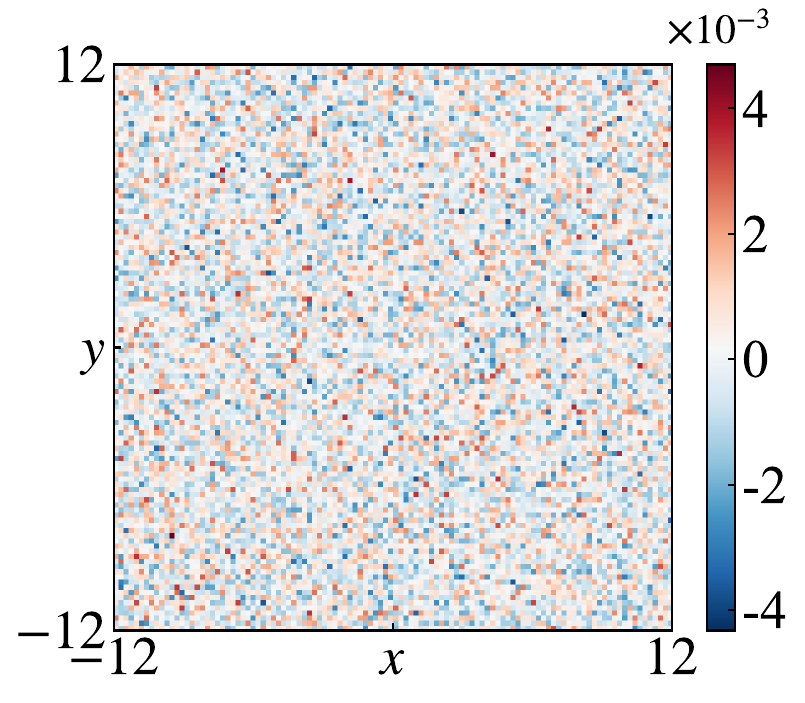}
\includegraphics[width=0.29\textwidth,height=4cm]{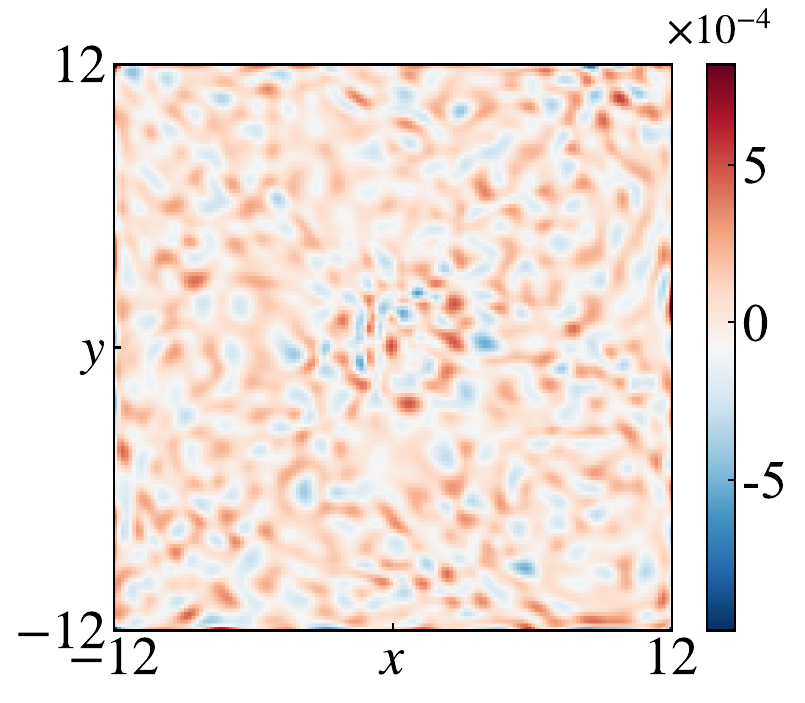}

	\caption{Relative $L^2$ error of DCNO under varying noise levels (left); A Gaussian noise sample with noise level $0.001$ (middle); Pointwise error of DCNO for Example \ref{Ex:Poisson} on a test sample with $\alpha=1$ (right).}
	\label{fig:poission_error_noise}
\end{figure}

\begin{figure}[t!]
	\centering
	\psfig{figure=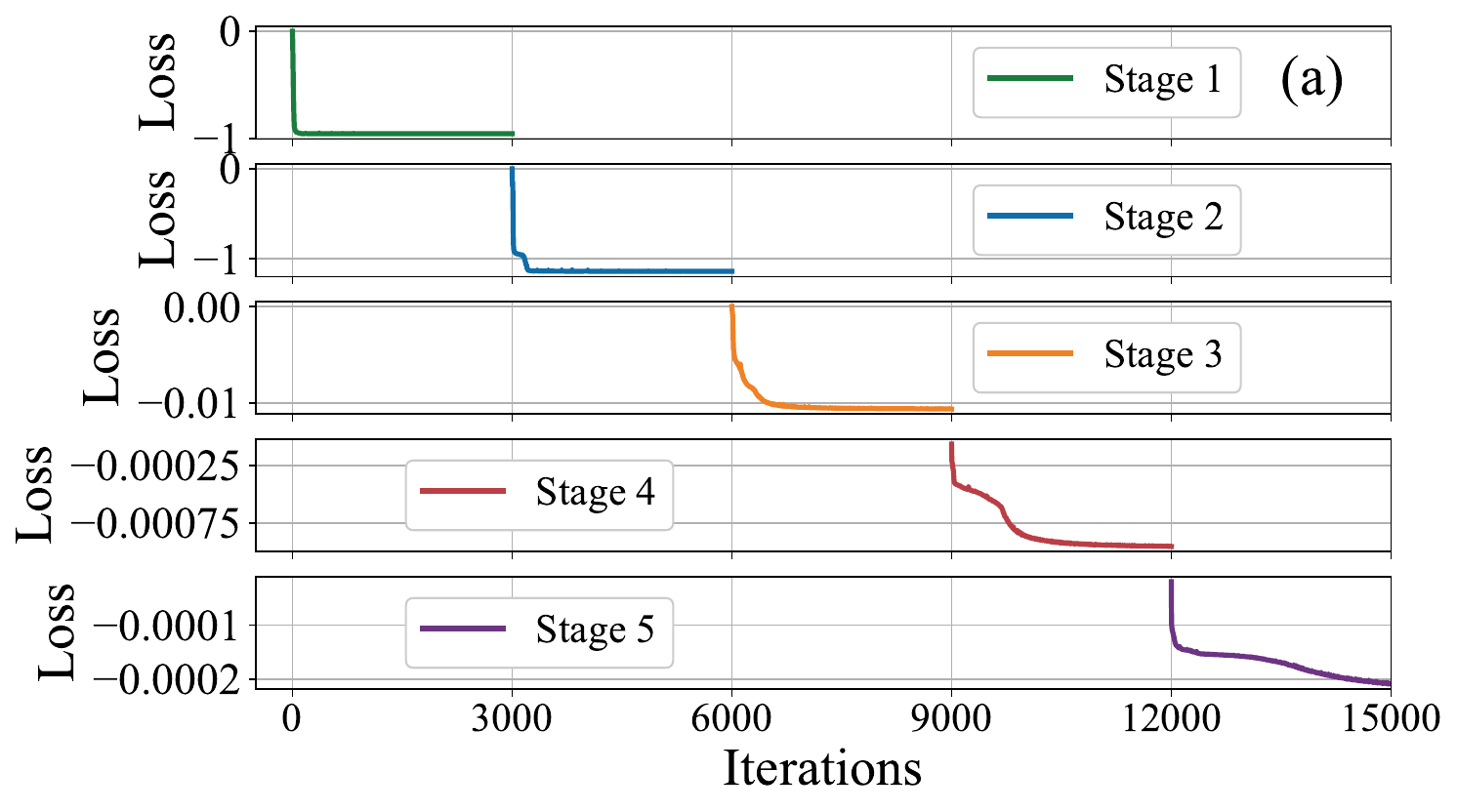,width=3in,height=1.8in,angle=0}
	\psfig{figure=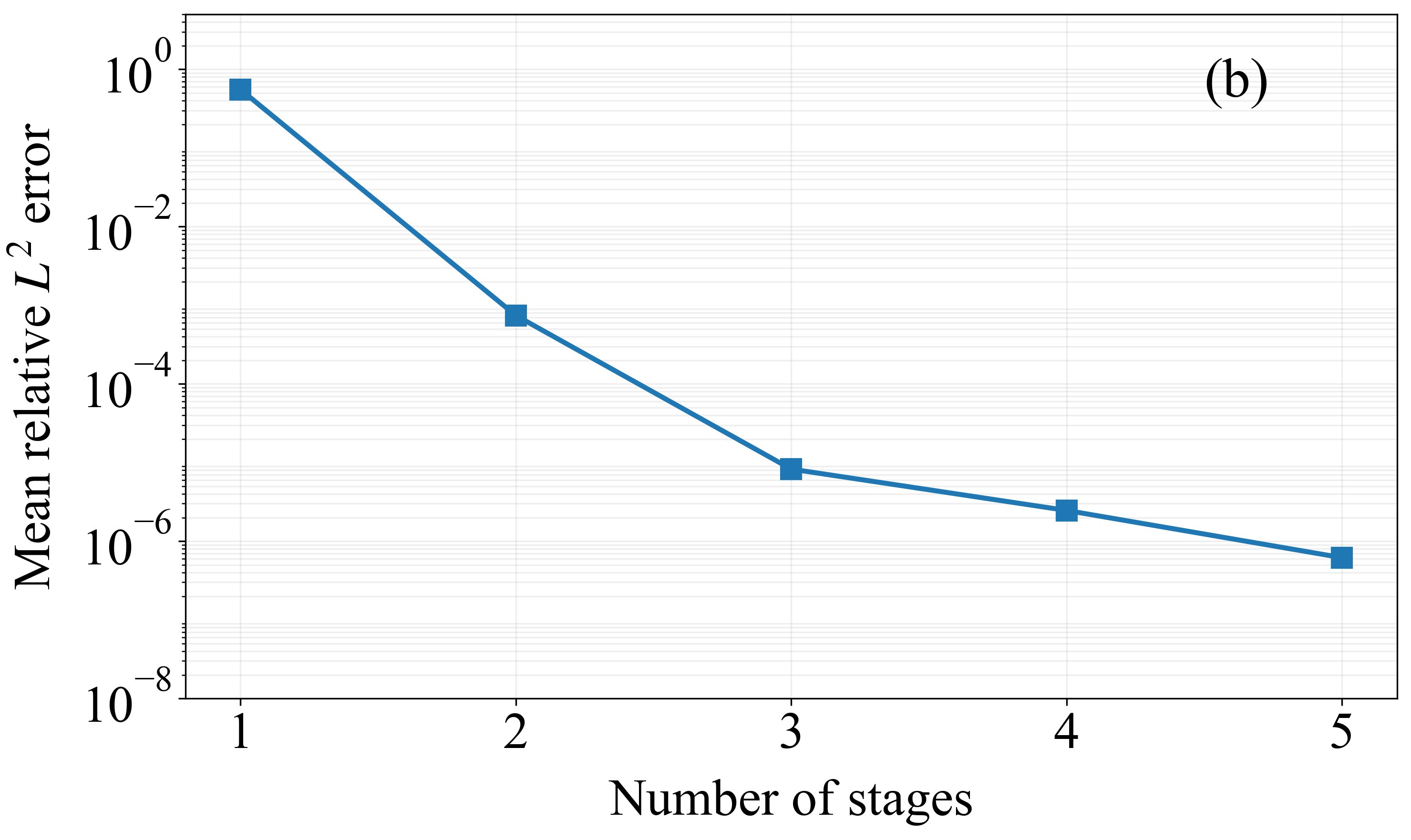,width=3in,height=1.8in,angle=0}
	\caption{ Results of DGNO in Algorithm \ref{alg:DGNO} for Example \ref{Ex:Poisson}: (a) Change of Loss during the iterations; (b) Mean relative $L^2$ error with respect to the number of stages.}
	\label{fig:poisson_unsupervised_error}
\end{figure}

 \begin{table}[t!] 	
	\centering 	
	\caption{Number of stages, epochs per stage, training time and error for DCNO and DGNO.} 	
	\label{tab:compare_su_un} 	
	\begin{tabular}{c|cccc} 		\hline 					
		& $\text{Stages}$  & \text{Epochs per stage} & $\text{Training time(s)}$  &  $\text{Mean relative } L^2 \, \text{error}$     \\ \hline 		
		supervised DCNO	   & 5 & 3000	& 251.23s  	& 1.31E-6  \\
		unsupervised DGNO  & 5 & 3000	& 491.60s  	& 6.17E-7  \\ \hline			
		
	\end{tabular}
\end{table}



We move on to test DGNO on Example \ref{Ex:Poisson}. 
The unsupervised DGNO needs only the input functions as the training set, which is adopted as the same $\{\rho^{(b)}\}_{b=1}^{N_b}$ with $N_b=100$ for training in DCNO. 
We set $\gamma=2$ and $m=4$ in the combined moment-matching strategy, and zero boundary conditions are encoded into DeepONet via $\mathcal{G}_{\theta_s}(f,x,y) =  (I^2 - x^2)(I^2 - y^2)  \sum_{k=1}^{p} \mathfrak{b}_k(f)\,\mathfrak{t}_k(x,y)$. 
The trunk input is discretized on Gauss–Legendre nodes with $N_x = N_y = 110$ in each spatial direction.
The total number of stages set to $S=5$, and all other training settings are kept the same as for DCNO above.
Fig.~\ref{fig:poisson_unsupervised_error}(a) shows the evolution of the loss function during training. In each stage, the loss decreases rapidly as the training iterations proceed.
In the testing phase, the same test set as that used for DCNO is adopted.
Table~\ref{tab:network_architecture} presents the errors obtained by DGNO for various network depths and widths. The results indicate a clear reduction in error with increasing network depth and width.
With $L=4$ and $W_0=70$ fixed, Fig.~\ref{fig:poisson_unsupervised_error}(b) demonstrates the reduction in the test error as the stage increases, with the final error $6.17\times10^{-7}$ also approaching the machine precision of single float. These results support Theorem \ref{thm:dgno}.



{Lastly, we compare the performance of DCNO and DGNO on Example \ref{Ex:Poisson}. The results are summarized in Table~\ref{tab:compare_su_un}, which presents the number of stages, epochs per stage, training time, and mean relative $L^2$ error for both approaches. 
It can be observed that, with the same number of stages and training epochs, DGNO achieves a lower generalization error, albeit at the cost of longer training time, which is due to the additional computational overhead of evaluating derivatives in its loss function. Overall, DGNO benefiting from its unsupervised feature is certainly more efficient.} Nevertheless, there are convolution problems that do not enjoy a PDE formulation, where DGNO is not applicable and DCNO becomes the solution. The following example is presented to illustrate this.


\begin{example}[Multiquadrics]\label{Ex:Multiquadrics}
Consider the multiquadric kernels $ K(x) = {1}/{\sqrt{x^{2} + a^{2}}}$ with $0 < a \ll 1 $, commonly used in electrostatic computation \cite{zhang2021fast,zhuang2013local,zhuang2017fast},  
 for a 1D convolution:
\[
\phi(x) = \int_{0}^{1} K(x-\tilde{x}) \rho(\tilde{x}) \, d\tilde{x}, \quad  x \in \Omega=(0, 1).
\]
It does not have a PDE formulation \cite{zhuang2013local,zhuang2017fast}, and
our input is the Gaussian-type density function $ \rho(x) = \mathrm{e}^{-\beta\left(x - \frac{1}{2}\right)^{2}}$ with $ \beta > 0$.
\end{example}

In this example, the parameter $a>0$ controls the regularization scale of the kernel, and
as $a \to 0$, the kernel approaches a weakly singular form.
For a fixed $\rho(x) = \mathrm{e}^{-2\left(x - \frac{1}{2}\right)^{2}}$, the corresponding solutions $\phi(x)$ for different $a$ are shown in the left plot of Fig.~\ref{fig:multiquadrics_a}. 
As $a$ decreases, the kernel becomes more singular at $0$, resulting in solutions with steeper variations near the boundary. 
We then set $a=0.0001 $ to examine DCNO in this near-singular regime.
The training data is provided by the traditional method SOE \cite{zhang2021fast}, generating a training set $\{ (\rho^{(b)}, \phi^{(b)}) \}_{b=1}^{N_b}$ containing $N_b=50$ samples, where $\rho^{(b)}$ is generated by uniformly sampling $ \beta \sim \mathcal{U}[1,5]$.
The trunk network is evaluated at collocation points defined on the uniform grid
$
x_k={k}/{N_x}, \, k=0,\ldots,N_x,
$
with $N_x=200$ for training and $N_x=300$ for testing.
The total number of stages is set to $S=4$, with $\sigma=\tanh$ and the depth of DeepONet fixed at $L=4$. 
The network width and $p$ are increased with the stage index $s$, i.e., $W = 70 + 10\,(s-1)$ and $p = 120 + 10\,(s-1)$.
For test, 100 samples that are disjoint from the training set are randomly selected to form the test set. The right plot of 
Fig.~\ref{fig:multiquadrics_a} shows the mean relative $L^2$ error on the test set, which decreases steadily as the number of basis operators increases with the stage index $s$, and eventually reaches $1.16 \times 10^{-6}$.
For the test sample $\rho(x)=\mathrm{e}^{-2\left(x-\frac{1}{2}\right)^2}$, Fig.~\ref{fig:multiquadrics_solution}(a) presents the exact and numerical solutions, while the corresponding pointwise error is shown in Fig.~\ref{fig:multiquadrics_solution}(b). The results demonstrate that DCNO can achieve high accuracy even for this near-singular setting. The mean relative $L^2$ errors of the approximate solution $\phi$ over the range of $a$ values are presented in Table~\ref{tab:linear_dt_error} to evaluate the robustness of DCNO. The results show that DCNO maintains consistently high accuracy across all tested values of $a$, indicating that DCNO is uniformly effective from the near-singular ($a = 10^{-4}$) regime to the regular ($a = 10^{-1}$) regime.

\begin{figure}[h!]
	\centering
	\psfig{figure=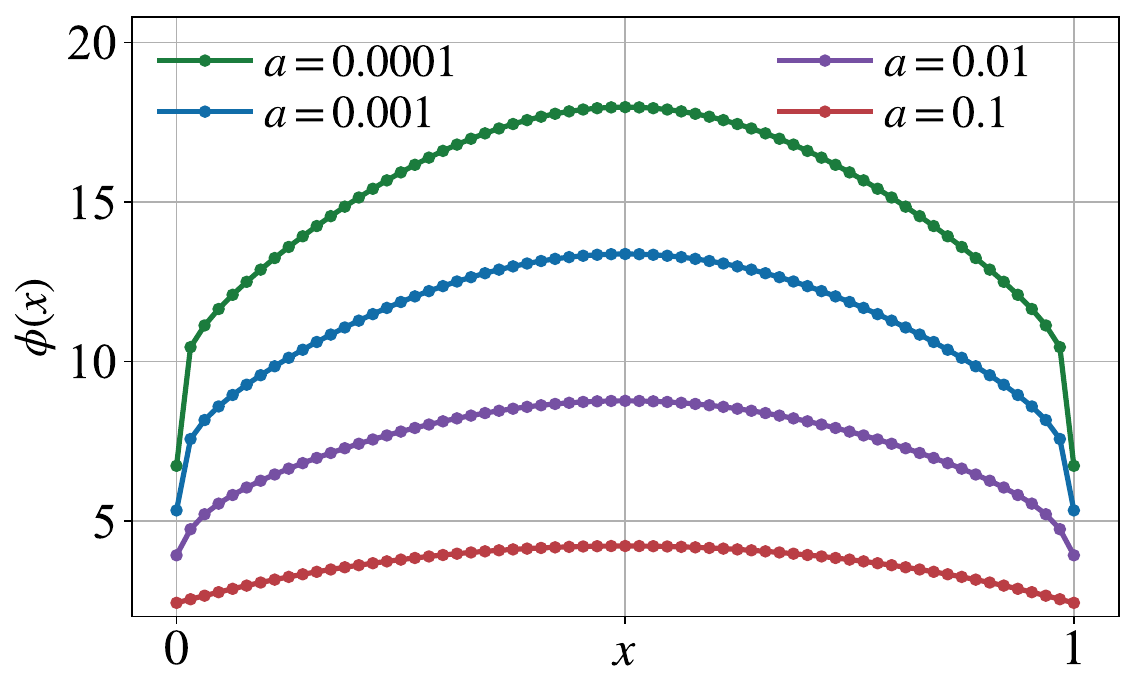,width=3in,height=1.8in,angle=0}
	\psfig{figure=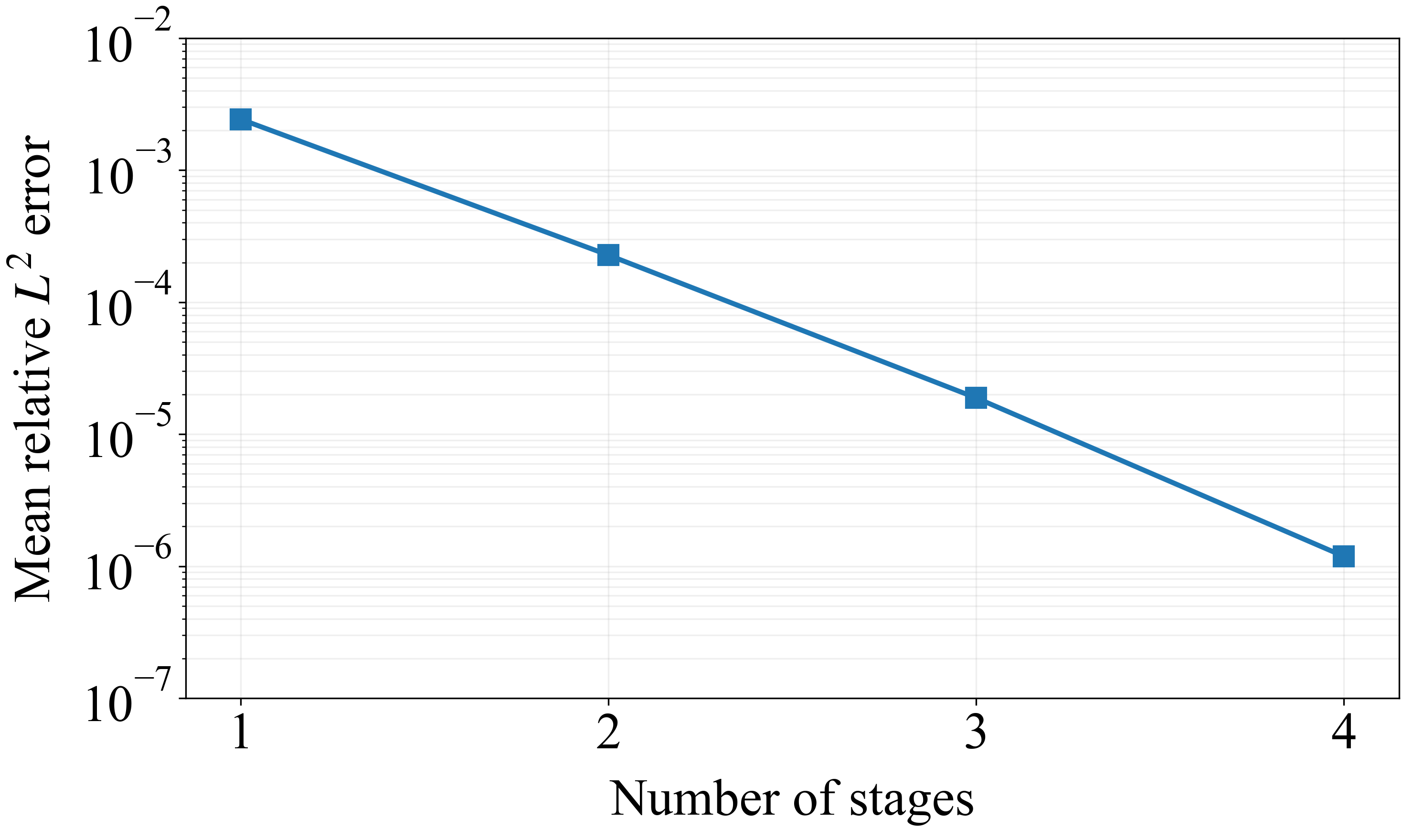,width=3in,height=1.8in,angle=0}
	\caption{Solutions for different values of the kernel parameter $a$ (left);  Mean relative $L^2$ error with respect to the number of stages for $a=0.0001 $ (right).}
	\label{fig:multiquadrics_a}
\end{figure}

\begin{figure}[h!]
	\centering
	\psfig{figure=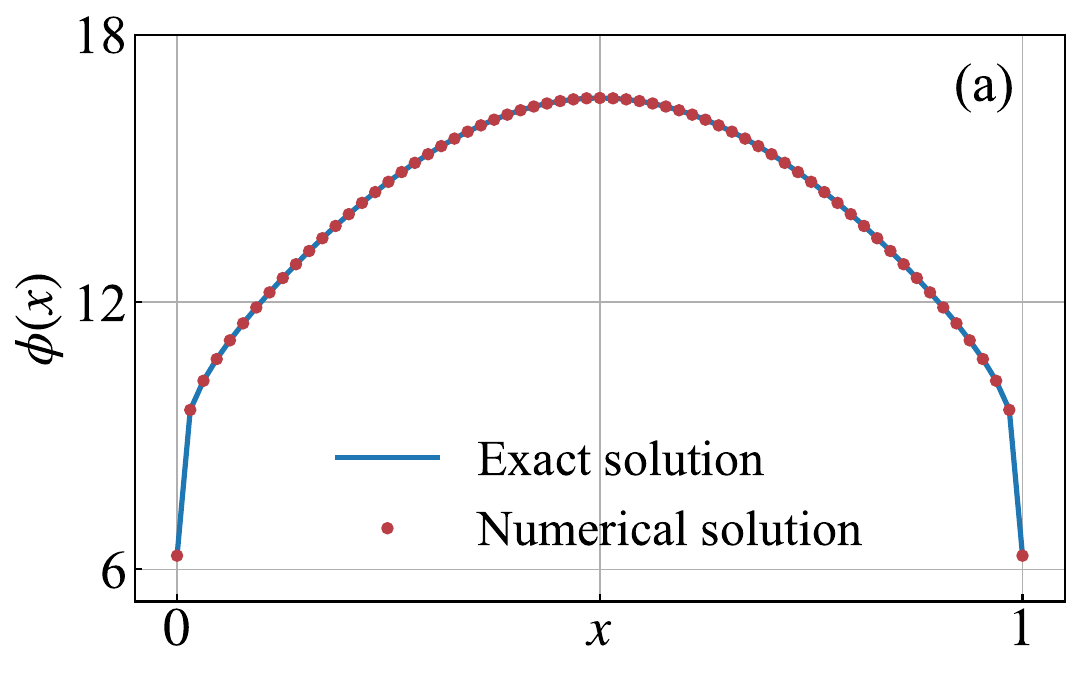,width=2.1in,height=1.4in,angle=0}
	\psfig{figure=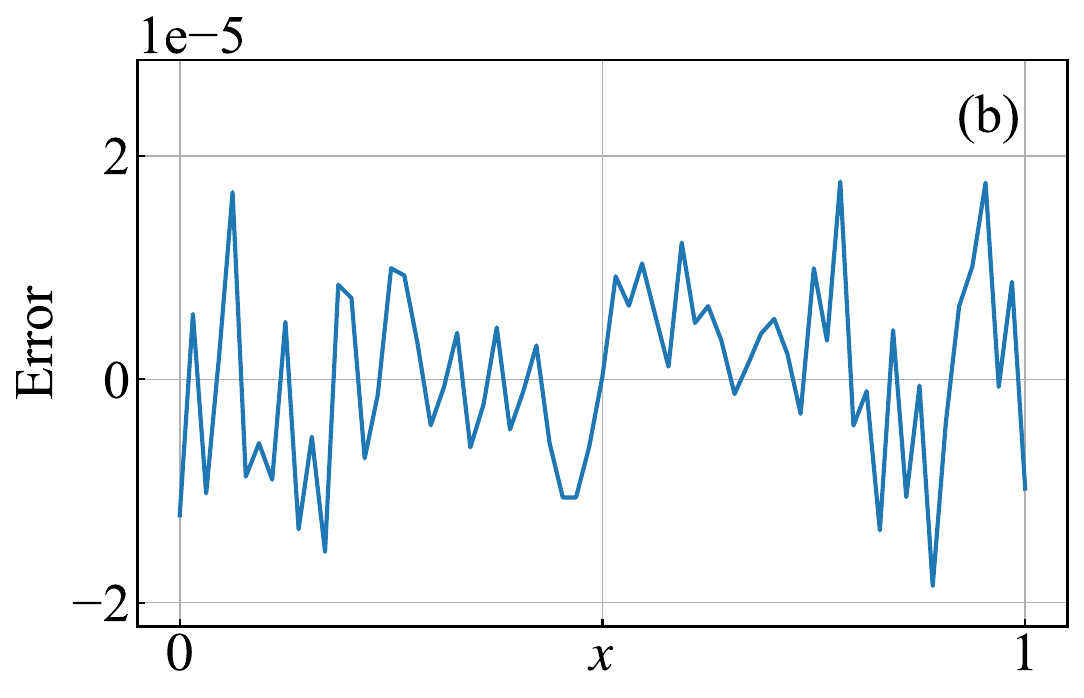,width=2.1in,height=1.4in,angle=0}
    \psfig{figure=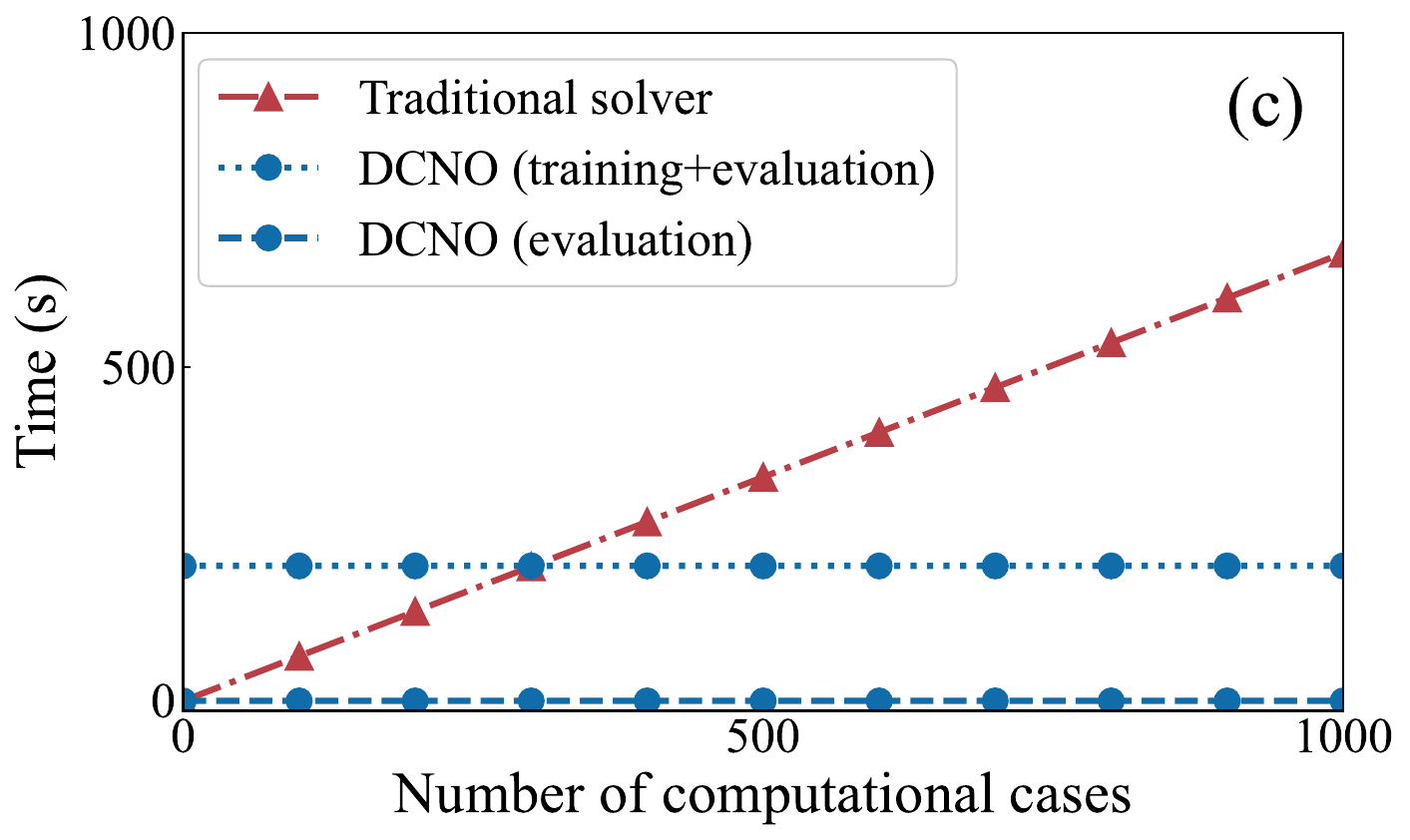,width=2.1in,height=1.4in,angle=0}

	\caption{Results of DCNO for Example \ref{Ex:Multiquadrics}: (a) Exact and numerical solutions; (b) Error of the numerical solution; (c) Computational time of the traditional solver and DCNO against the number of computational cases.}
	\label{fig:multiquadrics_solution}
\end{figure}

  \begin{table}[h!]
	\centering
	\caption{Mean relative $L^2$ error of numerical solutions from DCNO for varying $a$.}
	\label{tab:linear_dt_error}
	\begin{tabular}{c|cccc}
		\hline
		$a$ & $0.0001$ & $0.001$ & $0.001$ & $0.1$  \\ \hline
		Error & 1.20$\,e\!-\!6$ & 3.08$\,e\!-\!6$ & 2.71$\,e\!-\!6$ & 3.48$\,e\!-\!6$ \\ \hline
	\end{tabular}
\end{table}


By Example \ref{Ex:Multiquadrics}, we assess the computational efficiency of our operator learning by comparing the traditional SOE method with DCNO on a test set of 1000 samples. Both methods are evaluated on the same uniform grid with $64$ spatial points for $a=0.0001 $. 
The results in Table~\ref{tab:comp_traditional} indicate that while DCNO incurs additional training overhead, it drastically reduces the evaluation time compared to the traditional approach, achieving several orders of magnitude speedup during inference and demonstrating clear advantages for large‑scale evaluations. 
Fig.~\ref{fig:multiquadrics_solution}(c) further reports the total computational time of SOE and DCNO as the number of test cases increases. 
DCNO demonstrates a much slower growth rate, highlighting its superior computational efficiency at scale. So will DGNO if applicable.

	\begin{table}[t!]
		\centering
		\caption{Comparison between DCNO and the traditional method SOE on test set.}
		\begin{tabular}{c|cc}
			\hline
			Method  & Evaluation time (s) & Training time (s) \\
			\hline
			Traditional method   & 670.44   & ---\\
			DCNO method          & 0.021 & 202.14 \\
			\hline
		\end{tabular}
		\label{tab:comp_traditional}
	\end{table}


  
  \begin{example}[Dynamics of Schr\"{o}dinger-Poisson system]\label{EX:SPS}
  For quantum mechanics taking into account self-gravitating interaction, one considers 
  	the nonlinear Schr\"{o}dinger-Newton/Poisson equations that reads in dimensionless form \cite{penrose2014SPS,zhang2011computation}:
\begin{subequations}\label{SP system}
	\begin{align}
		&i\partial_{t}\psi(\bx,t)
		=-\frac{1}{2}\Delta\psi
		+V(\bx)\psi
		+\phi(\bx,t)\psi
		+|\psi|^{2}\psi,
		\quad
		\bx\in\Omega\subset\mathbb{R}^{2},\ t>0,
		\label{eq:S}\\
		&-\Delta\phi(\bx,t)=|\psi|^{2},
		\quad
		\bx\in\Omega,\ t>0,
		\label{eq:SP}\\
		&\psi(\bx,0)=\psi_{0}(\bx),
		\quad
		\bx\in\overline{\Omega}; \qquad 
		\psi(\bx,t)=0,\ 
		\phi(\bx,t)=0,
		\quad
		\bx\in\partial\Omega,\ t\geq0.
		\label{eq:SPbc}
	\end{align}
    \end{subequations}
    Here, $\psi$ is a complex-valued wave function, $\phi$ is the real-valued gravitational potential,  
$V(\bx)=\frac{1}{2}(x^2+y^2)$ denotes an external harmonic oscillator trapping potential.
We consider the circular computational domain
$\Omega=\{\bx\in\mathbb{R}^2:\ |\bx|<I\}$ with radius $I=8$ and solve its dynamics for $t\in[0,1]$.
  	 The initial condition is given by $\psi_0(x,y)=\frac{1}{\sqrt{2\pi}}\mathrm{e}^{-(x^2+y^2)/4}.$
  \end{example}

A popular numerical solver for the dynamics of \eqref{SP system} on a circular domain is 
the time-splitting finite element method, briefly described below as a reference.
The spatial domain is discretized by a conforming triangular mesh with $N_h$ interior nodes, where continuous piecewise linear ($P_1$) finite elements are employed.
Let $\{\vartheta_j\}_{j=1}^{N_h}$ denote the corresponding nodal basis functions.
The finite element approximation of $\psi(\cdot,t_n)$ is
given by $\psi_h(\cdot,t_n)
=
\sum_{j=1}^{N_h}\Psi_j^n\,\vartheta_j$, with $\Psi^n=(\Psi_1^n,\ldots,\Psi_{N_h}^n)^T\in\mathbb C^{N_h}$.
Given the time discretization $t_n=n\Delta t,\ n\geq0$,
the Strang splitting Crank--Nicolson finite element scheme \cite{SP_CN_FEM} reads
\begin{subequations}\label{SSFEM}
	\begin{align}
		&\left(M_h+\frac{i\Delta t}{8}K_h\right)\Psi^{*}
		=
		\left(M_h-\frac{i\Delta t}{8}K_h\right)\Psi^{n},
		\label{eq:femsplit1}
		\\[6pt]
		&\Psi^{**}
		=
		\exp\!\left[
		-i\Delta t
		\left(
		V+\Phi^{*}
		+|\Psi^{*}|^2
		\right)
		\right]
		\Psi^{*},
		\label{eq:femsplit2}
		\\[6pt]
		&\left(M_h+\frac{i\Delta t}{8}K_h\right)\Psi^{n+1}
		=
		\left(M_h-\frac{i\Delta t}{8}K_h\right)\Psi^{**}.
		\label{eq:femsplit3}
	\end{align}
\end{subequations}
Here, $M_h$ and $K_h$ denote the finite element mass and stiffness
matrices, respectively, with entries
$(M_h)_{jk}=  \langle \vartheta_j,\vartheta_k\rangle_{L^2(\Omega)},\,
(K_h)_{jk}=\langle \nabla\vartheta_j,\nabla\vartheta_k \rangle_{L^2(\Omega)}$. 
The numerical potential $\Phi^{*}\in\mathbb{R}^{N_h}$ is obtained from the finite element discretization of the Poisson equation \eqref{eq:SP}, leading to the linear system $K_h\Phi^{*}=F_h(|\Psi^{*}|^2),$
where $F_h(|\Psi^{*}|^2)$ denotes the assembled load vector.
Since the stiffness matrix $K_h$ is time-independent, its sparse LU factorization is performed only once before time stepping, while each Poisson solve requires only triangular substitutions.
Nevertheless, solving the Poisson equation repeatedly at every time step remains computationally demanding, especially for small $\Delta t$ or long-time simulations.

\begin{figure}[t!]
	\centering

    \psfig{figure=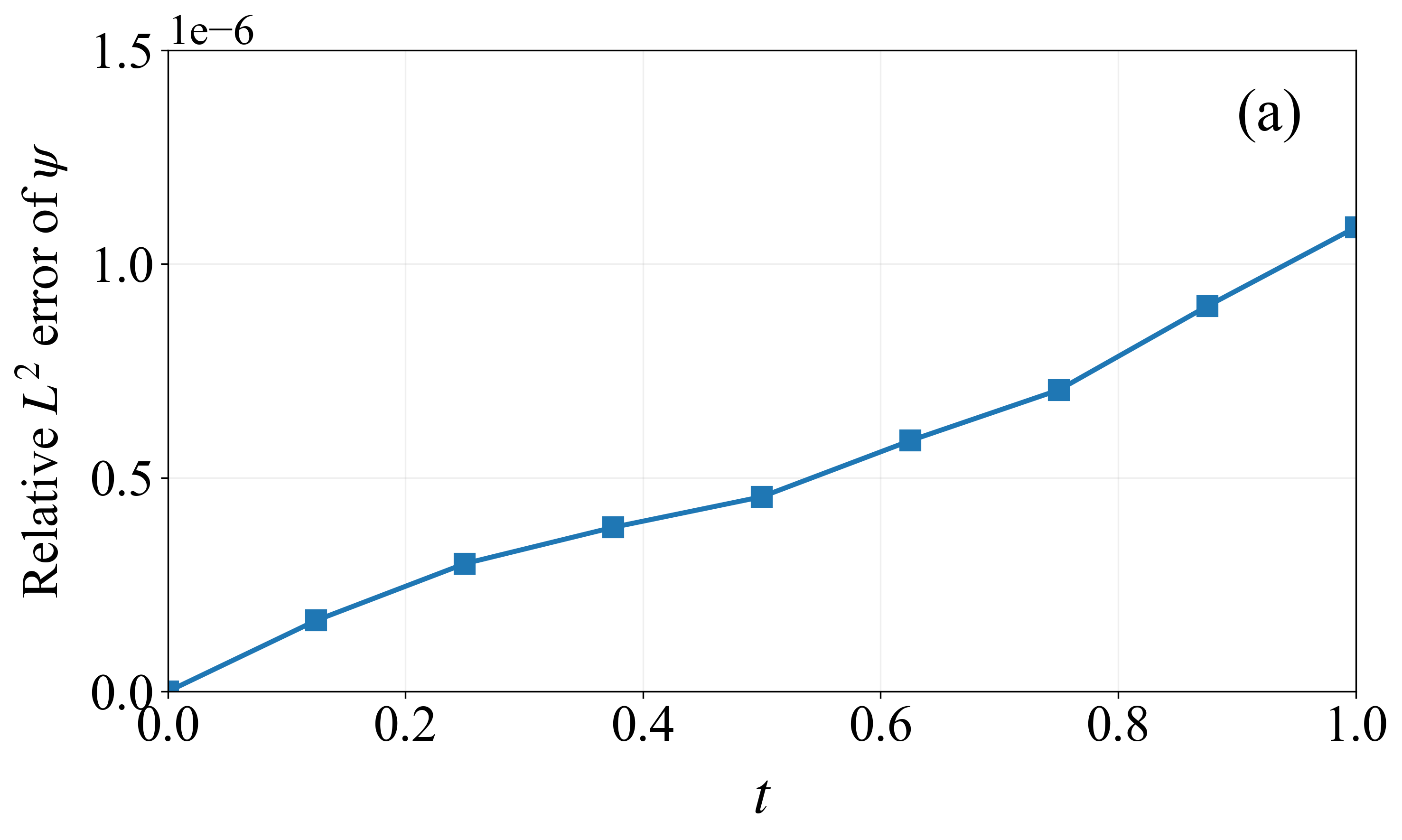,width=2.1in,height=1.4in,angle=0}
	\psfig{figure=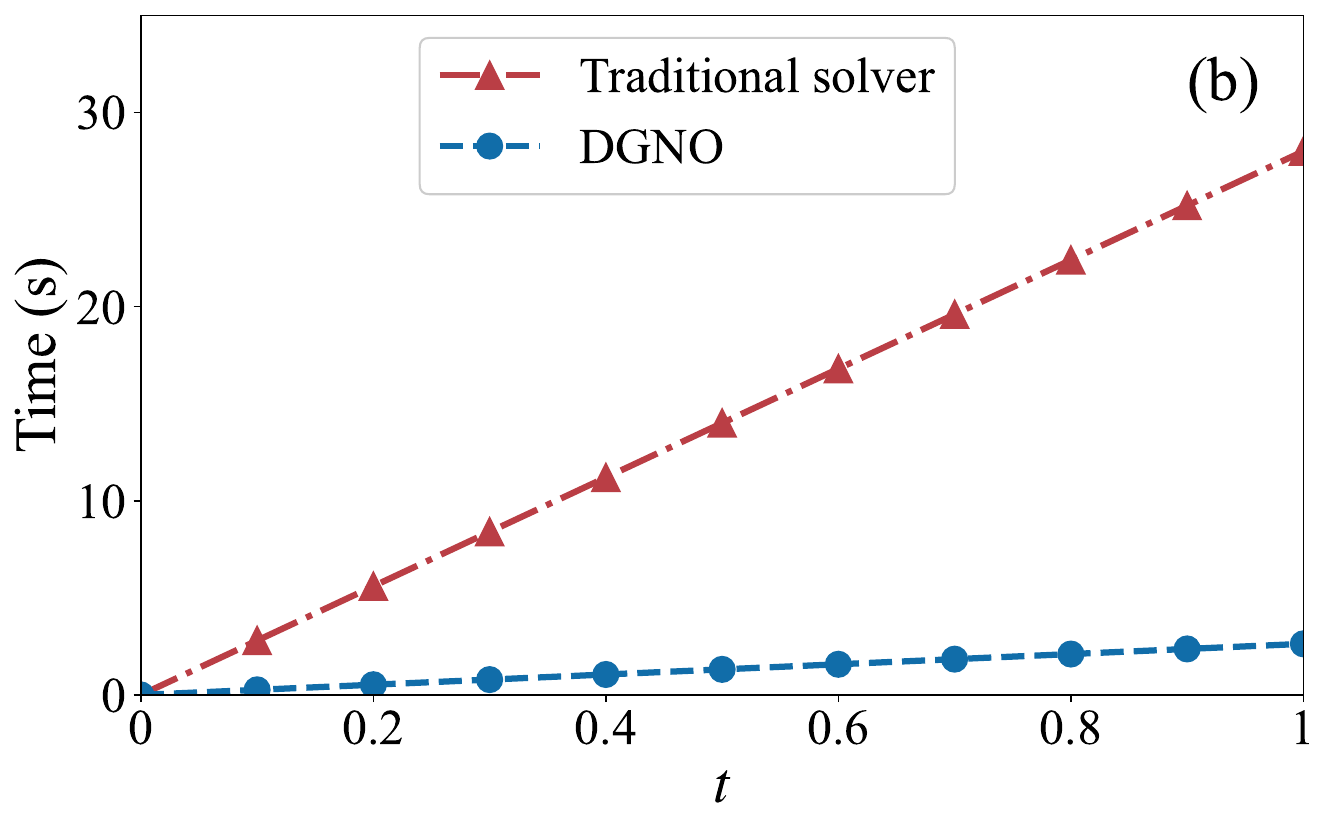,width=2.1in,height=1.4in,angle=0}
    \psfig{figure=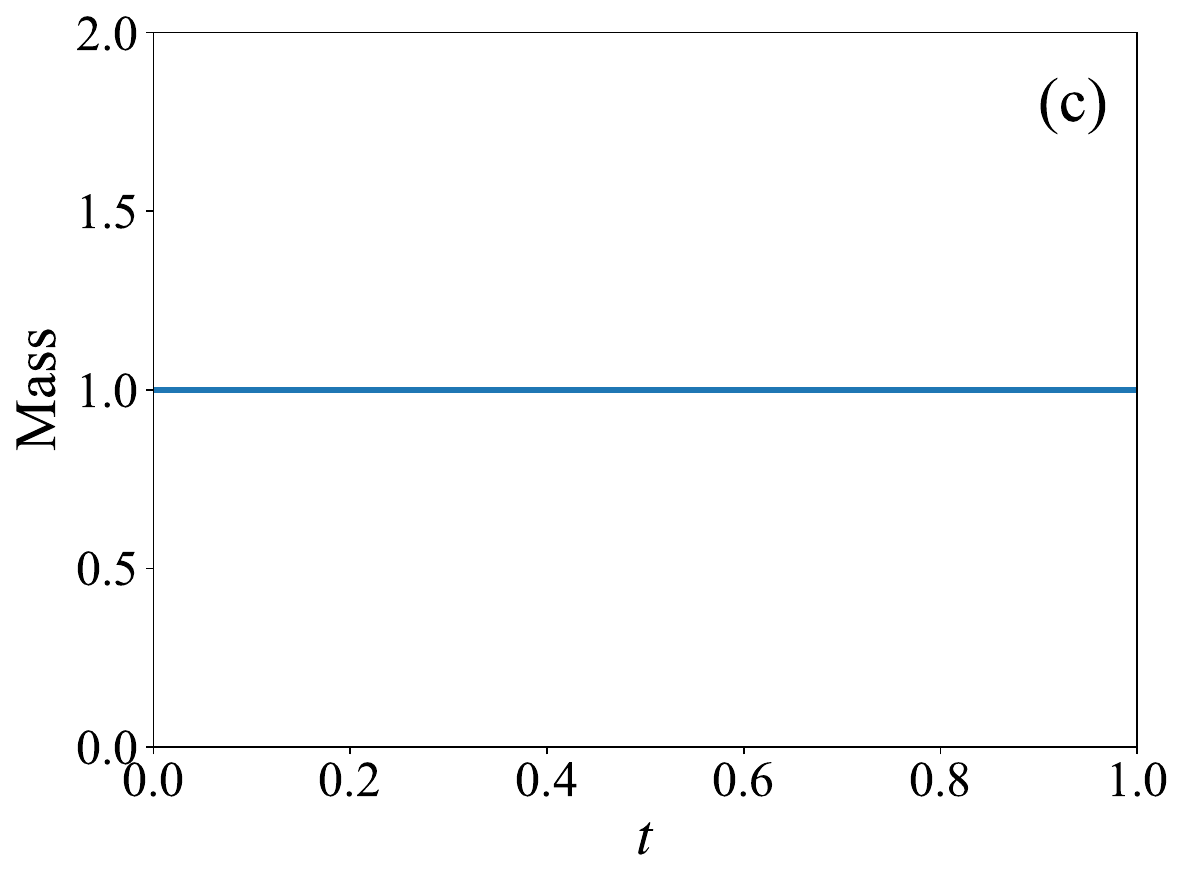,width=2.1in,height=1.4in,angle=0}
    
	\caption{
    Results of Example~\ref{EX:SPS}: (a) Relative $L^2$ error of $\psi$ over time; (b) Total evaluation time of the traditional solver and DGNO for solving the Poisson equation \eqref{eq:SP} during the simulation; (c) Evolution of the mass over time. 
    }
	\label{fig:SPS_sup_error}
\end{figure}

 \begin{figure}[t!]
 	\centering
\includegraphics[width=1.95in,height=1.8in]{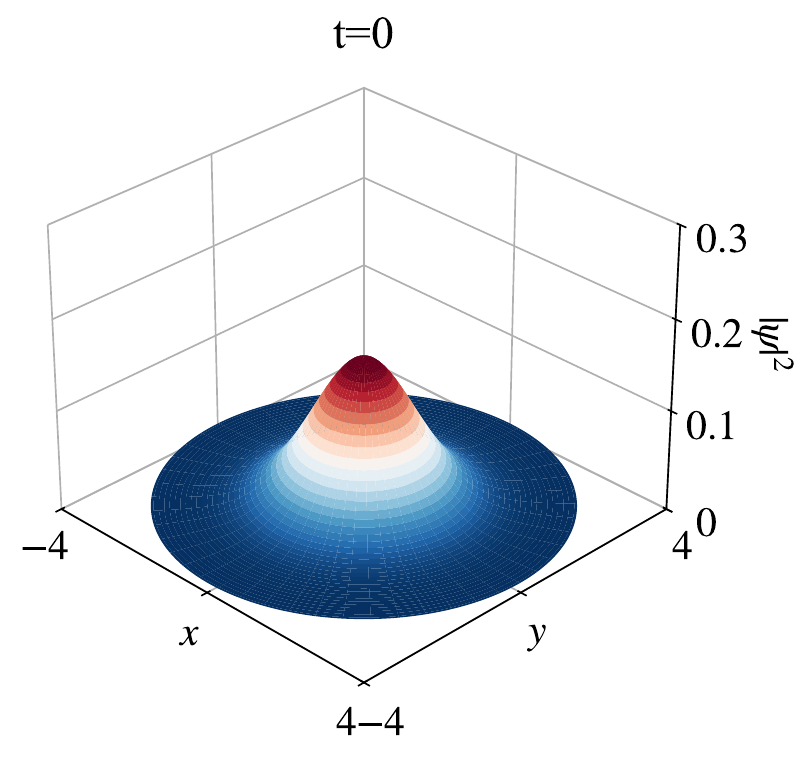}
	\includegraphics[width=1.95in,height=1.8in]{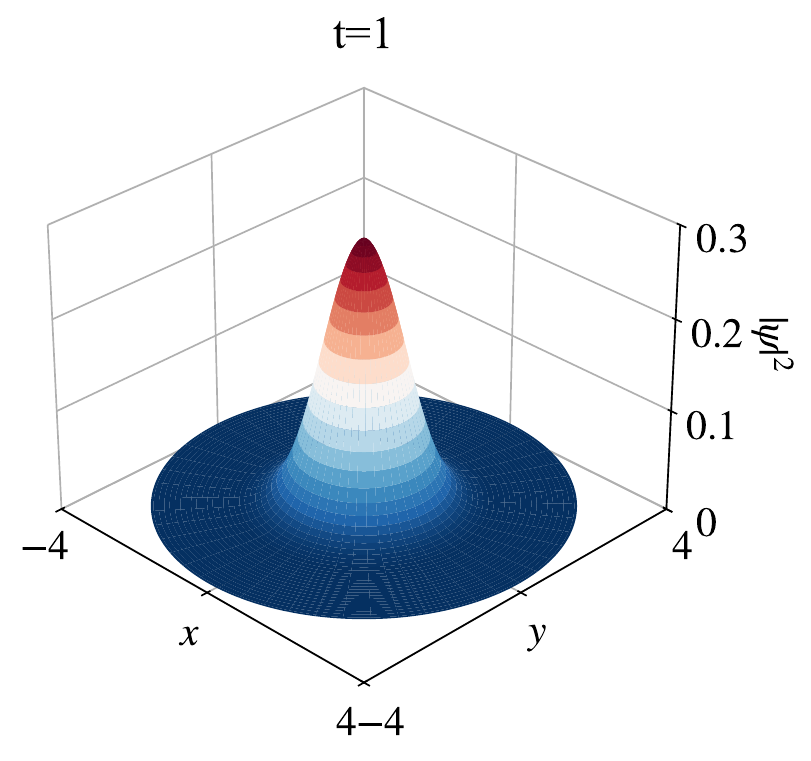}
	\includegraphics[width=1.8in,height=1.6in]{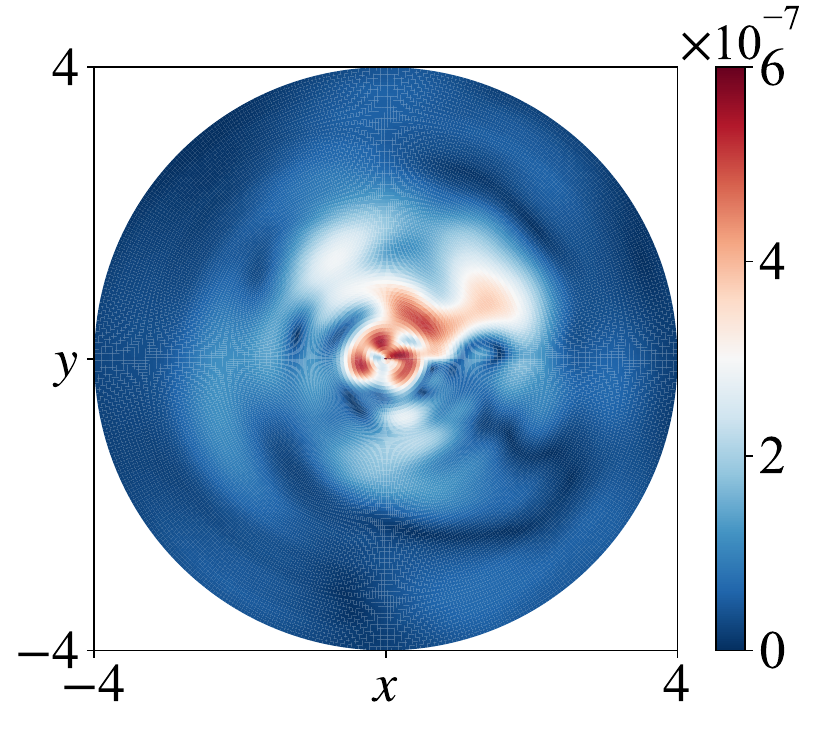}
 	\caption{$|\psi(\cdot,t=0)|^2$ (left); $|\psi(\cdot,t=1)|^2$ by DGNO (middle); Pointwise absolute error at $t=1$ (right) for Example~\ref{EX:SPS}.}
 	\label{fig:SPS_solution}
 \end{figure}

Here, we employ our DGNO method to replace the evaluation process for $\Phi^{*}$ in \eqref{SSFEM}. The training of DGNO is performed similarly to that in Example \ref{Ex:Poisson}. 
After training, only a single neural operator inference is required at each time step to efficiently obtain an approximation of the gravitational potential. We set $\Delta t=0.001, N_h=80000$ for the experiment. 
Fig.~\ref{fig:SPS_sup_error}(a) illustrates the relative $L^2$ error of the solution field $\psi$ over time. 
The linear drift in the solution error with respect to time comes from the natural accumulation of the approximation error. 
Fig.~\ref{fig:SPS_sup_error}(b) compares the accumulated evaluation time of the Poisson solver based on the conventional finite element method and DGNO throughout the simulation.
The evolution of the mass
$\int_{\Omega}|\psi(\bx,t)|^2\,\mathrm{d}\bx$ over time is presented in Fig.~\ref{fig:SPS_sup_error}(c).
Fig.~\ref{fig:SPS_solution} further presents  $|\psi|^2$ at $t=0$ and $t=1$, together with the corresponding pointwise absolute error at $t=1$. 
As can be seen, the proposed method remains stable and accurate throughout the evolution, provides an accelerated solver for \eqref{SP system}, and exhibits excellent mass conservation.


\subsection{Extension to multiple inputs}

We generalize our methods to further study the convolution problem \eqref{eq:convolution} in which both the density function and the convolution kernel vary, forming a multi-input operator learning task. 
In fact, this can be quite straightforward under our approach by adding additional branches to (\ref{DeepONet}), replacing DeepONet with the Multiple-Input Operator Network (MIONet) \cite{MIONet}.

\begin{example}[DCNO with MIONet]\label{Ex:MIO_sup}
	Consider the Gaussian kernel family
	$
	K_\alpha(\bx)
	=
	\frac{1}{2\pi\alpha^2}
	\mathrm{e}^{-\frac{|\bx|^2}{2\alpha^2}},
	\, \bx\in\Omega = (-8, 8)^2,\ \alpha>0.
	$  
	Define the convolution operator
	\[
	\mathcal{G}:(\rho,K_\alpha)\mapsto \phi, \qquad 
	\phi(\bx) = (K_\alpha * \rho)(\bx) = \int_{\Omega} K_\alpha(\bx-\tilde{\bx}) \rho(\tilde{\bx})\,d\tilde{\bx}.
	\]
    Such a convolution is repeatedly needed in constructing the scale-space of the scale-invariant feature transform algorithm \cite{Lowe2004SIFT}. 
	Here we take $\rho$ from a family of Gaussian densities
	$
	\rho(\bx)
	=
	\frac{1}{2\pi\tau^2}
	\mathrm{e}^{-{|\bx|^2}/{(2\tau^2)}}
	$ with parameter $\tau>0, 
	$
	and then the convolution admits the analytical solution
	$
	\phi(\bx)
	=
	\frac{1}{2\pi(\alpha^2+\tau^2)}
	\mathrm{e}^{-|\bx|^2/{(2\alpha^2+2\tau^2)}}
	$ as reference. 
\end{example}

To solve Example~\ref{Ex:MIO_sup}, we apply the DCNO algorithm with the basis operator network constructed using the MIONet architecture \cite{MIONet} to accommodate multiple input functions. 
For training, a dataset $\{ (\rho^{(b)},K_\alpha^{(l)}, \phi^{(b,l)}) \}_{b=1,l=1}^{N_b,N_l}$ with $N_b=N_l=20$ samples is generated, with each parameter $\alpha^{(b)}$ and $\tau^{(l)}$  is independently drawn from $\mathcal{U}[1,2]$. 
All other training settings are the same as those in Example~\ref{Ex:Poisson}, except that $N_x=N_y=64$ are used during training, while a finer grid with $N_x=N_y=100$ is adopted for error evaluation.
At each stage $s$, the practical loss function for DCNO in the multiple-input case is extended to
\begin{equation}\label{eq:loss_mionet_sup}
	\text{Loss}_s(\theta) = \frac{1}{N_b\times N_l\times N_{\bx} } \sum_{b=1}^{N_b}\sum_{l=1}^{N_l} \sum_{k=1}^{N_{\bx}}  \left| \frac{1}{\beta_{s-1}} e_{s-1}(\rho^{(b)},K_\alpha^{(l)})(\mathbf{x}_k) -\mathcal{G}_{\theta}(\rho^{(b)},K_\alpha^{(l)},\mathbf{x}_k) \right|^2.
\end{equation}

In the testing phase, 200 samples disjoint from the training set are randomly selected. 
 Fig.~\ref{fig:mio_unsupervised_error}(a) illustrates a clear decrease in the test error as the number of basis operators increases, and ultimately reaches an accuracy of $1.40 \times 10^{-5}$.
 Notably, without the progressive construction of basis operators employed in DCNO, the accuracy at stage 1 corresponds to that of the standard MIONet, which is only $4.84\times10^{-2}$. This demonstrates the significant advantage of DCNO in improving the accuracy of multi-input operator learning.
Fig.~\ref{fig:mionet_supervised_solution} presents the numerical solutions on the test set for different combinations of density functions and convolution kernels, together with the corresponding pointwise error distributions. 
These results demonstrate that DCNO maintains strong performance in the multiple-input setting for generalizing simultaneously over the density and the convolution kernel.

\begin{figure}[h!]
	\centering
	\psfig{figure=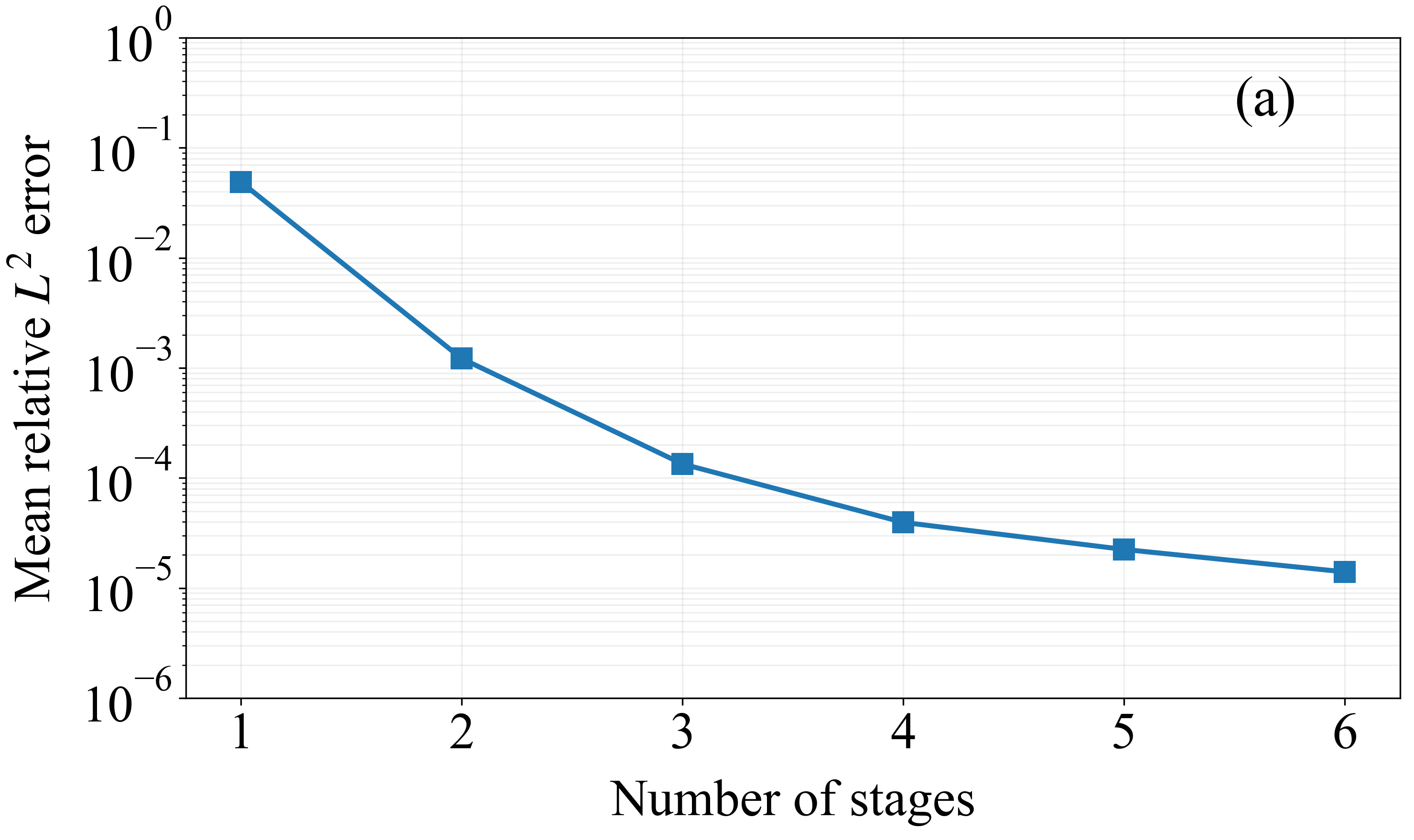,width=3in,height=1.8in,angle=0}
	\psfig{figure=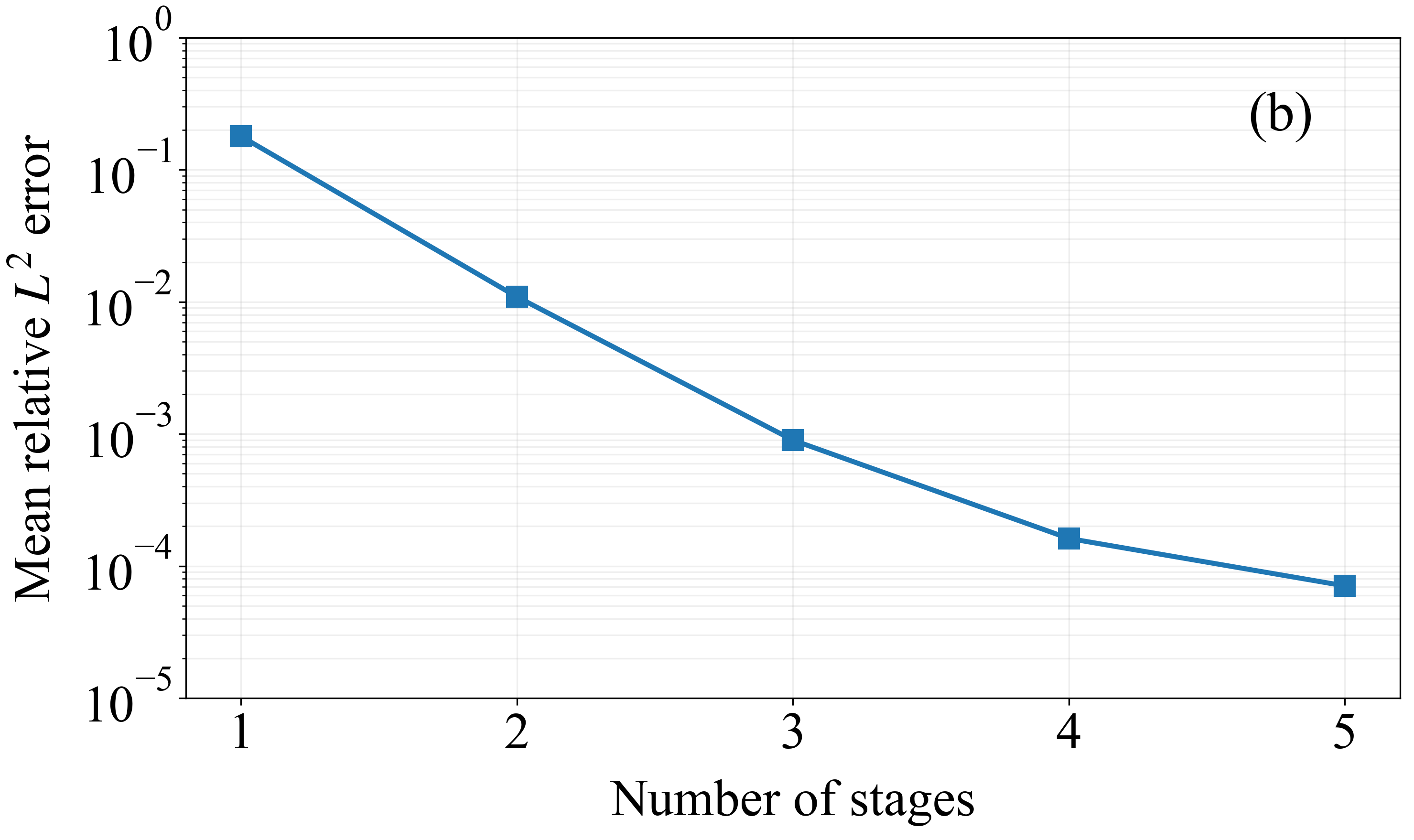,width=3in,height=1.8in,angle=0}
	\caption{ Mean relative $L^2$ error with respect to the number of stages: (a) Example \ref{Ex:MIO_sup}; (b) Example \ref{Ex:MIO_unsup} .}
	\label{fig:mio_unsupervised_error}
\end{figure}

\begin{figure}[t!]

	\centerline{
		\psfig{figure=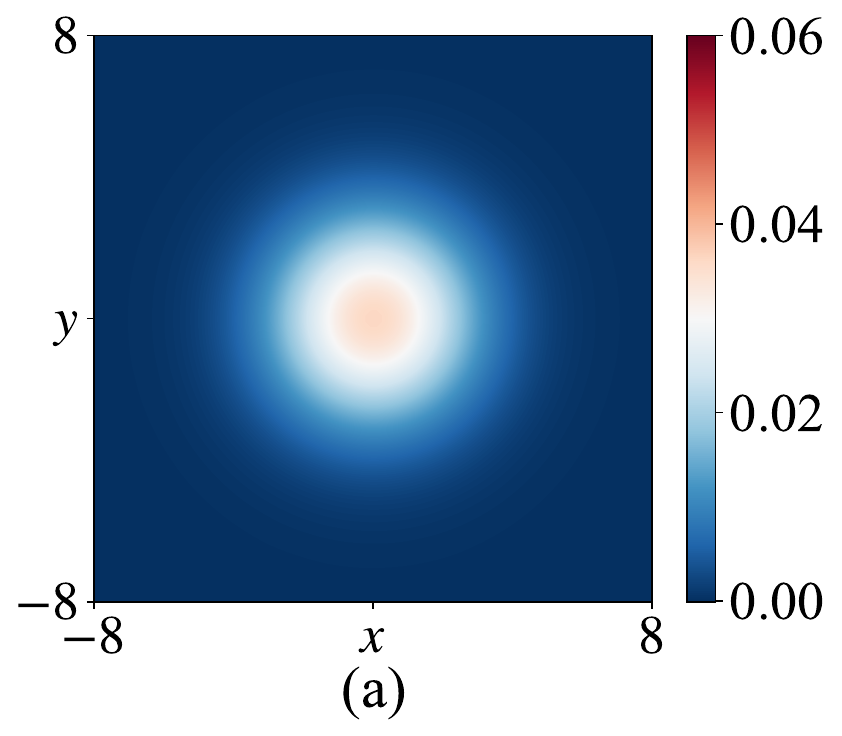,width=1.9in,height=1.5in,angle=0}
		\hspace{0in}
		\psfig{figure=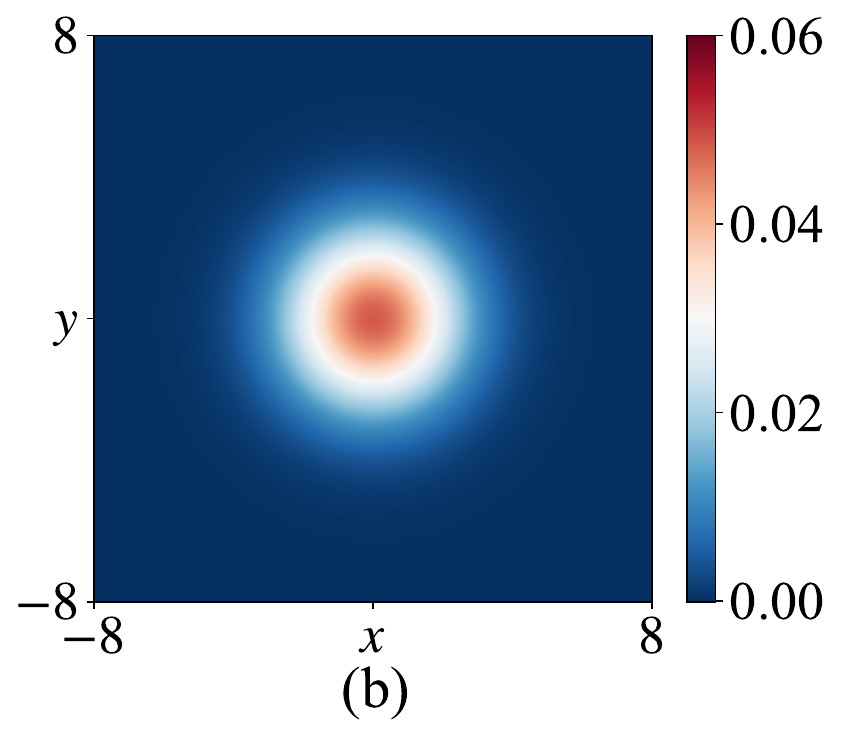,width=1.9in,height=1.5in,angle=0}
		\hspace{0in}
		\psfig{figure=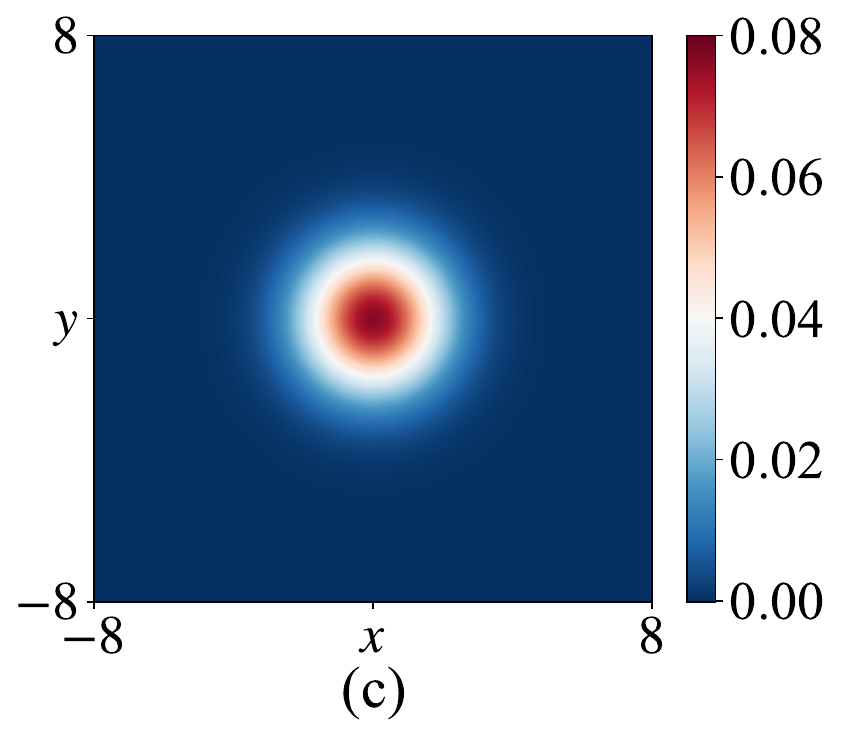,width=1.9in,height=1.5in,angle=0}
		
	}
	\centerline{
		\psfig{figure=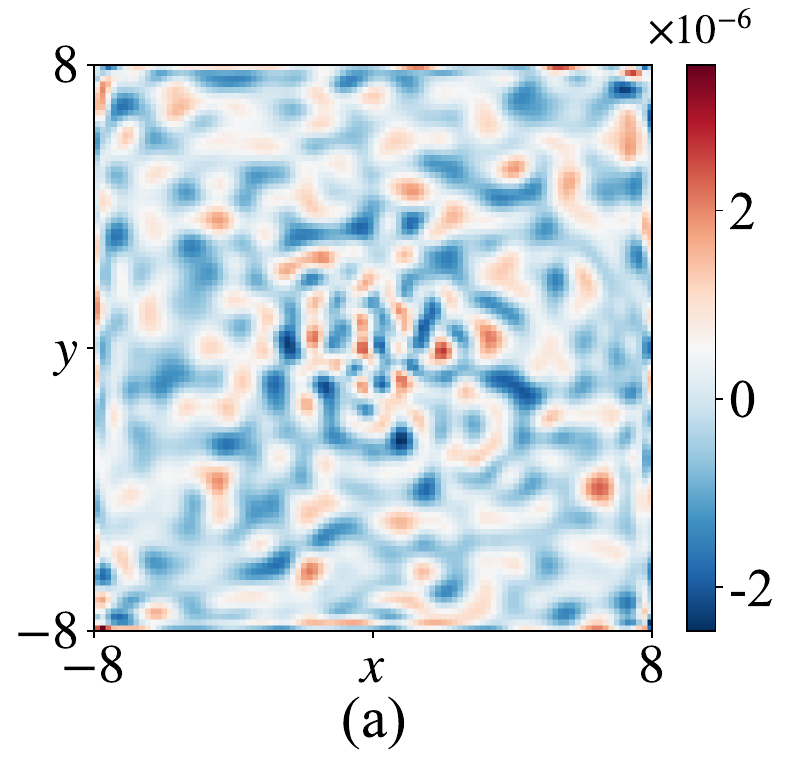,width=1.9in,height=1.5in,angle=0}
		\hspace{0in}
		\psfig{figure=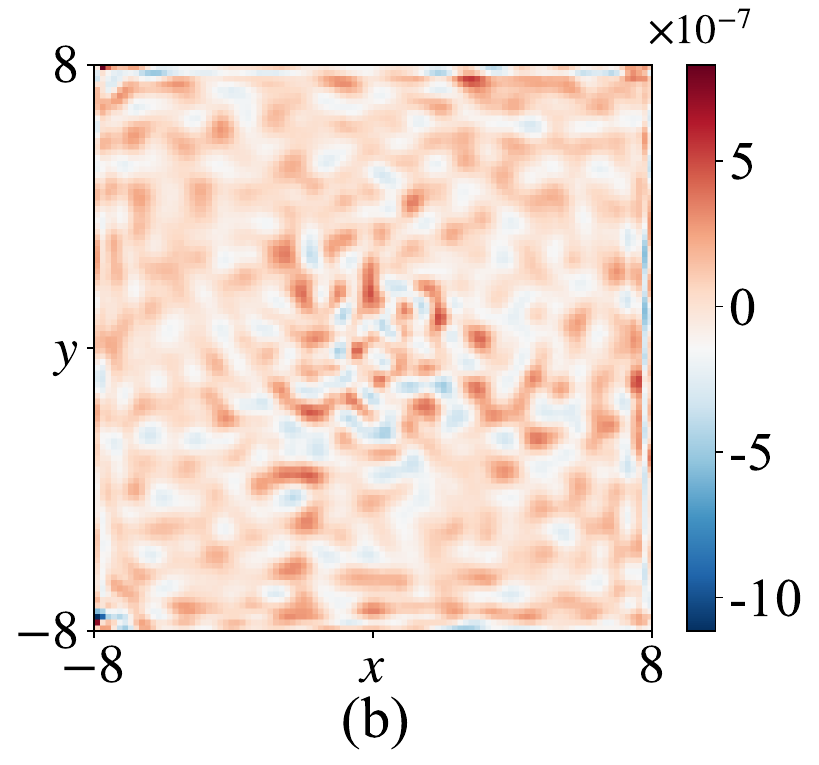,width=1.9in,height=1.5in,angle=0}
		\hspace{0in}
		\psfig{figure=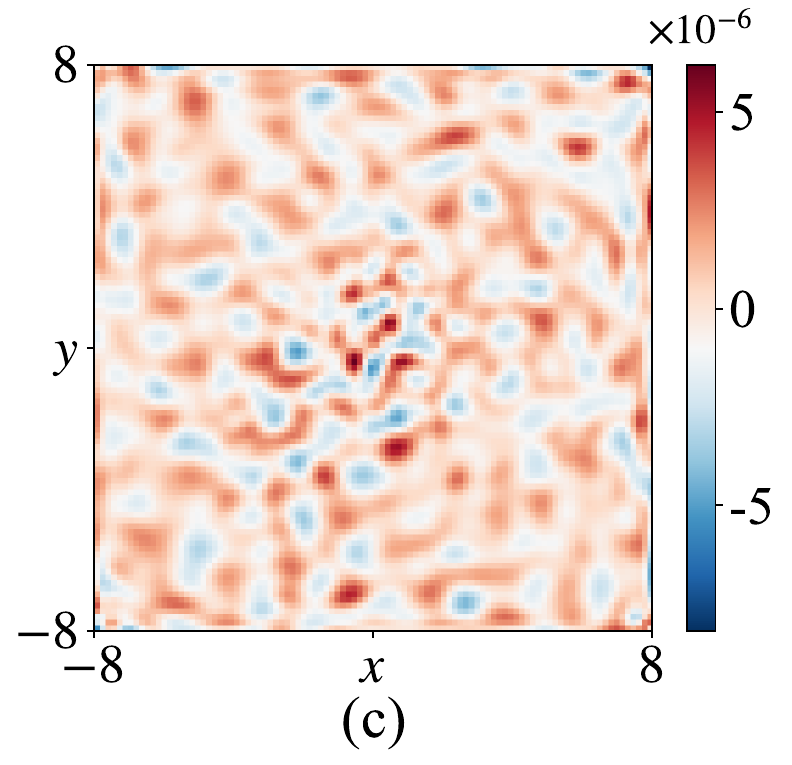,width=1.9in,height=1.5in,angle=0}
		
	}

	\caption{ Numerical solution (1st row) and pointwise error (2nd row) of DCNO for Example~\ref{Ex:MIO_sup} on three test samples:  \\ (a) $\alpha=\tau=2$; (b) $\alpha=\tau=1.5$; (c) $\alpha=\tau=1$.}
	\label{fig:mionet_supervised_solution}
\end{figure}

\begin{example}[DGNO with MIONet]\label{Ex:MIO_unsup}
	We turn to the unsupervised scenario by considering the following general elliptic boundary value problem
	\begin{equation}\label{eq:elliptic}
	\begin{cases}
		-\nabla \cdot \bigl(\kappa(\mathbf{x})\nabla \phi(\mathbf{x})\bigr) = \rho(\mathbf{x}), & \mathbf{x} \in \Omega, \\[4pt]
		\phi(\mathbf{x}) = 0, & \mathbf{x} \in \partial\Omega,
	\end{cases}
	\end{equation}
	where the computational domain is set as $\Omega = (0,1)^2$.
\end{example}
\begin{figure}[t!]
	\centerline{
		\psfig{figure=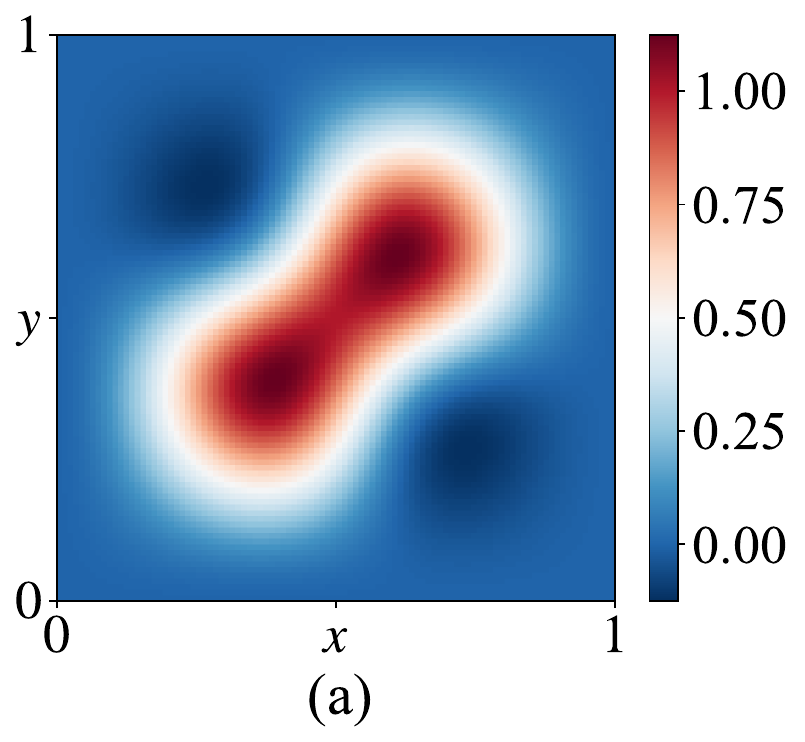,width=1.9in,height=1.5in,angle=0}
		\hspace{0in}
		\psfig{figure=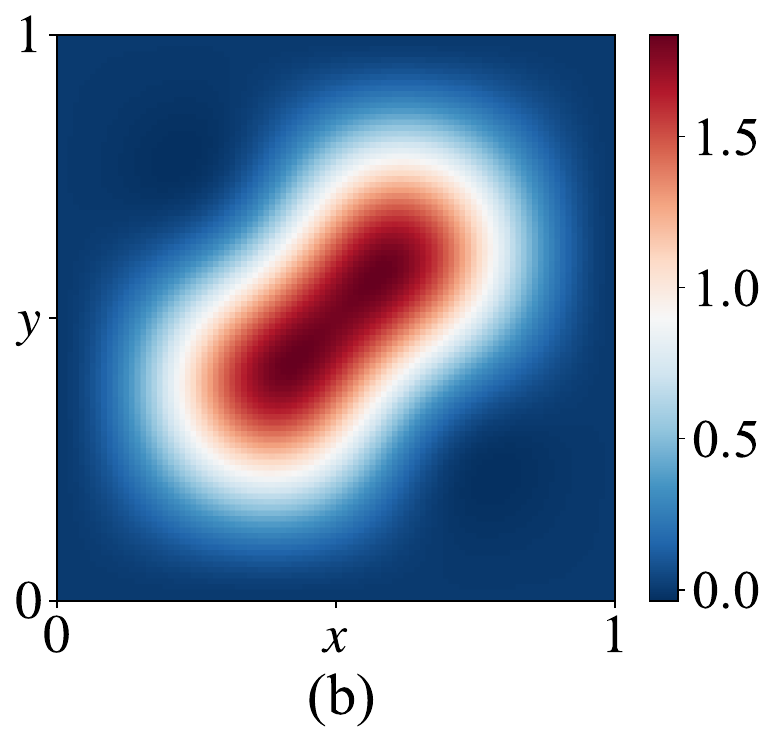,width=1.9in,height=1.5in,angle=0}
		\hspace{0in}
		\psfig{figure=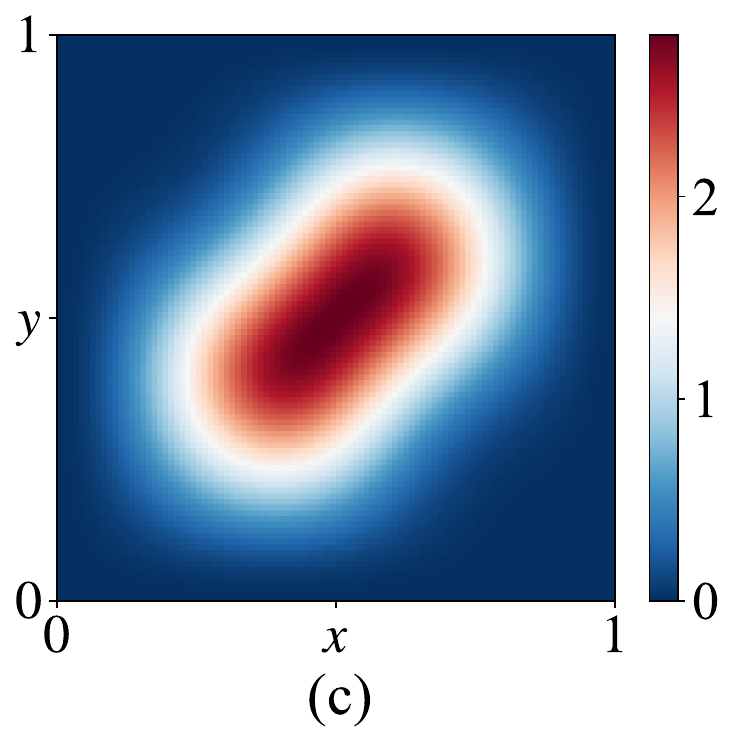,width=1.9in,height=1.5in,angle=0}
		
	}
    
    \centerline{
		\psfig{figure=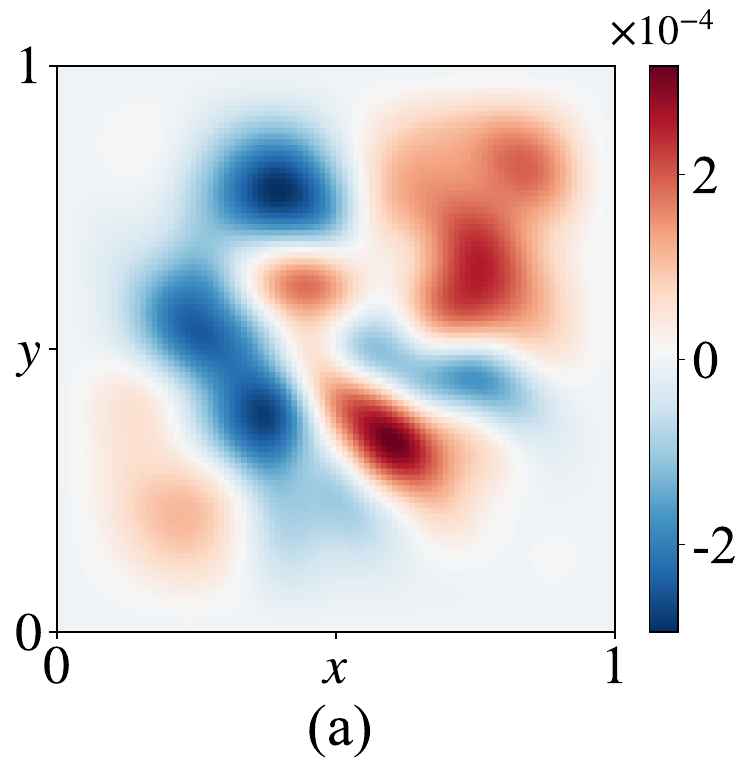,width=1.9in,height=1.5in,angle=0}
		\hspace{0in}
		\psfig{figure=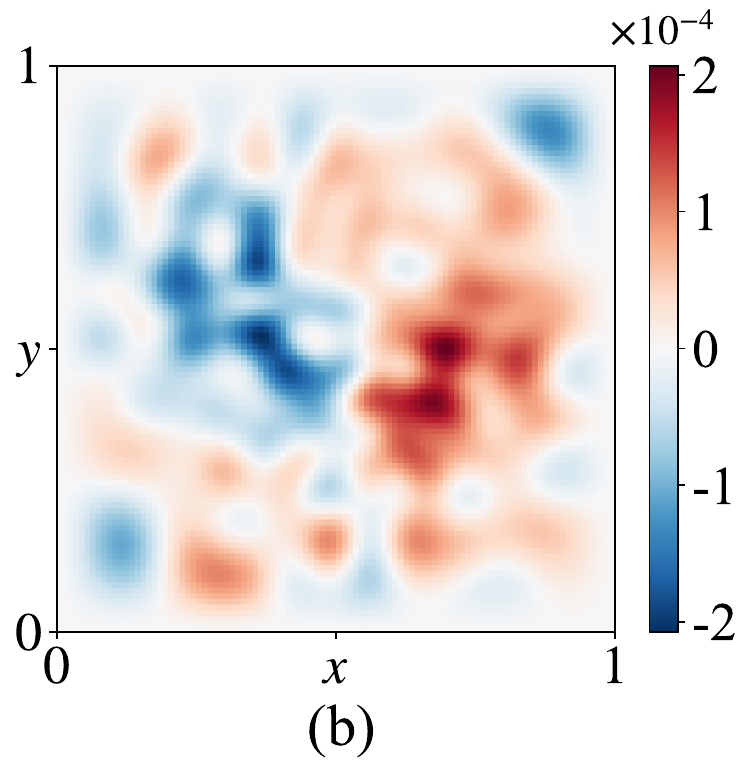,width=1.9in,height=1.5in,angle=0}
		\hspace{0in}
		\psfig{figure=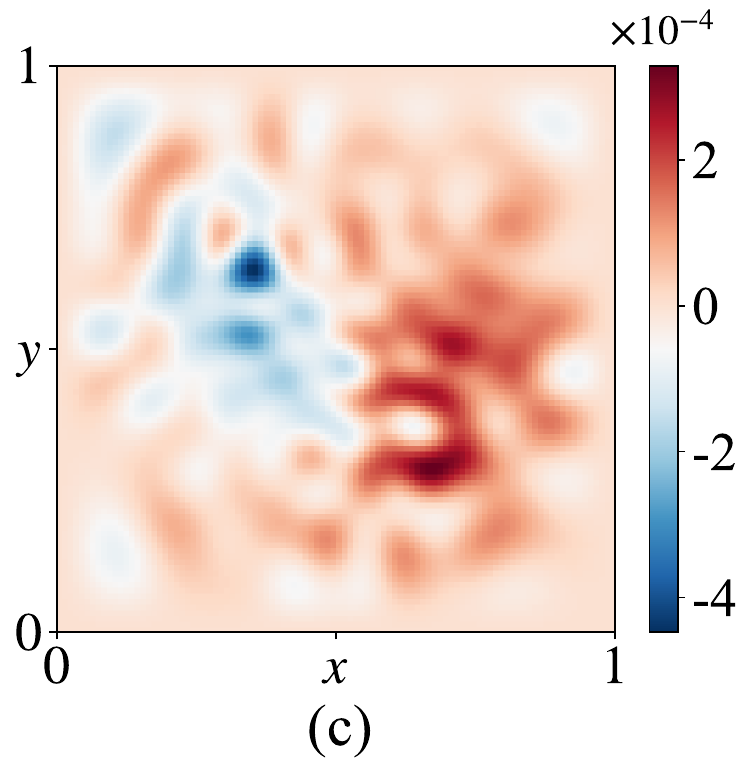,width=1.9in,height=1.5in,angle=0}
		
	}
	\caption{
    Numerical solution (1st row) and pointwise error (2nd row) of DGNO for Example~\ref{Ex:MIO_unsup} on three test samples:  \\ (a) $\alpha=0.5,\,\gamma=2$; (b) $\alpha=0.6,\,\gamma=3$; (c) $\alpha=0.7,\,\gamma=4$.   
    }
	\label{fig:mionet_unsupervised_solution}
\end{figure}

In \eqref{eq:elliptic}, different choices of $\kappa(\bx)$ correspond to different integral kernels in the convolution \eqref{eq:convolution}. 
Our goal is to learn the mapping $\mathcal{G}: (\rho, \kappa) \mapsto \phi$ with $\rho$ and $\kappa$ jointly serving as inputs, which constitutes a multiple-input operator learning problem. The differential formulation in which the problem is posed enables an unsupervised training strategy, and
we apply the DGNO algorithm with two branch networks to input $\rho$ and $\kappa$. To obtain reference solutions for tests, 
we set
$
\kappa(\mathbf{x}) = 1 + \alpha \sin(2\pi x)\sin(2\pi y)
$
with $|\alpha|<1$ 
and  
\[
\begin{aligned}
	\rho(\mathbf{x}) &=
	2\pi^{2}\alpha\,
	\kappa(\mathbf{x})\Bigl[\gamma\sin(\pi x)\sin(\pi y)+4\sin(2\pi x)\sin(2\pi y)\Bigr] 
	- 2\pi^{2}\alpha^{2}\Bigl[
	\gamma\bigl(\cos(2\pi x)\sin(2\pi y)\cos(\pi x)\sin(\pi y) \\
	&\quad + \sin(2\pi x)\cos(2\pi y)\sin(\pi x)\cos(\pi y)\bigr) 
	+ 2\bigl(\cos^{2}(2\pi x)\sin^{2}(2\pi y)+\sin^{2}(2\pi x)\cos^{2}(2\pi y)\bigr)
	\Bigr],
\end{aligned}
\]
where 
$
\phi(\mathbf{x}) = \alpha\gamma\sin(\pi x)\sin(\pi y)+\alpha\sin(2\pi x)\sin(2\pi y)
$ is the exact solution to \eqref{eq:elliptic}. 
For the training data, a set $\{ (\kappa^{(b)}, \rho^{(l)}) \}_{b=1,l=1}^{N_b,N_l}$ with $N_b = N_l = 20$ is generated, resulting in a total of 400 samples. The parameters $\alpha^{(b)}$ and $\tau^{(l)}$ are independently sampled from $\mathcal{U}(0.5,1)$ and $\mathcal{U}[2,4]$, respectively.
All other training settings are the same as those in Example~\ref{Ex:Poisson}, except that $N_x=N_y=110$ are used during training, while a finer grid with $N_x=N_y=150$ is adopted for error evaluation.
In view of \eqref{eq:loss_deeponet_un}, the extended practical loss for DGNO in the multiple-input case reads 
\begin{equation}\label{eq:loss_mionet_un}
	\text{Loss}_s(\theta) =\frac{-1}{N_b\times N_l } \sum_{b=1}^{N_b} \sum_{l=1}^{N_l}  \frac{R_{s-1}(\kappa^{(b)},\rho^{(l)},\mathcal{G}_{\theta_s}(\kappa^{(b)},\rho^{(l)}))}{\|\mathcal{G}_{\theta_s}(\kappa^{(b)},\rho^{(l)})\|_X}.
\end{equation}

For testing, 200 samples that are disjoint from the training set are randomly selected to form the test set. 
As shown in Fig.~\ref{fig:mio_unsupervised_error}(b), the mean relative $L^2$ error on the test set decreases progressively with the stage index $s$, and ultimately reaches an accuracy of $7.08 \times 10^{-5}$.
We also select some representative samples from the test set to visually assess the approximation capability of the trained basis operators in
Fig. \ref{fig:mionet_unsupervised_solution}.
These results demonstrate that DGNO as the unsupervised method also performs well for the multiple-input case.

\section{Conclusion}\label{sec:con}
In this work, we have introduced two novel and general multi-stage neural operator learning frameworks, the Deep Collocation Neural Operator (DCNO) and the Deep Galerkin Neural Operator (DGNO), designed to significantly enhance the accuracy of operator approximation, particularly applied for convolutions. DCNO offers a supervised approach that progressively builds a richer approximation space by training basis operators on the residuals of the target operator, leveraging available input-output data. Complementing this, DGNO provides an unsupervised framework that learns operator bases by minimizing the residual of a governing PDE in its weak form, thus eliminating the need for labeled data. 

We have demonstrated the efficacy of these methods through extensive numerical experiments. Both DCNO and DGNO achieved high accuracy in learning convolution operators, often approaching machine precision of single-precision floating-point arithmetic. Notably, for convolution problems lacking a straightforward PDE formulation, DCNO proved to be a robust supervised solution. When a PDE formulation is available, DGNO offers an efficient unsupervised alternative. Furthermore, we have extended these frameworks to handle multi-input operator learning scenarios, encompassing variations in both the density and the kernel for convolutions. 

		\appendix
		\section*{Acknowledgements}
	Z. Wen and X. Zhao are supported by 
	National Key Research and Development Program of
	China, National MCF Energy R$\&$D Program (No. 2024YFE03240400), National Natural Science
	Foundation of China (Nos. 42450275, 12271413).
    Z. Mao is supported by 
    the Zhejiang Provincial Natural Science Foundation of China under Grant No. LR26A010001 and the National Natural Science Foundation of China under Grant No. 12531015.
    Y. Zhang is supported by  the National Natural Science Foundation of China (No. 12271400), the National Key R$\&$D Program of China (No. 2024YFA1012803) and basic research fund of Tianjin University under Grant 2025XJ21-0010.

		\bibliographystyle{siamplain}
		\bibliography{refs}
		
	\end{document}